\def\final{}
\documentclass[11pt]{airesearch}

\usepackage[T1]{fontenc}
\usepackage{XCharter}

\usetikzlibrary{fit}

\ifdefined\final
  \usepackage[disable]{todonotes}
\else
  \usepackage[textsize=tiny]{todonotes}
\fi

\newcommand{\R}{\mathbb{R}}
\newcommand{\Norm}{\operatorname{RMSNorm}}
\newcommand{\Merge}{\operatorname{Merge}}
\newcommand{\Attn}{\operatorname{Attn}}
\newcommand{\FFN}{\operatorname{FFN}}
\newcommand{\softmax}{\operatorname{softmax}}
\newcommand{\sg}{\operatorname{stopgrad}}

\hypersetup{pdftitle={Recurrent Looped Transformer},pdfauthor={Yifan Zhang, Jichen Feng, Shihan Qin, Sun-Yuan Kung, Mengdi Wang}}
\usepackage{titling}
\posttitle{\par\end{center}\vskip .2em}
\postauthor{\end{tabular}\par\end{center}\vskip -.6em}
\postdate{\par\end{center}\vskip -.5em}
\title{\huge\bfseries Recurrent Looped Transformer}
\author{\textbf{Yifan Zhang$^1$~~~~Jichen Feng$^2$~~~~Shihan Qin$^2$}\\[0.5mm]\textbf{Sun-Yuan Kung$^1$~~~~Mengdi Wang$^1$}\\[1.5mm]$^1$Princeton University~~~~$^2$University of Pennsylvania}
\date{}

\begin{document}
\maketitle

\begin{abstract}
State tracking requires an update at every input, but the depth a Transformer applies to each token is fixed regardless of sequence length.
We introduce the Recurrent Looped Transformer (RLT), which splits its layers between a parallel causal encoder and a recurrent decoder.
At each token, the decoder merges the encoder output with the previous token's final decoder state, so the computation path grows with sequence length at a fixed per-token cost.
On six algorithmic tasks, we compare five splits of eight layers with an eight-layer Transformer over three seeds.
Trained on at most 40 bits, two RLT splits generalize parity to 256 bits with 100\% accuracy in every seed, while the Transformer stays at chance.
On swap-based $S_5$ permutation tracking at eight times the training length, RLT reaches 97\% final-state accuracy versus under 1\% for the Transformer, and accuracy increases with decoder depth.
On modular arithmetic beyond the training lengths, RLT reaches up to 93\% versus 33\% for the Transformer.
Ablations show that these gains depend on the feedback: removing it drops parity and swap-based $S_5$ to chance at every split.
Updating the feedback once per four-token chunk lets known tokens in a chunk run in parallel and keeps 64-bit parity at 99\%, while permutation tracking depends on per-token feedback: chunking lowers length-64 swap-based $S_5$ from 100\% to 20\%.
\end{abstract}
\projectpage{https://github.com/yifanzhang-pro/recurrent-looped-tranformer}
\vspace{1ex}

% Keep Figure 1 on the title page; begin the introduction on page two.
\begin{center}
\begin{minipage}{\textwidth}
\captionsetup{type=figure}
\centering
% Native TikZ: local colors/styles do not alter the document theme.
\begingroup
\definecolor{rltInk}{HTML}{172B4D}
\definecolor{rltBlue}{HTML}{007AFF}
\definecolor{rltOrange}{HTML}{E87500}
\definecolor{rltGray}{HTML}{667085}
\definecolor{rltLine}{HTML}{DCE3EB}
\resizebox{\linewidth}{!}{%
\begin{tikzpicture}[x=1cm,y=1cm,>=Latex,font=\sffamily\small,
 b/.style={draw=rltBlue!65,fill=white,rounded corners=2pt,minimum height=.49cm,inner sep=4pt,text=rltInk,align=center},
 d/.style={draw=rltOrange!75,fill=rltOrange!7,rounded corners=3pt,minimum height=.7cm,minimum width=1.4cm,inner sep=4pt,text=rltInk,align=center},
 flow/.style={->,line width=.75pt,draw=rltBlue},
 state/.style={->,line width=1.8pt,draw=rltOrange},
 muted/.style={text=rltGray,font=\sffamily\scriptsize}]
\path[use as bounding box] (0,0) rectangle (16,6.5);
\fill[rounded corners=6pt,rltBlue!4] (0,.56) rectangle (5.75,6.5);
\fill[rounded corners=6pt,rltOrange!3] (5.95,.56) rectangle (16,6.5);
\node[anchor=west,font=\sffamily\bfseries\large,text=rltBlue] at (.25,6.1) {Causal encoder};
\node[anchor=west,muted] at (.25,5.65) {Parallel causal encoding of observed tokens};
\node[anchor=west,font=\sffamily\bfseries\large,text=rltOrange] at (6.2,6.1) {Recurrent decoder};
\node[anchor=west,muted] at (6.2,5.65) {Prompt and response use the same transition};
% Causal encoder computes features independently of the decoder state.
\foreach \xx/\lab in {.65/{x_1},1.75/{x_2},4.95/{x_T}} {
 \node[b,minimum width=.72cm] (p\xx) at (\xx,4.95) {$\lab$};
 \draw[flow] (\xx,4.67)--(\xx,4.3);
}
\node[text=rltGray] at (3.25,4.95) {$\cdots$};
\node[b,fill=rltBlue!9,minimum width=5.05cm,minimum height=.73cm] (enc) at (2.85,3.92) {\textbf{Causal encoder}\quad $E_\theta$};
\node[muted] at (2.85,3.29) {$e_1,\ e_2,\ \ldots,\ e_T$};
\draw[flow] (2.85,3.02)--(2.85,2.64);
\node[b,minimum width=5.05cm,minimum height=.76cm] (mem) at (2.85,2.22) {\textbf{Encoder KV memory}\quad $M^E_{\le T}$};
\node[muted,align=center] at (2.85,1.18) {Encoder memory is available\\before decoder replay};
% Last two prompt updates followed by one response update; earlier states omitted.
\foreach \xx/\t/\out [count=\i] in {7.35/{T-1}/T,10.55/T/{T+1},13.75/{T+1}/{T+2}} {
 \node[b,minimum width=1.16cm] (e\i) at (\xx,4.55) {$e_{\t}$};
 \node[muted] at (\xx,4.98) {\ifnum\i=1 prompt\else\ifnum\i=2 last prompt token\else response token\fi\fi};
 \node[circle,draw=rltOrange,line width=.8pt,fill=white,minimum size=.42cm,inner sep=0pt,text=rltOrange] (m\i) at (\xx,3.66) {$\oplus$};
 \node[d] (d\i) at (\xx,2.65) {$D_\phi$\\[-1pt]{\scriptsize $L_D$ layers}};
 \draw[flow] (e\i)--(m\i);
 \draw[state] (m\i)--(d\i);
 \node[muted] at (\xx,1.79) {predict $x_{\out}$};
 \draw[->,draw=rltGray!70] (d\i.south)--(\xx,2.02);
}
\draw[state] (6.15,3.66)--node[above,text=rltOrange,font=\small]{$H_{T-2}$}(m1.west);
% Route recurrent-state arrows above the decoder blocks.
\draw[state] (d1.east)--(8.75,2.65)--(8.75,3.66)--node[above,text=rltOrange,font=\small]{$H_{T-1}$}(m2.west);
\draw[state] (d2.east)--(11.95,2.65)--(11.95,3.66)--node[above,text=rltOrange,font=\small]{$H_T$}(m3.west);
\draw[state] (d3.east)--(15.25,2.65)--(15.25,3.66)--(15.85,3.66);
\node[text=rltOrange] at (15.65,4.02) {$\cdots$};
% A memory bus, prefix-restricted separately at every decoder query.
\draw[draw=rltBlue,line width=.7pt] (mem.south)--(2.85,1.52)--(5.82,1.52)--(5.82,.82)--(14.75,.82);
\foreach \xx/\side [count=\i] in {7.35/6.42,10.55/9.62,13.75/12.82} {
 \draw[flow] (\side,.82)--(\side,2.3)--(d\i.south west);
}
\node[fill=rltOrange!3,text=rltBlue,font=\scriptsize,inner sep=2pt] at (10.9,.82) {Read encoder KV up to the current token};
\node[anchor=west,muted] at (.18,.18) {\textcolor{rltBlue}{\rule{.35cm}{1.4pt}}\ Context KV};
\node[anchor=west,muted] at (2.6,.18) {\textcolor{rltOrange}{\rule{.35cm}{2pt}}\ State + SWA KV};
\node[anchor=east,font=\sffamily\bfseries\fontsize{8.5}{10}\selectfont,text=rltInk] at (15.85,.18) {Merge uses $s$; SWA uses layerwise cached KV.};
\end{tikzpicture}%
}
\endgroup
\caption{\textbf{Recurrence across prompt and response.}
The encoder builds causal key--value (KV) memory; the decoder processes tokens in order.
Orange arrows carry the decoder state $H_t=(s_t,C_t^D)$: the merge reads $s_t$, and each SWA layer reads its own cache.
Blue arrows supply encoder memory up to the current position.
The drawing shows the last two prompt updates and the first response update.}
\label{fig:architecture}
\end{minipage}
\end{center}
\clearpage

\begin{figure}[p]
\centering
% One token update; shared global memory (G=1). All geometry is in cm.
\begingroup
\definecolor{detailInk}{HTML}{172B4D}
\definecolor{detailBlue}{HTML}{007AFF}
\definecolor{detailOrange}{HTML}{E87500}
\definecolor{detailGreen}{HTML}{20856A}
\resizebox{\linewidth}{!}{%
\begin{tikzpicture}[x=1cm,y=1cm,>=Latex,font=\sffamily\fontsize{8}{9.5}\selectfont,text=detailInk,
 box/.style={draw=detailInk!45,fill=white,rounded corners=2pt,minimum height=.65cm,text width=2.65cm,align=center,inner sep=4pt},
 enc/.style={box,draw=detailBlue!65,fill=detailBlue!6},
 dec/.style={box,draw=detailOrange!70,fill=detailOrange!7},
 mem/.style={box,draw=detailGreen,fill=detailGreen!7},
 add/.style={circle,draw=detailInk!65,fill=white,minimum size=.38cm,inner sep=0pt},
 flow/.style={->,line width=.8pt,draw=detailInk!80},
 global/.style={->,line width=1pt,draw=detailBlue},
 recurrent/.style={->,line width=1.2pt,draw=detailOrange},
 local/.style={->,line width=.9pt,draw=detailGreen},
 note/.style={font=\sffamily\fontsize{6.8}{8}\selectfont,align=center}]
\path[use as bounding box] (-.2,-.25) rectangle (17,16.3);
% Stack outlines are drawn first, behind their contents.
\filldraw[rounded corners=5pt,fill=detailBlue!3,draw=detailBlue!30] (.25,3.8) rectangle (4.35,10.7);
\filldraw[rounded corners=5pt,fill=detailOrange!3,draw=detailOrange!35] (8.55,3.8) rectangle (12.75,12.65);
\node[font=\sffamily\bfseries\fontsize{12}{14}\selectfont,text=detailBlue] at (2.3,15.75) {Causal encoder};
\node[font=\sffamily\bfseries\fontsize{12}{14}\selectfont,text=detailOrange] at (10.65,15.75) {Recurrent decoder};
\node[note] at (2.3,15.15) {Causal prefix $x_{1:t}$};
\node[note] at (10.65,15.15) {One update at position $t$};
\node[anchor=west,font=\sffamily\bfseries\fontsize{9}{11}\selectfont] at (.48,10.3) {$L_E$ layers};
\node[anchor=west,font=\sffamily\bfseries\fontsize{9}{11}\selectfont] at (8.78,12.25) {$L_D$ layers};
% Encoder, bottom to top.
\node (token) at (2.3,.65) {Observed token $x_t$};
\node[enc] (embed) at (2.3,1.65) {Token embedding};
\node[enc] (esa) at (2.3,5.15) {Causal self-attention\\[-1pt]{\fontsize{6.8}{8}\selectfont prefix positions $1{:}t$}};
\node[add] (eadd1) at (2.3,6.55) {$+$};
\node[enc] (effn) at (2.3,7.75) {Feed-forward network};
\node[add] (eadd2) at (2.3,9.15) {$+$};
\node[enc] (feature) at (2.3,11.7) {Final encoder output\\$e_t=E_\theta(x_{1:t})_t$};
\draw[flow] (token)--(embed);
\draw[flow] (embed)--(esa);
\draw[flow] (esa)--(eadd1);
\draw[flow] (eadd1)--(effn);
\draw[flow] (effn)--(eadd2);
\draw[flow] (eadd2)--(feature);
\draw[flow] (2.3,4.35)--(.55,4.35)|-(eadd1);
\draw[flow] (2.3,6.95)--(.55,6.95)|-(eadd2);
\node[note,text width=3.3cm] at (2.3,13.25) {Causal masking allows parallel\\encoder prefill across positions.};
% Projected memory lives between stacks.
\node[enc,text width=2.55cm] (proj) at (6.25,11.7) {Global K/V projection\\[-1pt]{\fontsize{6.8}{8}\selectfont from normalized $e_t$}};
\node[enc,text width=2.55cm,minimum height=1.1cm] (memory) at (6.25,9.55) {Append $(k_t,v_t)$\\to cached $M_{\le t}$};
\draw[global] (feature)--(proj);
\draw[global] (proj)--node[right,note] {project once}(memory);
\node[note,text width=2.7cm,text=detailBlue] at (6.25,7.85) {Same projected KV\\read by all decoder layers\\($G=1$ shown)};
% Merge and recurrent input.
\node[dec,text width=3.1cm,minimum height=1.05cm] (merge) at (10.65,1.65) {Gated merge\\$u_t=e_t+\alpha g_t\odot W_s r_{t-1}$};
\node[dec,text width=2.35cm] (prev) at (14.55,1.65) {Previous output\\$s_{t-1}$\\$r_{t-1}=\mathrm{RMSNorm}(s_{t-1})$};
\draw[recurrent] (prev)--(merge);
\draw[global] (feature.south)--(2.3,10.95)--(4.55,10.95)--(4.55,1.65)--(merge.west);
\node[note,text width=3.6cm] at (10.65,.45) {$g_t=\sigma(W_g[e_t;r_{t-1}]+b_g)$};
% Decoder: three pre-normalized residual sublayers.
\node[dec] (sqkv) at (10.65,4.65) {RMSNorm\\Q/K/V projections};
\node[dec] (swa) at (10.65,5.85) {Causal SWA\\[-1pt]{\fontsize{6.8}{8}\selectfont window $W$, including $t$}};
\node[add] (dadd1) at (10.65,6.7) {$+$};
\node[dec] (mq) at (10.65,7.65) {Norm + query projection};
\node[dec] (cross) at (10.65,8.75) {Cross-attention\\[-1pt]{\fontsize{6.8}{8}\selectfont encoder positions $1{:}t$}};
\node[add] (dadd2) at (10.65,9.6) {$+$};
\node[dec] (dffn) at (10.65,10.55) {RMSNorm + FFN};
\node[add] (dadd3) at (10.65,11.5) {$+$};
\draw[flow] (merge)--node[right,note] {$z_t^0=u_t$}(sqkv);
\draw[flow] (sqkv)--node[left,note] {$Q_t^\ell$}(swa);
\draw[flow] (swa)--(dadd1);
\draw[flow] (dadd1)--node[right,note] {$b_t^\ell$}(mq);
\draw[flow] (mq)--(cross);
\draw[flow] (cross)--(dadd2);
\draw[flow] (dadd2)--node[right,note] {$a_t^\ell$}(dffn);
\draw[flow] (dffn)--(dadd3);
\draw[flow] (10.65,4.05)--(8.8,4.05)|-(dadd1);
\draw[flow] (10.65,7.05)--(8.8,7.05)|-(dadd2);
\draw[flow] (10.65,9.95)--(8.8,9.95)|-(dadd3);
\draw[global] (memory.east)--(8.15,9.55)--(8.15,8.75)--node[above,note] {K,V}(cross.west);
% Local state is independent at every layer.
\node[mem,text width=2.4cm,minimum height=1.15cm] (cache) at (14.55,5.2) {Layer $\ell$ SWA KV\\past cache $C_{t-1}^{D,\ell}$\\+ current $(k_t^{D,\ell},v_t^{D,\ell})$};
\draw[local] (sqkv.east)--(12.95,4.65)--(12.95,3.4)--(14.55,3.4)--(cache.south);
\draw[local] (cache.west)--(12.95,5.2)--(12.95,5.85)--node[above,note] {K,V}(swa.east);
\node[mem,text width=2.4cm,minimum height=1cm] (nextcache) at (14.55,8.15) {Next cache $C_t^{D,\ell}$\\[-1pt]{\fontsize{6.8}{8}\selectfont retain last $W-1$ entries}};
\draw[local] (cache)--(nextcache);
\node[note,text width=2.5cm,text=detailGreen] at (14.55,9.55) {Separate cache per layer;\\used at position $t+1$};
% Output and delay path back to the next token's merge.
\node (state) at (10.65,13.2) {$s_t=z_t^{L_D}$};
\node[dec] (head) at (10.65,14.15) {Readout + softmax\\$p_\Theta(x_{t+1}\mid x_{1:t})$};
\draw[flow] (dadd3)--node[right,note,pos=.45] {$z_t^\ell$}(state);
\draw[flow] (state)--(head);
\node[dec,text width=2.4cm,minimum height=1cm] (delay) at (14.55,12.35) {Carry $s_t$ forward\\to the next token};
\draw[recurrent] (state.east)-|(delay.north);
\draw[recurrent] (delay.east)--(16.5,12.35)--(16.5,1.65)--(prev.east);
\node[note,text=detailOrange,rotate=90,fill=white,inner sep=3pt] at (16.5,7.5) {one-token delay: $t\leftarrow t+1$};
\node[note,text width=3.4cm] at (6.25,4.8) {Attention includes output projection.\\$+$ denotes a residual addition.\\Keys include positional\\transformations.};
\end{tikzpicture}%
}
\endgroup
\caption{\textbf{RLT-1 architecture for one token update.}
Computation flows upward; each outlined stack repeats for the indicated depth.
The encoder output supplies both the gated merge and the global KV projection.
With $G=1$, all decoder layers use their own queries to read the same cached, projected global KV.
Each decoder layer constructs separate SWA KV from its own input, attends to its local window, then reads global memory and applies an FFN.
Residual paths bypass each attention or FFN sublayer.
The final decoder output $s_t$ feeds the next token's merge, while each layer retains its own SWA cache.
Blue arrows carry encoder features and global memory, green arrows carry local KV, and orange arrows carry recurrent hidden states.}
\label{fig:architecture-detail}
\end{figure}
\clearpage

\section{Introduction}
Algorithmic state tracking requires a model to update a state with each input: parity accumulates bits, and permutation tracking composes group elements.
A Transformer computes each position with a fixed number of layers, and unless $\mathsf{TC}^0=\mathsf{NC}^1$, fixed-depth Transformers cannot track compositions of $S_5$ permutations over arbitrarily long sequences \citep{merrill2024illusion}.
A recurrent network instead passes the result of each update to the next, so its computation path grows with the input.
Adding this connection to a Transformer lets later tokens build on earlier computation along a path that grows with sequence length.

We introduce the Recurrent Looped Transformer (RLT), which divides its layers between a causal encoder and a recurrent Transformer decoder (Figure~\ref{fig:architecture}).
The encoder processes known tokens in parallel and builds token representations and global key--value memory.
At each position, a gated merge combines the current encoder representation with the previous token's final decoder output.
The decoder then attends to the encoder prefix and its own recent activations before predicting the next token.
Prompt and response tokens follow the same transition.
With decoder depth $L_D$, processing $t$ tokens creates a recurrent path through $tL_D$ decoder blocks, while each token passes through only $L_E+L_D$ layers.

This design introduces two choices that a decoder-only Transformer lacks.
The first is the split between encoder and decoder depth: at a fixed total layer count, moving layers to the decoder lengthens the recurrent path but adds decoder work that follows token order during training and prefill.
The second is the feedback interval $B$, the number of tokens between feedback updates.
RLT-1 feeds back at every token ($B=1$), RLT-2 shares one feedback state across each chunk of $B$ tokens, and RLT-0 removes the feedback ($B=\infty$).
For $T$ known tokens, the decoder needs $TL_D$ sequential block stages in RLT-1, $\lceil T/B\rceil L_D$ in RLT-2, and $L_D$ in RLT-0 (Table~\ref{tab:rlt-efficiency}); RLT-2 uses the same parameters as RLT-1.

We compare five allocations of eight layers with an eight-layer decoder-only Transformer on six algorithmic tasks: addition, parity, modular arithmetic with and without brackets, and standard and swap-based $S_5$ state tracking.
All comparisons use three initialization seeds and shared training and test data, and evaluate lengths well beyond the training range.
Our contributions are as follows.
\begin{itemize}
\item \textbf{Architecture.} RLT combines encoder-memory reuse \citep{sun2024yoco} with feedback through the full decoder at every token.
The appendices specify training by backpropagation through the full recurrent history, cache reuse across turns, and exact policy replay.
\item \textbf{Length generalization.} Trained on at most 40 bits, RLT splits $5+3$ and $7+1$ reach $100\pm0\%$ parity accuracy at 256 bits in every seed, compared with $50.07\pm1.63\%$ for the Transformer.
On swap-based $S_5$ at 256 operations, eight times the training length, $4+4$ reaches $97.30\pm2.76\%$ final-state accuracy, compared with $0.85\pm0.30\%$.
After 5,000 training steps, $6+2$ reaches $93.36\pm5.69\%$ on flat mod-5 expressions of length 63, compared with $33.20\pm2.33\%$ (Section~\ref{sec:length-generalization}).
\item \textbf{Depth allocation.} The best split depends on the task: parity generalizes best with one or three decoder layers, whereas swap-based $S_5$ accuracy at 256 operations rises with decoder depth, from chance with one decoder layer to 97\% with four.
\item \textbf{Feedback interval.} Without feedback, parity and swap-based $S_5$ stay near chance from length 64 at every split.
With four-token chunks, RLT-2 keeps 64-bit parity at $98.99\pm1.66\%$ but lowers swap-based $S_5$ at 64 operations from 100\% to $19.60\pm6.54\%$, so permutation tracking benefits most from per-token feedback.
\end{itemize}

Feedback Transformer, Recurrent Transformer, Full-bandwidth Transformer, T$^2$MLR, and Latent Recurrent Transformer also pass information across tokens through feedback \citep{fan2020feedback,oncescu2026recurrent,wang2026fullbandwidth,cai2026t2mlr,huang2026latentrecurrent}.
RLT feeds back through the full decoder, keeps this recurrence separate from parallel encoder memory, and treats the allocation of depth between encoding and recurrent updates as a design variable (Section~\ref{sec:related-work}).

\section{Method}
This section defines RLT-1 and two controls that change its feedback interval.
\subsection{Sequence and state}
Let $x_{1:S}$ be an independent sequence beginning with $x_1=\mathrm{BOS}$.
At inference, $x_{1:T}$ is the prompt and later tokens form the response.
The same conditional distribution applies before and after $T$.
Let $L_E$ and $L_D$ be the encoder and decoder depths, and let $d$ be the residual width.
We write the encoder representation $e_t\in\R^d$ and recurrent output $s_t\in\R^d$ as column vectors.

The complete decoder state is $H_t=(s_t,C_t^D)$.
The cache $C_t^D$ holds the key/value projections retained by each decoder SWA layer.
The window size $W\ge1$ includes the current token, so each layer retains at most $W-1$ past positions after an update.
We collect all parameters, including the learned initial state $s_\star$, in $\Theta$.
The encoder and decoder are denoted by $E_\theta$ and $D_\phi$.

\subsection{Causal encoder and memory}
For an observed prefix, compute
\begin{equation}
 e_{1:T}=E_\theta(x_{1:T}). \label{eq:encoder}
\end{equation}
Positions within each encoder layer can be processed together using a causal mask.
For decoder memory group $g\in\{1,\ldots,G\}$, construct
\begin{equation}
 k_t^g=\mathcal P_K^g(e_t,t),\qquad
 v_t^g=W_V^g\Norm_E(e_t),\qquad
 M_{\le t}^g=\{(k_j^g,v_j^g)\}_{j=1}^t. \label{eq:memory}
\end{equation}
The key map includes normalization, projection, and any positional transformation.
Decoder layer $\ell$ reads group $g(\ell)$: $G=1$ shares memory across layers, while $G=L_D$ allows separate projections for each layer.
Separate attention sublayers read the two stores: cross-attention reads encoder memory $M_{\le t}$ over the full prefix, and causal SWA reads decoder KV within a bounded window.

\subsection{One transition for every token}
Initialize once, before BOS:
\begin{equation}
 H_0=(s_\star,\varnothing). \label{eq:init}
\end{equation}
For every observed or sampled token, apply
\begin{align}
 u_t&=\Merge(e_t,s_{t-1}),\\
 H_t=(s_t,C_t^D)&=D_\phi(u_t;M_{\le t},C_{t-1}^D,t),
 \qquad t\ge1, \label{eq:recurrence}\\
 p_\Theta(x_{t+1}\mid x_{1:t})
 &=\softmax\bigl(W_o\Norm_o(s_t)\bigr)_{x_{t+1}}.
 \label{eq:distribution}
\end{align}
Define $F_t(H)=D_\phi(\Merge(e_t,s);M_{\le t},C^D,t)$ for
$H=(s,C^D)$.
After the encoder features are computed, the decoder processes every prompt position to predict the first response token:
\begin{equation}
 H_T=F_T\circ F_{T-1}\circ\cdots\circ F_1(H_0).
 \label{eq:prompt-state}
\end{equation}
Generation continues from both components of $H_T$.
After sampling $x_{T+1}$ from $s_T$, encode it incrementally, append its encoder-derived KV, and compute $H_{T+1}=F_{T+1}(H_T)$, including every decoder SWA cache update.

\paragraph{RNN and state-space interpretations.}
RLT-1 can be viewed as a recurrent neural network (RNN) whose cell is the gated merge followed by the full Transformer decoder.
The cell reuses its parameters across tokens, while its decoder layers may have distinct parameters.
For a decoder state $H=(s,C^D)$, encoder representation $e$, and encoder memory $M$, define the state-transition function and output function by
\begin{align}
 \mathcal F_\Theta(H;e,M,t)
 &:=D_\phi\bigl(\Merge(e,s);M,C^D,t\bigr),
 \label{eq:ssm-transition-function}\\
 \mathcal G_\Theta(H)
 &:=\softmax\bigl(W_o\Norm_o(s)\bigr).
 \label{eq:ssm-output-function}
\end{align}
The nonlinear state-space equations are then
\begin{align}
 H_t&=\mathcal F_\Theta(H_{t-1};e_t,M_{\le t},t),
 \label{eq:ssm-state-update}\\
 p_\Theta(\,\cdot\mid x_{1:t})&=\mathcal G_\Theta(H_t).
 \label{eq:ssm-readout}
\end{align}
The transition updates both the feedback vector $s_t$ and the decoder cache $C_t^D$; the output function maps $s_t$ to a distribution over the next token.
The encoder supplies $e_t$ and $M_{\le t}$ as conditioning inputs, so the earlier notation is $F_t(H)=\mathcal F_\Theta(H;e_t,M_{\le t},t)$.
At fixed width, decoder depth, and SWA window, this decoder state has bounded size, while the encoder memory $M_{\le t}$ grows with sequence length.

The state-space model (SSM) cores of S4 and Mamba use updates that are affine in the previous state once the layer inputs are fixed; Mamba makes the coefficients input-dependent \citep{gu2021s4,gu2023mamba}.
S4 exploits time invariance for convolutional evaluation, while Mamba uses parallel scans of its input-dependent affine updates.
RLT-1's gated merge and decoder instead depend nonlinearly on the previous decoder state, so these affine scan algorithms do not directly apply to its transition.

\subsection{Prompt prefill and incremental decoding}
Prefill first computes the causal encoder representations and memory of the prompt, then evaluates $H_1,\ldots,H_T$ in order.
At decoder position $t$, attention is restricted to $M_{\le t}$ even though the entire prompt memory is available.
Generation uses
\begin{equation}
 (e_t,C_t^E)=E_\theta^{\mathrm{step}}(x_t,C_{t-1}^E)
\end{equation}
with encoder cache $C^E$, followed by a memory append and decoder update.
Observed tokens may be encoded in chunks if the encoder cache is preserved.
The decoder still processes every token in order: each update produces the hidden state and SWA KV needed by later positions.

\subsection{Merge and decoder blocks}
We use the following gated merge:
\begin{align}
 r_{t-1}&=\Norm_s(s_{t-1}),\\
 g_t&=\sigma\bigl(W_g[e_t;r_{t-1}]+b_g\bigr),\\
 u_t&=e_t+\alpha\,g_t\odot W_sr_{t-1}. \label{eq:merge}
\end{align}
Here $W_g\in\R^{d\times2d}$, $W_s\in\R^{d\times d}$, and $\alpha$ controls the feedback scale.
The main comparison uses $\alpha=0.1$.

Starting from $z_t^0=u_t$, each decoder block applies causal SWA, encoder-memory cross-attention, and a feed-forward network (FFN):
\begin{align}
 q_t^{D,\ell}&=\mathcal P_Q^{D,\ell}(z_t^{\ell-1},t),\\
 k_t^{D,\ell}&=\mathcal P_K^{D,\ell}(z_t^{\ell-1},t),\qquad
 v_t^{D,\ell}=W_V^{D,\ell}\Norm_{S,\ell}(z_t^{\ell-1}),\\
 b_t^\ell&=z_t^{\ell-1}
 +\Attn_\ell^D\!\left(q_t^{D,\ell},
 \{(k_j^{D,\ell},v_j^{D,\ell})\}_{j=\max(1,t-W+1)}^t\right),\\
 a_t^\ell&=b_t^\ell+
 \Attn_\ell^M\!\left(\mathcal P_Q^{M,\ell}(b_t^\ell,t),
 M_{\le t}^{g(\ell)}\right),\\
 z_t^\ell&=a_t^\ell+\FFN_\ell(\Norm_{D,\ell}(a_t^\ell)),
 \qquad s_t=z_t^{L_D}. \label{eq:block}
\end{align}
The attention operators include output projections; query/key maps include their normalizations and positional transformations.
Each layer forms current KV from its input before SWA, so attention to the current position introduces no circular dependency.
Historical decoder KV comes from $C_{t-1}^D$.
After the update, retain positions $\max(1,t-W+2),\ldots,t$ in $C_t^D$; this set is empty for $W=1$.
Figure~\ref{fig:architecture-detail} expands the encoder and decoder stacks for one token update.

\subsection{Recurrent depth and execution cost}
After a prompt of length $T$ and $n$ processed response tokens, the feedback path has passed through $(T+n)L_D$ decoder blocks, while each token evaluates $L_E+L_D$ blocks (Figure~\ref{fig:temporal-depth}).
The influence of earlier states depends on the gates, projections, and products of transition Jacobians.
A prompt of length $T$ requires $T$ sequential decoder updates; updates from independent sequences can share a batch, with separate states and caches.
Appendix~\ref{app:execution-details} derives the work and storage costs and proves that states and next-token distributions are invariant to the prompt--response split.

\begin{figure}[t]
\centering
\begingroup
\definecolor{rblue}{HTML}{007AFF}
\definecolor{rorange}{HTML}{E87500}
\definecolor{rgray}{HTML}{566172}
\resizebox{\linewidth}{!}{%
\begin{tikzpicture}[font=\sffamily\small,>=stealth,
box/.style={draw=rblue!50,fill=rblue!5,rounded corners=3pt,align=center,minimum height=.7cm},
state/.style={box,draw=rorange!70,fill=rorange!8}]

\path[use as bounding box] (0,0) rectangle (16,6.1);
\node[anchor=west,font=\sffamily\bfseries] at (0,5.8) {Block count per token and along the recurrent path};
\node[anchor=west,text=rgray] at (0,5.25) {Illustrative 48-layer encoder + 48-layer decoder; counts exclude merge and readout.};
\draw[->,gray] (1.6,1) -- (9.4,1) ;
\node at (5.5,.45) {Processed tokens $t$};
\draw[->,gray] (1.6,1) -- (1.6,4.65);
\node[anchor=west,text=rgray] at (1.6,4.85) {Logical block count};
\foreach \t/\x in {1/2,2/3,3/4,4/5,5/6,6/7,7/8,8/9}{
\draw[gray!15] (\x,1)--(\x,4.4);
\node[below,font=\sffamily\footnotesize] at (\x,1) {\t};}
\foreach \v/\y in {0/1,96/1.8,192/2.6,288/3.4,384/4.2}{
\draw[gray!15] (1.6,\y)--(9,\y);
\node[left,font=\sffamily\footnotesize] at (1.5,\y) {\v};}
\draw[rblue,very thick] (2,1.8)--(9,1.8);
\draw[rorange,very thick] (2,1.4)--(9,4.2);
\foreach \x/\y in {2/1.4,3/1.8,4/2.2,5/2.6,6/3,7/3.4,8/3.8,9/4.2}{\fill[rorange] (\x,\y) circle (2pt);}
\node[state,text width=5cm] at (12.8,4.05) {\textbf{State path: $48t$}\\Decoder blocks along the recurrent chain from $s_0$};
\node[box,text width=5cm] at (12.8,2.35) {\textbf{Per token: $48+48=96$}\\Fixed block count; attention work still grows with context};
\end{tikzpicture}}
\endgroup
\caption{\textbf{Recurrent path length and block count per token.}
For $L_E=L_D=48$, the path traverses $48t$ decoder blocks after $t$ tokens.
Each token evaluates 96 encoder and decoder blocks.
Orange counts decoder blocks along the recurrent path; blue counts blocks in both stacks per token.}
\label{fig:temporal-depth}
\end{figure}

\subsection{Feedback interval: RLT-1, RLT-2, and RLT-0}
\label{sec:feedback-variants}
RLT-1 and its two controls differ in one quantity, the feedback interval $B$: the number of tokens between updates of the state that enters the decoder input.
RLT-1 updates this state at every token ($B=1$), RLT-2 once per chunk of $B$ tokens, and RLT-0 never ($B=\infty$).
All three keep the causal encoder, encoder memory, decoder SWA, prefix-restricted memory attention, and readout of RLT-1.

\emph{RLT-2} holds the feedback state fixed within a chunk of $B$ tokens and updates it at the chunk boundary (Figure~\ref{fig:architecture-chunk}).
Boundaries are anchored at BOS, so chunk $k$ covers the known positions $I_k=\{(k-1)B+1,\ldots,\min(kB,T)\}$.
Starting from the feedback register $h_0=s_\star$, every position in chunk $k$ merges its encoder output with the same boundary state through its own gate:
\begin{align}
 r_{k-1}&=\Norm_s(h_{k-1}),\qquad
 g_t=\sigma\bigl(W_g[e_t;r_{k-1}]+b_g\bigr),\\
 z_t^0&=e_t+\alpha\,g_t\odot W_s r_{k-1},\qquad t\in I_k,
 \label{eq:chunk-merge}\\
 h_k&=y_{kB}\quad\text{when }kB\le T.
 \label{eq:chunk-transition}
\end{align}
Here $y_t=z_t^{L_D}$ is the final decoder output at position $t$; every $y_t$ predicts $x_{t+1}$, but only the last output of a complete chunk becomes the next feedback state.
The decoder applies Equation~\eqref{eq:block} to all known positions of a chunk together, and causal SWA also reads keys retained from earlier chunks.
Because $h_{k-1}$ depends only on tokens before the chunk, the causal masks keep each $y_t$ a function of $x_{1:t}$.
A partial final chunk leaves the register at the preceding complete boundary, so moving the prompt--response split does not change the computation.
With $B=1$, every token closes a chunk, and RLT-2 reduces exactly to RLT-1.

\begin{figure}[p]
\centering
% One RLT-2 chunk; parallel known positions and shared encoder memory (G=1).
\begingroup
\definecolor{detailInk}{HTML}{172B4D}
\definecolor{detailBlue}{HTML}{007AFF}
\definecolor{detailOrange}{HTML}{E87500}
\definecolor{detailGreen}{HTML}{20856A}
\resizebox{\linewidth}{!}{%
\begin{tikzpicture}[x=1cm,y=1cm,>=Latex,font=\sffamily\fontsize{8}{9.5}\selectfont,text=detailInk,
 box/.style={draw=detailInk!45,fill=white,rounded corners=2pt,minimum height=.65cm,text width=2.65cm,align=center,inner sep=4pt},
 enc/.style={box,draw=detailBlue!65,fill=detailBlue!6},
 dec/.style={box,draw=detailOrange!70,fill=detailOrange!7},
 mem/.style={box,draw=detailGreen,fill=detailGreen!7},
 add/.style={circle,draw=detailInk!65,fill=white,minimum size=.38cm,inner sep=0pt},
 flow/.style={->,line width=.8pt,draw=detailInk!80},
 global/.style={->,line width=1pt,draw=detailBlue},
 recurrent/.style={->,line width=1.2pt,draw=detailOrange},
 local/.style={->,line width=.9pt,draw=detailGreen},
 note/.style={font=\sffamily\fontsize{6.8}{8}\selectfont,align=center}]
\path[use as bounding box] (-.2,-.25) rectangle (17,16.3);
% Stack outlines are drawn first, behind their contents.
\filldraw[rounded corners=5pt,fill=detailBlue!3,draw=detailBlue!30] (.25,3.8) rectangle (4.35,10.7);
\filldraw[rounded corners=5pt,fill=detailOrange!3,draw=detailOrange!35] (8.55,3.8) rectangle (12.75,12.65);
\node[font=\sffamily\bfseries\fontsize{12}{14}\selectfont,text=detailBlue] at (2.3,15.75) {Causal encoder};
\node[font=\sffamily\bfseries\fontsize{12}{14}\selectfont,text=detailOrange] at (10.65,15.75) {RLT-2 decoder};
\node[note] at (2.3,15.15) {Known prefix through $b_k$};
\node[note] at (10.65,15.15) {Chunk $I_k$: parallel positions within each layer};
\node[anchor=west,font=\sffamily\bfseries\fontsize{9}{11}\selectfont] at (.48,10.3) {$L_E$ layers};
\node[anchor=west,font=\sffamily\bfseries\fontsize{9}{11}\selectfont] at (8.78,12.25) {$L_D$ layers};
% Encoder, bottom to top.
\node (token) at (2.3,.65) {Known chunk $x_{I_k}$};
\node[enc] (embed) at (2.3,1.65) {Token embeddings};
\node[enc] (esa) at (2.3,5.15) {Causal self-attention\\[-1pt]{\fontsize{6.8}{8}\selectfont prefix mask $j\le t$}};
\node[add] (eadd1) at (2.3,6.55) {$+$};
\node[enc] (effn) at (2.3,7.75) {Feed-forward network};
\node[add] (eadd2) at (2.3,9.15) {$+$};
\node[enc] (feature) at (2.3,11.7) {Encoder outputs\\$e_{I_k}=E_\theta(x_{1:b_k})_{I_k}$};
\draw[flow] (token)--(embed);
\draw[flow] (embed)--(esa);
\draw[flow] (esa)--(eadd1);
\draw[flow] (eadd1)--(effn);
\draw[flow] (effn)--(eadd2);
\draw[flow] (eadd2)--(feature);
\draw[flow] (2.3,4.35)--(.55,4.35)|-(eadd1);
\draw[flow] (2.3,6.95)--(.55,6.95)|-(eadd2);
\node[note,text width=3.3cm] at (2.3,13.25) {Causal masking allows parallel\\encoder prefill across positions.};
% Projected memory lives between stacks.
\node[enc,text width=2.55cm] (proj) at (6.25,11.7) {Global K/V projection\\[-1pt]{\fontsize{6.8}{8}\selectfont from normalized $e_{I_k}$}};
\node[enc,text width=2.55cm,minimum height=1.1cm] (memory) at (6.25,9.55) {Append $(k_j,v_j)$, $j\in I_k$\\to encoder memory $M_{\le b_k}$};
\draw[global] (feature)--(proj);
\draw[global] (proj)--node[right,note] {once per position}(memory);
\node[note,text width=2.7cm,text=detailBlue] at (6.25,7.85) {Same projected KV\\read by all decoder layers\\($G=1$ shown)};
% The same boundary state enters every position-specific gate in the chunk.
\node[dec,text width=3.1cm,minimum height=1.05cm] (merge) at (10.65,1.65) {Gated merge at each $t\in I_k$\\$z_t^0=e_t+\alpha g_t\odot v_{k-1}$};
\node[dec,text width=2.65cm,minimum height=1.5cm] (prev) at (14.55,1.65) {Shared state $h_{k-1}$\\$r_{k-1}=\mathrm{RMSNorm}(h_{k-1})$\\$v_{k-1}=W_s r_{k-1}$\\[-1pt]{\fontsize{6.8}{8}\selectfont normalize/project once per chunk}};
\draw[recurrent] (prev)--(merge);
\draw[global] (feature.south)--(2.3,10.95)--(4.55,10.95)--(4.55,1.65)--(merge.west);
\node[note,text width=3.6cm] at (10.65,.45) {$g_t=\sigma(W_g[e_t;r_{k-1}]+b_g)$};
% Decoder: three pre-normalized residual sublayers.
\node[dec] (sqkv) at (10.65,4.65) {RMSNorm\\Q/K/V projections};
\node[dec] (swa) at (10.65,5.85) {Causal SWA\\[-1pt]{\fontsize{6.8}{8}\selectfont window $W$, including $t$}};
\node[add] (dadd1) at (10.65,6.7) {$+$};
\node[dec] (mq) at (10.65,7.65) {Norm + query projection};
\node[dec] (cross) at (10.65,8.75) {Cross-attention\\[-1pt]{\fontsize{6.8}{8}\selectfont prefix mask $j\le t$}};
\node[add] (dadd2) at (10.65,9.6) {$+$};
\node[dec] (dffn) at (10.65,10.55) {RMSNorm + FFN};
\node[add] (dadd3) at (10.65,11.5) {$+$};
\draw[flow] (merge)--node[right,note] {$z_{I_k}^0$}(sqkv);
\draw[flow] (sqkv)--node[left,note] {$Q_{I_k}^\ell$}(swa);
\draw[flow] (swa)--(dadd1);
\draw[flow] (dadd1)--node[right,note] {$b_{I_k}^\ell$}(mq);
\draw[flow] (mq)--(cross);
\draw[flow] (cross)--(dadd2);
\draw[flow] (dadd2)--node[right,note] {$a_{I_k}^\ell$}(dffn);
\draw[flow] (dffn)--(dadd3);
\draw[flow] (10.65,4.05)--(8.8,4.05)|-(dadd1);
\draw[flow] (10.65,7.05)--(8.8,7.05)|-(dadd2);
\draw[flow] (10.65,9.95)--(8.8,9.95)|-(dadd3);
\draw[global] (memory.east)--(8.15,9.55)--(8.15,8.75)--node[above,note] {K,V}(cross.west);
% Local state is independent at every layer.
\node[mem,text width=2.4cm,minimum height=1.15cm] (cache) at (14.55,5.2) {Layer $\ell$ SWA KV\\past cache $C_{a_k-1}^{D,\ell}$\\+ chunk KV at $j\in I_k$};
\draw[local] (sqkv.east)--(12.95,4.65)--(12.95,3.4)--(14.55,3.4)--(cache.south);
\draw[local] (cache.west)--(12.95,5.2)--(12.95,5.85)--node[above,note] {K,V}(swa.east);
\node[mem,text width=2.4cm,minimum height=1cm] (nextcache) at (14.55,8.15) {Save cache $C_{b_k}^{D,\ell}$\\[-1pt]{\fontsize{6.8}{8}\selectfont retain last $W-1$ entries}};
\draw[local] (cache)--(nextcache);
\node[note,text width=2.5cm,text=detailGreen] at (14.55,9.55) {Cross-chunk KV is retained.\\Query $t$: $j\le t$, $t-j<W$.};
% All outputs drive predictions; only a complete boundary commits feedback.
\node (state) at (10.65,13.2) {$y_{I_k}=z_{I_k}^{L_D}$};
\node[dec] (head) at (10.65,14.15) {Per-token readout\\$p_\Theta(x_{t+1}\mid x_{1:t})$};
\draw[flow] (dadd3)--node[right,note,pos=.45] {$z_{I_k}^\ell$}(state);
\draw[flow] (state.north)--(head.south);
\node[dec,text width=2.65cm,minimum height=1.35cm] (delay) at (14.55,12.0) {Commit at a full boundary\\$h_k=y_{kB}$\\[-1pt]{\fontsize{6.8}{8}\selectfont Partial chunk: keep $h_{k-1}$.}};
\draw[recurrent] (state.east)-|(delay.north);
\draw[recurrent] (delay.east)--(16.5,12.0)--(16.5,1.65)--(prev.east);
\node[note,text=detailOrange,rotate=90,fill=white,inner sep=3pt] at (16.5,7.5) {broadcast to the next chunk: $k\leftarrow k+1$};
\node[note,text width=3.4cm] at (6.25,4.8) {Positions in $I_k$ run together.\\Layers remain sequential.\\$+$ denotes residual addition.\\$h_0=s_\star$ at sequence start.};
\end{tikzpicture}%
}
\endgroup
\caption{\textbf{RLT-2 architecture for one chunk.}
Computation flows upward; the stacks repeat for the indicated depths.
Each position $t\in I_k$ uses its own gate to merge its encoder output with the preceding boundary state $h_{k-1}$.
State normalization and projection are shared across the chunk.
Each decoder layer applies causal SWA, prefix-restricted encoder-memory attention, and an FFN, with residual connections around each sublayer.
For $G=1$, all decoder layers read the same projected encoder KV; each layer retains its own SWA KV across chunk boundaries.
Every output $y_t$ predicts the next token; only $y_{kB}$ at a complete boundary becomes the next feedback state.
A partial chunk preserves $h_{k-1}$ during continuation.
Blue arrows carry encoder features and memory, green arrows carry decoder KV, and orange arrows carry chunk-level feedback.}
\label{fig:architecture-chunk}
\end{figure}

\emph{RLT-0} is the $B=\infty$ limit: no chunk boundary is reached, so no decoder output is fed back.
RLT-2 with $B=\infty$ would still merge every position with the constant initial state $s_\star$; RLT-0 drops this constant merge together with $s_\star$, state normalization, the gate, and the feedback projection, and feeds $z_t^0=e_t$ to the decoder (Figure~\ref{fig:architecture-no-feedback}).

Table~\ref{tab:rlt-efficiency} and Figure~\ref{fig:chunk-schedule} compare the three intervals at matched stack dimensions.
For a known prefix of length $T$, the decoder needs $\lceil T/B\rceil L_D$ sequential block stages, with up to $\min(B,T)$ positions in each batched matrix operation: $TL_D$ stages for RLT-1 and $L_D$ for RLT-0.
All three evaluate $TL_D$ decoder blocks and generate one token per step.
Larger $B$ therefore trades feedback frequency for known-token parallelism.
Appendix~\ref{sec:rlt2} gives the chunked training, prefill, and incremental-generation procedures.

Because RLT-2 uses the same parameters for every $B$, the chunk size can also vary during training.
Larger chunks shorten the sequential decoder path, while smaller chunks extrapolate better on parity and swaps-$S_5$ (Section~\ref{sec:experiments}).
In seed-42 CPU timing at $4+4$, chunk4 training steps run 2.27 times as fast as RLT-1 steps, and RLT-0 steps 4.17--4.30 times as fast (Appendix~\ref{app:mod5-variants}).
Pretraining can therefore begin with a large $B$, processing most tokens with high parallelism, and mid-training and post-training can reduce $B$ toward $B=1$ to adapt the model to frequent feedback.
Changing $B$ changes the model's computation, so inference and RL replay use the chunk size of the final training stage.
Our experiments train each chunk size from scratch and do not test such a schedule.

\begin{table}[t]
\centering
\small
\setlength{\tabcolsep}{4pt}
\begin{tabular}{p{.5\linewidth}ccc}
\toprule
Quantity & \makecell{RLT-1\\($B=1$)} & \makecell{RLT-2\\($B$)} & \makecell{RLT-0\\($B=\infty$)} \\
\midrule
Positions per decoder batch, known tokens & $1$ & up to $B$ & $T$ \\
Decoder block stages, training forward / prefill & $TL_D$ & $KL_D$ & $L_D$ \\
Backward dependency depth, full BPTT & $O(TL_D)$ & $O(KL_D)$ & $O(L_D)$ \\
Final-state feedback updates over $T$ tokens & $T$ & $\lfloor T/B\rfloor$ & $0$ \\
Decoder block evaluations over $T$ tokens & $TL_D$ & $TL_D$ & $TL_D$ \\
Merge gate evaluations over $T$ tokens & $T$ & $T$ & $0$ \\
Shared feedback projections, known tokens & $T$ & $K$ & $0$ \\
Autoregressive token steps for $N$ tokens & $N$ & $N$ & $N$ \\
Blocks per consumed generation token & $L_E+L_D$ & $L_E+L_D$ & $L_E+L_D$ \\
Persistent feedback register & $d$ values & $d$ values & none \\
\bottomrule
\end{tabular}
\caption{\textbf{Execution costs at matched stack dimensions.}
$T$ counts known input tokens, $K=\lceil T/B\rceil$, and $N$ counts generated tokens; all three variants share the $L_E$-stage causal encoder prefill.
Counts omit common readout and encoder-memory projection work.
RLT-2 reuses the normalized and projected boundary state; gates depend on each token.
Backward counts cover the decoder dependency graph, with the encoder's backward pass common to all variants.
Counts describe dependencies and arithmetic; wall-clock speed depends on the implementation and hardware.}
\label{tab:rlt-efficiency}
\end{table}

\begin{figure}[t]
\centering
\begingroup
\definecolor{scheduleBlue}{HTML}{007AFF}
\definecolor{scheduleOrange}{HTML}{E87500}
\resizebox{\linewidth}{!}{%
\begin{tikzpicture}[x=1cm,y=1cm,>=stealth,font=\sffamily\small,
token/.style={draw=scheduleBlue!60,fill=white,minimum width=1.0cm,minimum height=.65cm,rounded corners=2pt}]
\path[use as bounding box] (0,0) rectangle (16.4,5.6);
\node[anchor=west,font=\sffamily\bfseries] at (0,5.25) {Known-token decoder batches};
\foreach \i in {1,...,8}{
 \pgfmathsetmacro{\x}{3+(\i-1)*1.45}
 \fill[scheduleBlue!8,rounded corners=4pt] (\x-.6,3.7) rectangle (\x+.6,4.65);
}
\fill[scheduleBlue!8,rounded corners=4pt] (2.35,2.15) rectangle (7.9,3.1);
\fill[scheduleBlue!8,rounded corners=4pt] (8.15,2.15) rectangle (13.65,3.1);
\fill[scheduleBlue!8,rounded corners=4pt] (2.35,.6) rectangle (13.65,1.55);
\foreach \name/\interval/\y/\row in {RLT-1/{B=1}/4.175/1,RLT-2/{B=4}/2.625/2,RLT-0/{B=\infty}/1.075/0}{
 \node[anchor=west,align=left] at (0,\y) {\textbf{\name}\\{\footnotesize$\interval$}};
 \foreach \i in {1,...,8}{
  \pgfmathsetmacro{\x}{3+(\i-1)*1.45}
  \node[token] (t\row\i) at (\x,\y) {$x_\i$};
 }
}
\foreach \i/\j in {1/2,2/3,3/4,4/5,5/6,6/7,7/8}{
 \draw[->,scheduleOrange,thick] (t1\i)--(t1\j);
}
\draw[scheduleOrange,thick] (t24.north)--(7.35,3.25)--(13.15,3.25);
\foreach \i in {5,...,8}{
 \pgfmathsetmacro{\x}{3+(\i-1)*1.45}
 \draw[->,scheduleOrange,thick] (\x,3.25)--(t2\i.north);
}
\node[text=scheduleOrange,font=\sffamily\scriptsize] at (10.3,3.5) {$y_4$: shared feedback for positions 5--8};
\node[anchor=west,align=left] at (14,4.175) {8 batches\\$8L_D$ stages};
\node[anchor=west,align=left] at (14,2.625) {2 batches\\$2L_D$ stages};
\node[anchor=west,align=left] at (14,1.075) {1 batch\\$L_D$ stages};
\node[anchor=west,font=\sffamily\footnotesize] at (0,.1) {Within a batch: positions in parallel at each layer. Between batches: preserve causal SWA caches.};
\end{tikzpicture}}
\endgroup
\caption{\textbf{Decoder scheduling for eight known tokens as the feedback interval grows.}
Rows show RLT-1 ($B=1$), RLT-2 ($B=4$), and RLT-0 ($B=\infty$).
Each shaded group is one layerwise decoder batch, containing $L_D$ sequential blocks.
Orange arrows show feedback dependencies; RLT-2 broadcasts $y_4$ to all four positions in its second chunk, and RLT-0 has none.
Causal SWA applies in every row, with caches retained between batches.
During autoregressive generation, all three variants consume tokens one at a time.}
\label{fig:chunk-schedule}
\end{figure}

\subsection{Training and state replay}
Training uses teacher forcing and cross-entropy on the selected next-token targets.
Full backpropagation through time (BPTT) follows recurrent outputs, decoder KV, and encoder memory; masking a target loss does not skip its state update.
For reinforcement learning (RL), evaluating a sampled response after a parameter update requires replaying its history to rebuild the state under the current parameters.
Appendix~\ref{app:training-objectives} gives the objectives and replay conditions for pretraining, supervised fine-tuning, and policy gradients.

\section{Algorithmic Experiments}
\label{sec:experiments}

We study how decoder depth and feedback frequency affect length generalization.
The experiments cover addition, parity, modular arithmetic with and without brackets, and standard and swaps-based $S_5$ state tracking.
We first compare encoder--decoder allocations, then vary the feedback interval at a fixed split.
Appendix~\ref{app:depth8-complete} gives the complete six-task study at 2,000 steps; Appendices~\ref{app:feedback-variants} and~\ref{app:mod5-variants} give all feedback variants and the 5,000-step modular-arithmetic comparisons.

\subsection{Models and evaluation}
All models have eight logical layers, width 512, FFN width 1,365, and four attention heads.
We compare untied RLT-1 splits $4+4$, $5+3$, $6+2$, $7+1$, and $8+0$ with an eight-layer decoder-only Transformer (GPT Layer 8).
RLT-1 uses an SWA window of eight, one shared encoder-memory group, and feedback scale $\alpha=0.1$.
The $8+0$ variant retains the gated recurrent merge despite having no decoder blocks.
Equal layer counts do not match parameters or compute: RLT-1 has 26.10--28.73M parameters, compared with 25.31M for the Transformer (Table~\ref{tab:experiment-parameters}).

All eight-layer accuracy comparisons use initialization seeds 42, 43, and 44, with a shared training stream and held-out examples within each task.
Global batch size is 512 and microbatch size is 32.
AdamW uses $(\beta_1,\beta_2)=(0.9,0.95)$, weight decay 0.1, and gradient clipping at norm 1.
The base learning rate warms up to $10^{-4}$ over 200 steps, then decays to $5\times10^{-6}$ at the end of each run.
Parity, addition, and $S_5$ use 2,000 steps; the mod-5 comparisons in this section use 5,000 steps with a cosine schedule spanning that budget.
Models use the same width-$\mu$P recipe and four-thread CPU FP32 execution.
For RLT-1 and RLT-2, TBPTT128 covers every training sequence, so no gradients are truncated.

Parity trains on 3--40 bits; $S_5$ trains on 32 operations, using either all 120 permutations or the identity and ten single transpositions (swaps), following \citet{grazzi2024unlocking}.
Flat mod-5 expressions use $+$, $-$, and $\times$ with multiplication precedence and odd training lengths 3--39; bracketed expressions use trees of lengths 3--40.
We score the final answer or state on 1,024 shared test examples per length.
Addition trains on randomly sampled 1--8-digit operands and uses teacher-forced answer-token accuracy on 256 shared pairs per test width.
For each run, length generalization uses the checkpoint with minimum in-distribution (ID) validation example loss; exact ties select the earliest step.
Test scores do not enter checkpoint selection.
Means and sample standard deviations (SD, $n=3$) describe initialization variability on the fixed data stream.
Appendices~\ref{app:experiment-details} and~\ref{app:length-generalization} specify generators, metrics, and checkpoint steps.

\subsection{Length extrapolation and decoder allocation}
\label{sec:length-generalization}
RLT-1 sustains parity and swaps-$S_5$ accuracy well beyond the training lengths (Figure~\ref{fig:main-depth-allocation}).
At 256 bits, parity splits $5+3$ and $7+1$ retain $100\pm0\%$ accuracy, compared with $50.07\pm1.63\%$ for the Transformer.
The advantage also appears during learning: at step 500, $6+2$ reaches $99.44\pm0.98\%$ validation accuracy versus $48.48\pm0.53\%$ for the Transformer (Figure~\ref{fig:parity-errorbars}).
In a separate seed-42 sixteen-layer series, RLT-1 Layer $8 + 8$, $9+7$, $11+5$, and $16+0$ reach 100\% at 256 bits, compared with 49.41\% for GPT Layer 16 (Appendix~\ref{app:parity-depth16}).

Swaps-$S_5$ favors a larger decoder allocation.
At 256 operations, eight times the training length, $4+4$ reaches $97.30\pm2.76\%$ final-state accuracy, compared with $0.85\pm0.30\%$ for the Transformer.
At 512 operations, $4+4$ still reaches $55.70\pm25.78\%$, while the Transformer stays at $0.85\pm0.30\%$.
Splits $7+1$ and $8+0$ are already near the uniform reference at 256 operations, despite high training-length accuracy.
Figure~\ref{fig:generalization-s5} also reports prefix-token and whole-sequence accuracy.

After 5,000 steps, RLT-1 leads on modular arithmetic at intermediate test lengths.
On flat expressions of length 63, RLT-1 Layer $6 + 2$ reaches $93.36\pm5.69\%$, compared with $33.20\pm2.33\%$ for the Transformer.
On bracketed expressions of length 64, $5+3$ reaches $67.97\pm2.91\%$, compared with $46.71\pm1.21\%$.
Accuracy falls on longer expressions, and several flat mod-5 splits vary widely across seeds.

\begin{figure}[p]
\centering
\includegraphics[width=\linewidth]{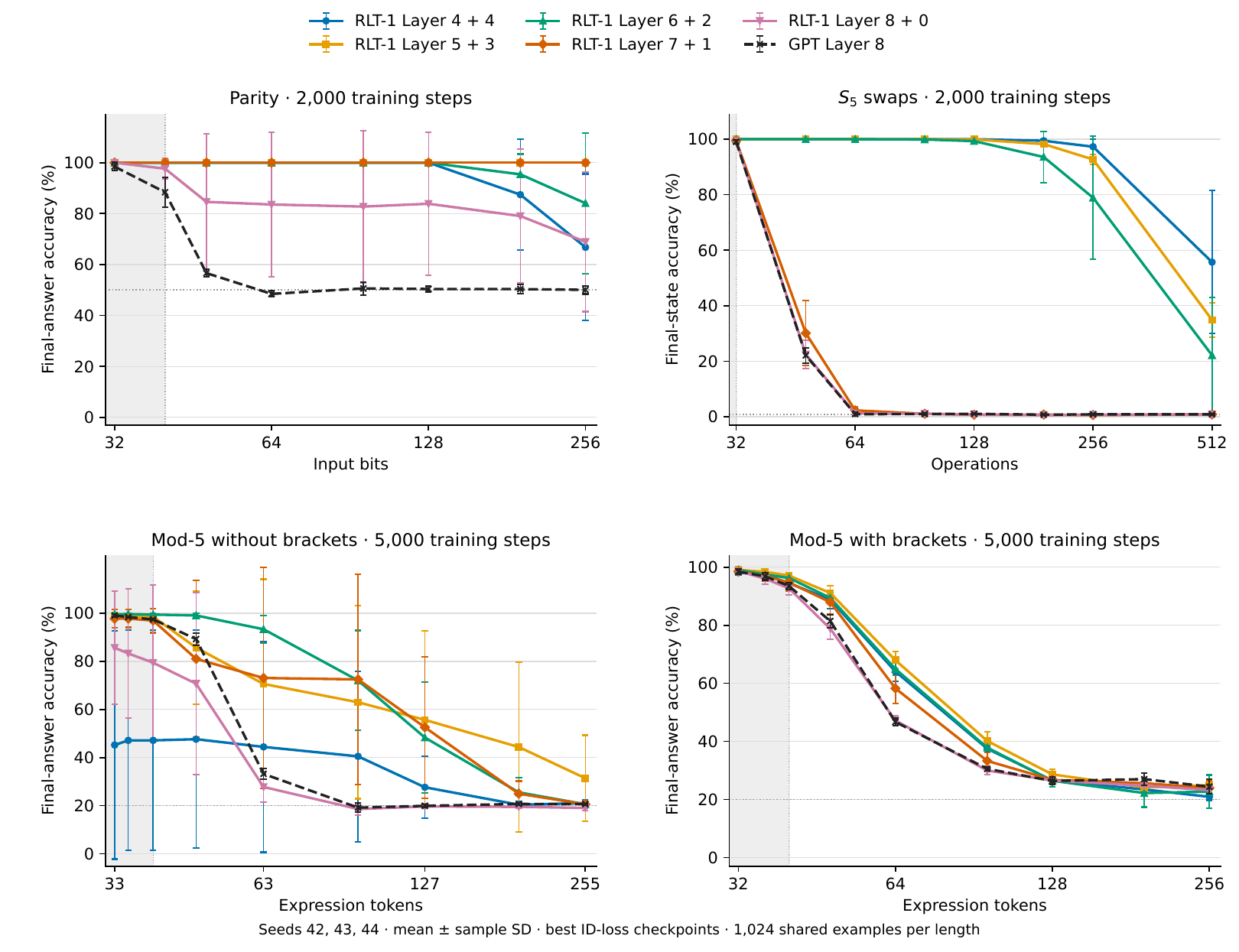}
\caption{\textbf{Length generalization across encoder--decoder allocations.}
All five RLT-1 splits and GPT Layer 8 are shown across the full evaluated length grids.
The top row uses 2,000-step parity and swaps-$S_5$ runs; the bottom row uses 5,000-step mod-5 runs for every model.
Each run selects its best ID-loss checkpoint, with earliest-step tie breaking.
Points and untrimmed error bars show mean $\pm$ sample SD over seeds 42, 43, and 44 on 1,024 shared examples per length.
Gray regions mark trained lengths; dotted horizontal lines mark uniform-prediction accuracy.
Flat mod-5 labels actual odd expression lengths.
The complete six-task 2,000-step comparison is in Figure~\ref{fig:length-generalization}.}
\label{fig:main-depth-allocation}
\end{figure}

\begin{figure}[t]
\centering
\includegraphics[width=\linewidth]{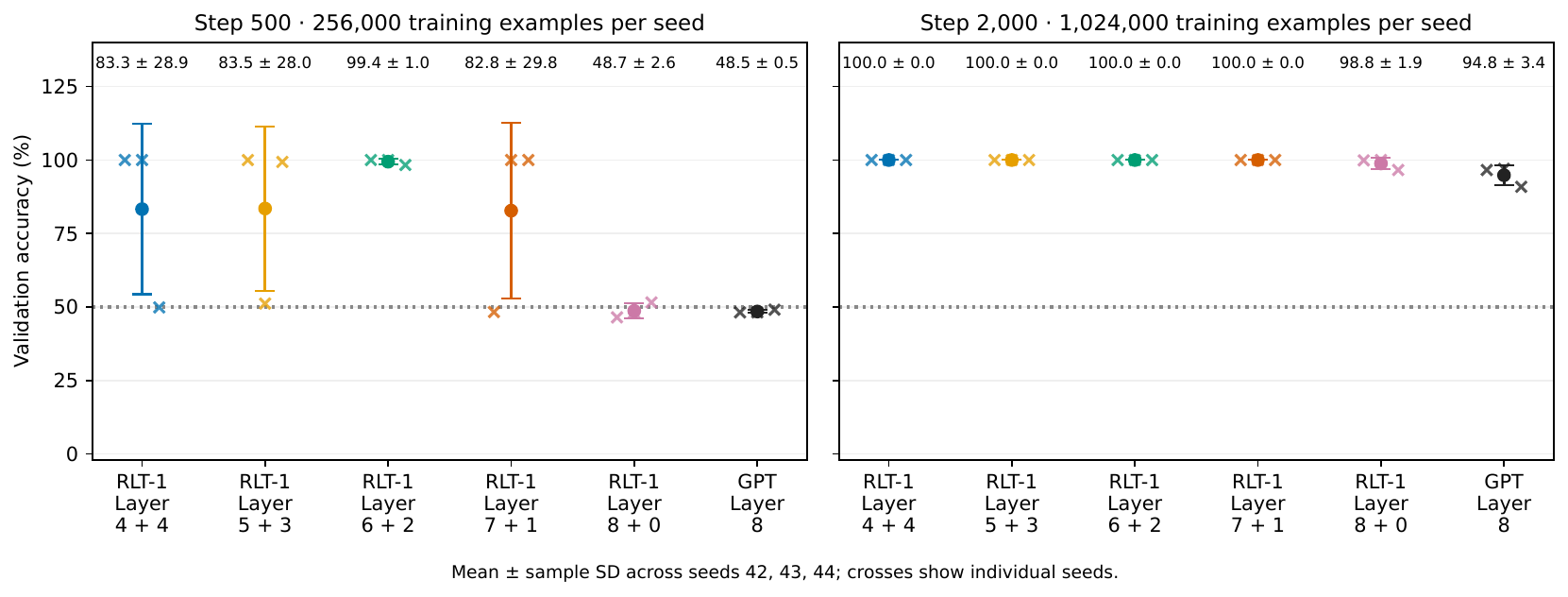}
\caption{\textbf{Parity at 500 and 2,000 training steps.}
Circles and whiskers show the mean and sample SD across seeds 42, 43, and 44; crosses show individual seeds with a small horizontal offset.
SD whiskers can extend beyond 100\%.
Every model uses the same 768 validation examples; the panels correspond to 256,000 and 1,024,000 training examples per seed.}
\label{fig:parity-errorbars}
\end{figure}

\subsection{Feedback frequency and parallelism}
We compare feedback intervals $B=1$ (RLT-1), $B=4$ (RLT-2 chunk4), and $B=\infty$ (RLT-0) at every split, using the data, optimizer, and seeds of the RLT-1 runs (Section~\ref{sec:feedback-variants}).
RLT-0 has 787,968 fewer parameters than RLT-1 at each split; RLT-2 has the same parameters.
Figure~\ref{fig:main-feedback-frequency} compares all variants at $4+4$, and Appendix~\ref{app:feedback-variants} gives every split.

At 64 bits, RLT-1 reaches $100\pm0\%$ parity accuracy and chunk4 reaches $98.99\pm1.66\%$, while RLT-0 remains near chance at $50.23\pm3.80\%$.
Swaps-$S_5$ is more sensitive to the feedback interval: at 64 operations, RLT-1 reaches $100\pm0\%$ and chunk4 $19.60\pm6.54\%$.
RLT-0 and the Transformer are near the $1/120$ uniform reference at this length.
Relative to RLT-1, four-token chunks lose about one percentage point on 64-bit parity and about 80 points on 64-operation swaps-$S_5$.

The effect of chunking also depends on the split (Figures~\ref{fig:variant-parity} and~\ref{fig:variant-s5-swaps}).
Chunk4 $4+4$ retains $69.34\pm24.17\%$ parity accuracy at 256 bits, whereas chunk4 $7+1$ and $8+0$ are already near chance at 64 bits.
On swaps-$S_5$, chunk4 $5+3$ reaches $79.92\pm10.89\%$ at 48 operations and $50.16\pm18.77\%$ at 64, compared with $19.60\pm6.54\%$ at 64 for chunk4 $4+4$.

\begin{figure}[t]
\centering
\includegraphics[width=\linewidth]{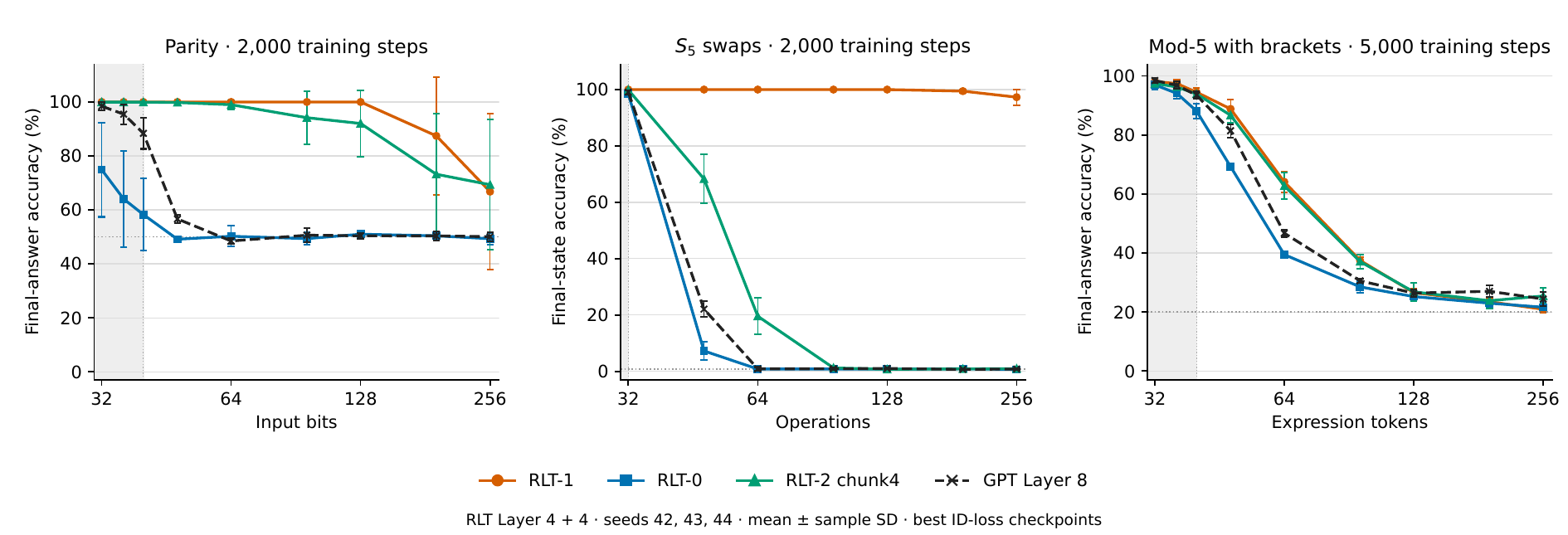}
\caption{\textbf{Feedback frequency at a fixed $4+4$ split.}
Final-answer/state accuracy for RLT-1 ($B=1$), RLT-2 chunk4 ($B=4$), RLT-0 ($B=\infty$), and GPT Layer 8, across all native test lengths.
Parity and swaps-$S_5$ use 2,000 training steps; bracketed mod-5 uses 5,000.
All models use seeds 42--44 and their best ID-loss checkpoints.
Error bars show sample SD without clipping; gray regions mark trained lengths and dotted lines mark uniform prediction.}
\label{fig:main-feedback-frequency}
\end{figure}

\subsection{Addition, standard \texorpdfstring{$S_5$}{S5}, and scope}
All models reach 100\% teacher-forced accuracy on the training-width addition validation set, but accuracy drops beyond eight digits; the eight-layer model means at 32 digits range from 14.89\% to 16.84\%.
Standard $S_5$ remains difficult, with final-state accuracy near the $1/120$ reference across feedback variants (Figure~\ref{fig:variant-s5-standard}).
Appendix~\ref{app:experiment-details} gives the complete six-task curves, endpoint tables, and individual parity trajectories; Appendix~\ref{app:addition-feedback} gives the addition feedback-scale ablation.
These experiments measure supervised algorithmic performance; RL performance is not evaluated.

\section{Related Work}
\label{sec:related-work}
\paragraph{Hybrid Transformer--RNN models.}
\citet{chen2018best} combine a Transformer encoder with an LSTM-based RNMT+ decoder for machine translation.
RLT-1 also combines parallel encoding with recurrent decoding.
Its decoder consists of Transformer blocks with encoder-memory cross-attention and SWA, and it processes both prompt and response tokens in a causal language model.

\paragraph{Encoder-derived memory.}
YOCO builds reusable KV memory for an upper cross-decoder and allows early exit during prefill \citep{sun2024yoco}.
DeepSeek-V4.1-Flash projects global decoder KV from final encoder states and maintains separate decoder SWA \citep{deepseek2026v41}.
RLT-1 also builds global memory from encoder states, but its recurrent decoder must process every prompt token.

\paragraph{Temporal feedback.}
Feedback Transformer forms a learned weighted sum of each token's representations across layers and lets subsequent tokens attend to this shared memory at every layer \citep{fan2020feedback}.
RLT-1 feeds the preceding final decoder output into a gated merge at the decoder input, with separate encoder-derived global memory and layerwise decoder SWA caches.
Recurrent Transformer constructs each layer's persistent KV from that layer's output and provides an exact tiling schedule to improve memory movement \citep{oncescu2026recurrent}.
RLT-1's recurrent dependency spans the full decoder, whereas Recurrent Transformer's feedback is layerwise.

\paragraph{Cross-token latent feedback.}
Full-bandwidth Transformer combines the previous top-layer hidden state with the next token embedding through a gated linear unit, retaining the Transformer stack and KV cache \citep{wang2026fullbandwidth}.
Its multi-pass training shifts hidden states between passes to allow token-parallel teacher forcing, with a prefix mixin to address differences between prompts and generation.
RLT-1 places output-to-input feedback at the encoder--decoder interface and trains by replaying the decoder sequentially over the full history.

T$^2$MLR feeds a cached middle-layer representation from the previous token into an earlier layer at the current position \citep{cai2026t2mlr}.
Its experiments find that recurrence between middle layers can outperform recurrence through the full network.
Training approximates temporal states with a fixed number of Jacobi iterations and controls backward depth separately.
RLT-1 feeds back through the entire decoder and trains by replaying the full history with BPTT, optionally truncated.

\paragraph{Latent recurrent language models.}
Latent Recurrent Transformer (LRT) reuses a high-level state from the previous token through KV projection and residual injection, keeping a decoder-only backbone and one forward pass per generated token \citep{huang2026latentrecurrent}.
Its interleaved parallel training refines subsets of positions from a shared state buffer.
For RL, LRT initializes this buffer with detached rollout states to reduce differences between rollout and recomputation.
RLT-1 separates encoder memory from the recurrent decoder and rebuilds the full history under current parameters for exact policy replay.

\paragraph{Continuous latent computation.}
Coconut feeds the last hidden state back as the next input embedding during a latent reasoning phase, using a curriculum that replaces textual reasoning steps with continuous states \citep{hao2024coconut}.
PonderLM-2 inserts latent steps between ordinary tokens during pretraining and uses Jacobi iterations to approximate their recurrent dependencies in parallel \citep{zeng2025ponderlm2}.
RLT-1 advances its state once per ordinary prompt or response token, without adding latent positions.

\paragraph{Block- and segment-level recurrence.}
Block-Recurrent Transformers update persistent state vectors with attention and gates, processing a block of tokens in parallel at each recurrent step \citep{hutchins2022blockrecurrent}.
Their block-feedback variant lets all layers cross-attend to the recurrent state from the preceding block.
Recurrent Memory Transformer appends write-memory tokens to each segment and passes their final representations to the next segment as memory inputs, with BPTT through this connection \citep{bulatov2022memory}.
RLT-2 is the closest variant: it updates feedback at chunk boundaries from the last token's final decoder output and gates this state into every decoder input of the next chunk, without dedicated memory tokens or a separate bank of recurrent state vectors.
RLT-1, the $B=1$ case, feeds back at every token and therefore requires sequential decoder work within each block.

\paragraph{Weight sharing across depth.}
Universal Transformers share weights across depth and optionally adapt the number of refinement steps by position \citep{dehghani2018universal}.
\citet{saunshi2025looped} study how repeated applications of a shared Transformer stack increase effective depth for reasoning, while recurrent-depth language models vary latent computation at inference time \citep{geiping2025latent}.
DeepLoop analyzes the effect of repeated parameter visits on residual scaling in weight-tied looped Transformers and derives a loop-aware scaling rule for the Post-LN architecture \citep{li2026deeploop}.
These models recur over depth at each position; RLT-1 recurs across tokens, and sharing weights between its encoder and decoder is optional.

\section{Conclusion}
We introduced the Recurrent Looped Transformer (RLT), which pairs a parallel causal encoder with a decoder that feeds its final state back at every token, so the computation path grows with sequence length at a fixed per-token cost.
With eight layers, RLT generalizes parity and swap-based $S_5$ tracking far beyond the training lengths, where an eight-layer Transformer is at chance, and its best splits outperform the Transformer on modular arithmetic beyond the training lengths.
Removing the feedback eliminates the gains on parity and swap-based $S_5$.
The encoder--decoder split is one design axis: parity generalizes best with one or three decoder layers, whereas swap-based $S_5$ needs at least two decoder layers and improves with each additional one up to the four tested.
The feedback interval $B$ is a second axis that trades parallelism against accuracy: $B=4$ processes known tokens in parallel within each chunk and keeps 64-bit parity at 99\%, while swap-based $S_5$ benefits most from per-token feedback.
Because the chunk size does not change the parameters, a model can be pretrained with large chunks for parallelism and continue with smaller chunks, down to $B=1$, in mid- and post-training.

\section*{Acknowledgement}

We used large language models to improve the wording of this work.

\vspace{2ex}
\setlength{\bibsep}{2pt}
\bibliographystyle{plainnat}
\bibliography{reference}
\clearpage
\appendix
\renewcommand{\appendixpagename}{\centering\huge Appendix}
\appendixpage
\counterwithin{theorem}{section}
\startcontents[section]
\begingroup
\small
\printcontents[section]{l}{1}{\setcounter{tocdepth}{2}}
\endgroup
\clearpage
\section{Architecture and Execution Details}
\subsection{Sharing encoder and decoder weights}
The reference tied RLT-1 sets $L_E=L_D=L$.
Encoder self-attention at layer $\ell$ and decoder SWA at layer $\ell$ share compatible query, key, value, and output projections; the corresponding FFNs also share weights.
The encoder attends to its causal context; decoder SWA attends to decoder activations within its window.
Decoder cross-attention uses separate query/output projections and the memory projections of Equation~\eqref{eq:memory}, adding computation beyond the shared attention and FFN.
The encoder and decoder retain separate normalizations, and the merge and readout are separate modules.

The two passes share weights but compute separate activations.
An untied $E_\theta,D_\phi$ uses separate weights and keeps the same recurrence.
Memory groups can be shared in either version; each decoder layer still maintains its own SWA cache.

\section{Computational Properties}
\label{app:execution-details}
\subsection{Prompt--response consistency}
\begin{proposition}[Invariance to the serving split]
Fix the parameters, token sequence, position convention, and initial state for an independent sequence.
Assume exact arithmetic, mathematically equivalent causal encoder execution, identical SWA windows and cache updates, and deterministic decoder operations.
Processing a prefix with batched encoder prefill followed by recurrent decoder updates gives the same states and next-token distributions as processing it incrementally.
The conditional distribution for a fixed token history is therefore unchanged when the prompt--response split moves.
\end{proposition}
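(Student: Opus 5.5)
The plan is an induction on the token index $t$ that compares the two execution schedules state by state. Fix the full token sequence $x_{1:S}$ and a split point $T$. Schedule A runs batched encoder prefill on $x_{1:T}$, then the recurrent decoder updates $H_1,\ldots,H_T$, then incremental steps for $t>T$. Schedule B processes every token incrementally from $t=1$. I will show that the two schedules produce identical encoder features $e_t$, identical memory entries $(k_t^g,v_t^g)$, and identical decoder states $H_t=(s_t,C_t^D)$ for every $t\le S$. Once this holds, Equation~\eqref{eq:distribution} gives identical next-token distributions, because $p_\Theta(\cdot\mid x_{1:t})=\mathcal G_\Theta(H_t)$ depends only on $s_t$. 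Since $T$ was arbitrary, the conditional distribution cannot depend on where the prompt ends.

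The first step concerns the encoder. By the assumption of mathematically equivalent causal encoder execution, the causal mask makes $E_\theta(x_{1:T})_t$ a function of $x_{1:t}$ alone. It therefore equals the output of $E_\theta^{\mathrm{step}}$ applied in order with cache $C^E$. Equation~\eqref{eq:memory} then produces the same $(k_t^g,v_t^g)$ in both schedules, because it depends only on $e_t$ and $t$ under the fixed position convention. It follows that $M^g_{\le t}$ agrees for every $t$ and every group $g$. This holds even though schedule A has all of $M_{\le T}$ materialized early: the decoder at position $t$ reads only $M_{\le t}$, by the prefix restriction in the cross-attention sublayer of Equation~\eqref{eq:block}.

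The second step is the decoder induction. The base case is $H_0=(s_\star,\varnothing)$ in both schedules, by Equation~\eqref{eq:init}. For the inductive step, assume $H_{t-1}$ agrees. Both schedules compute $H_t=F_t(H_{t-1})=\mathcal F_\Theta(H_{t-1};e_t,M_{\le t},t)$, and by the first step every argument agrees. Determinism of the decoder operations makes $\mathcal F_\Theta$ a genuine function, so $H_t$ agrees. This covers the SWA caches as well: each layer builds its current KV from $z_t^{\ell-1}$ and reads its history from $C_{t-1}^{D,\ell}$. The retention rule keeps positions $\max(1,t-W+2),\ldots,t$, which is a function of $t$ and $W$ only and is assumed identical in both schedules. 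At the boundary $t=T$, schedule A continues from both components of $H_T$, so responses use the same map $F_{T+1},F_{T+2},\ldots$ and the induction carries through unchanged.

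I expect the main obstacle to be making the encoder equivalence precise rather than the decoder induction. Batched prefill and cached incremental stepping compute different operation graphs. The statement handles this by assuming mathematically equivalent causal execution and exact arithmetic, so the proof should state exactly where these assumptions enter: causal masking forbids position $t$ from depending on $x_{>t}$, and positional transformations depend only on absolute position $t$ under the fixed convention. I would also note explicitly that none of the operations depends on $T$ itself. With that observation, the conclusion about moving the split follows immediately.
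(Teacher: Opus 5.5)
Your proposal is correct and follows the paper's proof. Causal encoder equivalence gives identical $e_t$ and $M_{\le t}$ in both schedules, and induction from $H_0=(s_\star,\varnothing)$ through the split-independent transition $F_t$ then gives identical $H_t$ and next-token distributions; your remarks on prefix-restricted cross-attention and the SWA retention rule just spell out what the paper's one-line inductive step leaves implicit.
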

\begin{proof}
Causal encoder equivalence gives the same $e_t$ and $M_{\le t}$ in both schedules.
Both start from $H_0=(s_\star,\varnothing)$.
If their states agree at $t-1$, Equation~\eqref{eq:recurrence} applies the same operations to the same inputs at $t$, so their states agree at $t$.
The result follows by induction, since the transition does not depend on the serving split.
\end{proof}

\subsection{Prefill work and sequential depth}
Let $C_E^{\mathrm{pf}}(T)$ be encoder prefill work, $C_M(T)$ the memory projection work, and $C_D^{\mathrm{step}}(t)$ a decoder evaluation over $t$ encoder-memory entries and at most $W$ decoder positions per layer, including merge overhead.
Then
\begin{equation}
 C_{\mathrm{prefill}}(T)=C_E^{\mathrm{pf}}(T)+C_M(T)
 +\sum_{t=1}^T C_D^{\mathrm{step}}(t). \label{eq:cost}
\end{equation}
For dense attention and width-proportional KV, a coarse arithmetic estimate is
\begin{equation}
 O\bigl((L_E+L_D)(Td^2+T^2d)+GTd^2
 +L_D T\min(W,T)d\bigr).
\end{equation}
Prefill requires $T$ sequential decoder transitions, each containing $L_D$ blocks.
These sequential dependencies can reduce hardware utilization even when arithmetic complexity is of the same order as a dense Transformer's.

Each new token evaluates $L_E+L_D$ blocks, plus merge and memory projection.
Attention work still grows with context length.
With effective KV width $d_{\mathrm{KV}}$, inference cache storage is approximately
\begin{equation}
 O\bigl((L_E+G)t\,d_{\mathrm{KV}}
 +L_D\min(t,W-1)d_{\mathrm{KV}}^D+d\bigr).
\end{equation}
Here $d_{\mathrm{KV}}^D$ is the effective decoder SWA KV width, and the $O(d)$ term stores the current recurrent output.
The SWA term counts retained history; current-position KV and training activations need additional storage.
Weight sharing reduces parameter storage but retains both passes and their caches.

\section{Hardware Execution}
\subsection{Batching independent sequences}
For a known training sequence or prompt, the encoder computes features and memory projections in parallel across positions, then the decoder processes positions in order.
Independent sequences can batch their next decoder updates together (Figure~\ref{fig:hardware-schedule}).
Each sequence keeps its own encoder prefix, SWA cache, recurrent output, and position.

At inference, each step consists of an encoder update, memory append, merge, decoder pass, and readout.
Batching requests increases matrix-operation sizes and allows weight reuse.
Small batches and uneven sequence lengths may limit utilization.

\begin{figure}[!htbp]
\centering
\begingroup
\definecolor{rblue}{HTML}{007AFF}
\definecolor{rorange}{HTML}{E87500}
\definecolor{rgray}{HTML}{566172}
\resizebox{\linewidth}{!}{%
\begin{tikzpicture}[font=\sffamily\small,>=stealth,
box/.style={draw=rblue!50,fill=rblue!5,rounded corners=3pt,align=center,minimum height=.7cm},
state/.style={box,draw=rorange!70,fill=rorange!8}]

\path[use as bounding box] (0,0) rectangle (16,5.65);
\node[anchor=west,font=\sffamily\bfseries] at (0,5.35) {Batching decoder updates from independent sequences};
\node[box,text width=3.1cm,minimum height=2.9cm] (enc) at (1.8,2.9) {\textbf{Causal encoder}\\[6pt]Encode known tokens\\in parallel\\[6pt]Build each sequence's\\prefix KV memory};
\foreach \col/\x in {1/6,2/8.4,3/10.8,4/13.2}{
\fill[rblue!5,rounded corners=4pt] (\x-.7,1.35) rectangle (\x+.7,4.4);
\node[text=rgray,font=\sffamily\footnotesize] at (\x,4.65) {Batch \col};}
\foreach \row/\y in {A/3.85,B/2.9,C/1.95}{
\node[anchor=east] at (5,\y) {Seq. \row};
\foreach \col/\x in {1/6,2/8.4,3/10.8,4/13.2}{
\node[state,minimum width=1cm] (n\row\col) at (\x,\y) {$D_{\phi}$};}
\draw[->,rorange,thick] (n\row1)--(n\row2);
\draw[->,rorange,thick] (n\row2)--(n\row3);
\draw[->,rorange,thick] (n\row3)--(n\row4);
\draw[->,rorange,thick] (n\row4)--(14.7,\y);}
\draw[->,rblue,thick] (enc.north) -- (1.8,4.65) -- (3.8,4.65);
\node[anchor=west,text=rblue,font=\sffamily\footnotesize] at (3.9,4.98) {Each update reads its encoder prefix and decoder SWA window};
\node[anchor=west,text=rgray] at (4.7,.85) {Columns: independent updates can share one batched kernel.};
\node[anchor=west,text=rgray] at (0,.15) {Rows: updates follow token order within each sequence.};
\end{tikzpicture}}
\endgroup
\caption{\textbf{Batching decoder updates across sequences.}
Columns group independent updates into a batch; rows follow each sequence in token order.
Each update reads its own encoder prefix and decoder SWA window.
Known tokens can be encoded in parallel before replay; generated tokens are encoded as they arrive.}
\label{fig:hardware-schedule}
\end{figure}

\subsection{Memory traffic and parameter reuse}
Encoder KV stays fixed while the prefix and parameters are unchanged.
Decoder layers reuse that memory and append their own KV to separate SWA caches.
Fewer memory groups reduce KV storage and projection work; more groups allow different transformations for each layer.
Sharing encoder and decoder weights may also help keep weights resident on the accelerator.

Normalization, gating, state projection, and residual addition can be fused into one kernel.

\subsection{Training memory and numerical agreement}
Activation checkpointing reduces stored activations by recomputing them during backpropagation through time (BPTT).

\citet{zhang2026kernelmismatch} describe how the executed policy depends on precision, cache construction, reductions, and sampling transforms as well as weights.
Sampler and trainer implementations must align positional conventions, stochastic behavior, and cache contents, and check numerical agreement across execution modes.

\section{Training Objectives}
\label{app:training-objectives}
\subsection{Autoregressive pretraining}
Full-sequence next-token prediction is the base objective:
\begin{equation}
 \mathcal L_{\mathrm{PT}}(\Theta)=
 -\mathbb E_{x_{1:S}}\left[\frac{1}{S-1}\sum_{t=1}^{S-1}
 \log p_\Theta(x_{t+1}\mid x_{1:t})\right]. \label{eq:pretrain-loss}
\end{equation}
Initialize $H_0=(s_\star,\varnothing)$, compute causal encoder features, and update the decoder at positions $1,\ldots,S-1$.
Every non-BOS target contributes to the loss; the encoder, memory projections, merge, and decoder train jointly.
At each independent document, reset both decoder and encoder state, reset positions, and prevent attention across document boundaries.
When training on segments, specify the initial context and state: dropping an earlier state changes the history on which the likelihood is conditioned.

\subsection{Supervised fine-tuning}
Let $m_{t+1}=1$ for assistant targets and $0$ for user, system, tool, or padding targets.
For examples with at least one selected target, optimize
\begin{equation}
 \mathcal L_{\mathrm{SFT}}(\Theta)=
 -\mathbb E_x\left[\frac{1}{\sum_{t=1}^{S-1}m_{t+1}}
 \sum_{\substack{1\le t<S\\m_{t+1}=1}}
 \log p_\Theta(x_{t+1}\mid x_{1:t})\right]. \label{eq:sft-loss}
\end{equation}
The mask selects loss terms; user, system, and tool tokens still update the decoder state.
Gradients from assistant losses flow through these updates, including encoder memory and decoder KV.
The model continues from the existing state when an assistant turn begins.

\subsection{Policy gradients and replay}
Let $c=x_{1:T}$ be a prompt and $y=(y_1,\ldots,y_N)$ a sampled response, with $y_i=x_{T+i}$.
The policy is
\begin{equation}
 \pi_\Theta(y\mid c)=\prod_{i=1}^{N}
 p_\Theta(y_i\mid c,y_{<i}).
\end{equation}
For a sequence reward $R(c,y)$ independent of $\Theta$, define $J(\Theta)=\mathbb E_{c,\,y\sim\pi_\Theta}[R(c,y)]$.
Its on-policy score-function gradient is
\begin{equation}
 \nabla_\Theta J=
 \mathbb E_{c,\,y\sim\pi_\Theta}\left[
 (R(c,y)-b(c))\sum_{i=1}^{N}
 \nabla_\Theta\log p_\Theta(y_i\mid c,y_{<i})\right],
 \label{eq:policy-gradient}
\end{equation}
where $b(c)$ is a response-independent baseline, held constant when taking the policy gradient.
During replay, the sampled tokens are fixed and gradients pass through the recurrent computation used to evaluate their log-probabilities.

Figure~\ref{fig:rl-replay} shows the replay procedure.
The sampler records each action's log-probability under the distribution $\mu$ that sampled it.
The trainer rebuilds encoder features, recurrent outputs, and decoder SWA caches under the current parameters $\Theta$, starting from the initial state and processing the full prompt and sampled response prefix.
The resulting action probabilities give the ratios
\begin{equation}
 r_i(\Theta)=\exp\left(
 \log p_\Theta(y_i\mid c,y_{<i})
 -\log\mu(y_i\mid c,y_{<i})\right). \label{eq:ratio}
\end{equation}
Here the target is the raw model policy $p_\Theta$.
The behavior probabilities must include any temperature scaling, truncation, and renormalization used by the sampler.
Exact importance sampling requires $p_\Theta(\cdot\mid h)\ll\mu(\cdot\mid h)$: at each relevant history, every action with positive target probability must also have positive behavior probability.
Top-$k$ or top-$p$ sampling generally excludes actions with positive probability under an untruncated softmax target.
If the target policy itself uses a sampling transform, apply it consistently in the objective and ratios.

To preserve both forward probabilities and their parameter derivatives, the trainer differentiates the recurrent computation or an equivalent implementation \citep{zhang2026kernelmismatch}.
The recorded behavior log-probabilities stay fixed across parameter updates, since they describe the policy that sampled the actions.

For fixed prompts and sequence rewards, exact trajectory importance sampling uses $\prod_i r_i$, subject to support and integrability conditions.
Tokenwise clipping and other surrogate losses may introduce bias even with correct replay probabilities.

\begin{figure}[!htbp]
\centering
\begingroup
\definecolor{rblue}{HTML}{007AFF}
\definecolor{rorange}{HTML}{E87500}
\definecolor{rgray}{HTML}{566172}
\resizebox{\linewidth}{!}{%
\begin{tikzpicture}[font=\sffamily\small,>=stealth,
box/.style={draw=rblue!50,fill=rblue!5,rounded corners=3pt,align=center,minimum height=.7cm},
state/.style={box,draw=rorange!70,fill=rorange!8}]

\path[use as bounding box] (0,0) rectangle (16,6.6);
\node[anchor=west,font=\sffamily\bfseries] at (0,6.3) {Replaying a sampled history under current parameters};
\node[box,text width=6.7cm,minimum height=.8cm] at (3.7,5.4) {\textbf{SAMPLER}\quad behavior policy $\mu$};
\node[state,text width=6.7cm,minimum height=.8cm] at (12,5.4) {\textbf{TRAINER}\quad current parameters $\Theta$};
\node[box,text width=6.7cm,minimum height=1.05cm] (sample) at (3.7,4.05) {Process the full prompt; sample a response\\Record tokens $y$ and $\log\mu(y_i\mid c,y_{<i})$};
\node[state,text width=6.7cm,minimum height=1.05cm] (replay) at (12,4.05) {Rebuild encoder KV, $s_t$, and decoder SWA KV\\Replay full prompt + fixed response tokens};
\draw[->,thick] (sample)-- node[above,font=\sffamily\footnotesize] {$c,y$} (replay);
\node[box,text width=6.7cm,minimum height=.85cm] (old) at (3.7,2.6) {Keep recorded behavior log-probabilities\\as the ratio denominator};
\node[state,text width=6.7cm,minimum height=.85cm] (cur) at (12,2.6) {Compute current $\log p_\Theta(y_i\mid c,y_{<i})$\\Full BPTT; checkpoint as needed};
\draw[->,rblue,thick] (sample)--(old);
\draw[->,rorange,thick] (replay)--(cur);
\node[draw=rgray!50,rounded corners=3pt,fill=gray!3,minimum width=11cm,minimum height=.85cm] (ratio) at (7.85,1.1) {$r_i(\Theta)=\exp\!\left[\log p_\Theta(y_i\mid c,y_{<i})-\log\mu(y_i\mid c,y_{<i})\right]$};
\draw[->,rblue,thick] (old.south)--(3.7,1.75)--(4.5,1.75)--(4.5,1.55);
\draw[->,rorange,thick] (cur.south)--(12,1.75)--(11.2,1.75)--(11.2,1.55);
\node[anchor=west,text=rgray,font=\sffamily\footnotesize] at (0,.15) {After every parameter update: rebuild current-policy KV and states; retain original behavior log-probabilities.};
\end{tikzpicture}}
\endgroup
\caption{\textbf{Replaying a sampled history under current parameters.}
The sampler records behavior probabilities.
The trainer rebuilds the full prefix, including recurrent outputs and decoder SWA caches, to evaluate the current policy.
The ratio compares the current action probability with the recorded behavior probability.}
\label{fig:rl-replay}
\end{figure}

\section{Multi-turn Inference and Cache Reuse}
A conversation forms one token history beginning at BOS.
User messages, tool results, role delimiters, and assistant tokens all receive encoder and decoder updates.
For example, when a tool returns, encode its output using the cached prefix, then process those tokens through the decoder in order before resuming generation.
Reset state only for a new independent sequence or an explicitly defined context reset.

To resume from a saved prefix, retain the encoder cache $C_t^E$, cross-attention memory $M_{\le t}$, decoder state $H_t=(s_t,C_t^D)$, tokens and positions, SWA window convention, and model version.
These values can be reused at fixed weights regardless of where the prompt ends.
Reproducing the same sampled outputs also requires the sampler's random state.
If only text is saved, replay it to rebuild the state.

Editing or truncating an earlier token invalidates the later cached state.
Recompute from a valid checkpoint before the edit; this also applies to chat templates that rewrite earlier tokens.
In multi-turn RL, user and tool tokens update the state and carry gradients, but receive no action importance-ratio factors because the policy did not sample them.

\clearpage
\section{Execution Procedures}
The complete decoder state is $H_t=(s_t,C_t^D)$, initialized as $H_0=(s_\star,\varnothing)$.
Encoder continuation state and encoder-derived cross-attention memory are maintained separately.
All schedules use the block order and window convention in Equation~\eqref{eq:block}.

\subsection{Recurrent prompt prefill}
\begin{enumerate}
\item Start an independent sequence with $x_1=\mathrm{BOS}$ and
      $H_0=(s_\star,\varnothing)$; initialize encoder caches and positions.
\item Encode $x_{1:T}$ causally in parallel and construct its encoder KV
      memory.
\item For $t=1,\ldots,T$, compute
      $H_t=D_\phi(\Merge(e_t,s_{t-1});M_{\le t},C_{t-1}^D,t)$.
      Each decoder layer reads current KV and the retained SWA history.
      Both attention masks exclude positions after $t$.
\item Predict the first response token from $s_T$. Preserve $H_T$, encoder
      caches, memory, and positional metadata for the next update.
\end{enumerate}

\subsection{Generation and new external inputs}
Sample $x_{t+1}$ from the distribution predicted by $s_t$, then update the encoder cache, memory, and complete decoder state to $H_{t+1}$.
This state predicts $x_{t+2}$.
Each consumed token receives exactly one recurrent update and one KV insertion at every decoder SWA layer.

Multiple tokens from a user or tool can be encoded in a causal batch conditioned on the existing prefix.
The decoder processes them in token order from the saved state, updating both $s_t$ and $C_t^D$.

If generation stops at a length limit before consuming its last emitted token, record that token separately.
The cache represents the consumed prefix; process the pending token once before continuing with later tokens.

\subsection{Pretraining and SFT}
\begin{enumerate}
\item Compute $e_{1:S-1}$ using a causal encoder and construct memory.
\item Initialize $H_0=(s_\star,\varnothing)$; unroll all positions
      $1,\ldots,S-1$, including every layerwise SWA cache update.
\item Accumulate all valid next-token losses for pretraining, or only
      assistant-target losses for SFT. All context tokens receive state
      updates.
\item Normalize each example by its number of selected targets, then
      average examples as in Equations~\eqref{eq:pretrain-loss}
      and~\eqref{eq:sft-loss}. Backpropagate through the complete
      computation, using checkpointing if required.
\end{enumerate}
Examples without selected targets are excluded from the loss average.
Token averaging would give more weight to examples with more selected targets.

Packed independent sequences need separate recurrent states, SWA caches, encoder caches, positions, and attention masks.
Both attention mechanisms must stay within document boundaries, even where losses are masked.

\subsection{Current-policy RL replay}
\begin{enumerate}
\item Read the rollout token history, action mask, actual behavior
      log-probabilities, and sampling metadata. Behavior probabilities
      include all sampling transforms and renormalization.
\item Hold current parameter values fixed throughout forward replay and
      backward. Use the target policy's positional, SWA, and execution
      conventions, with parameter gradients enabled.
\item Recompute encoder representations of the known history. From
      $H_0=(s_\star,\varnothing)$, rebuild the recurrent output and every
      decoder SWA cache through all prompt tokens.
\item Replay subsequent tokens in order. Before consuming each sampled
      action, read its current-policy log-probability from the preceding
      state. Include EOS if sampled. Consume every intervening external
      token needed for later predictions.
\item Form the chosen RL loss from action log-probabilities, rewards or
      advantages, and any required behavior ratios. Apply action factors
      only to tokens sampled by the policy.
\item Backpropagate, update parameters, and invalidate parameter-dependent
      encoder and decoder caches before the next exact current-policy
      replay.
\end{enumerate}
The action mask selects loss terms while preserving gradients through user and tool tokens.
Prompt replay under \texttt{no\_grad}, or detaching prompt KV, preserves forward values but drops the prompt-state gradients required by full BPTT.

A deterministic forward pass, for example with dropout disabled, gives a reproducible probability for each action.
For a stochastic policy, specify whether action probabilities condition on internal randomness or average over it.
Each replay evaluates one realization; marginal probabilities require averaging over that randomness.

\section{Causality and Gradient Paths}
\begin{proposition}[Causality]
With a causal encoder, prefix-restricted encoder memory, and causal
decoder SWA, $H_t=(s_t,C_t^D)$ depends only on $x_{1:t}$ and the parameters,
including $s_\star$, under deterministic execution.
\end{proposition}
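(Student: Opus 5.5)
The argument is a strong induction on $t$, run alongside a separate claim about the encoder. First I will establish that, for every $j\le t$, the encoder output $e_j$ and the memory entries $(k_j^g,v_j^g)$ depend only on $x_{1:j}$ and the parameters. This follows layer by layer: within each encoder layer, causal self-attention at position $j$ reads only positions $i\le j$, and the FFN and normalizations act position-wise. By induction over encoder layers, every activation at position $j$ is a function of $x_{1:j}$, and Equation~\eqref{eq:memory} then gives the memory entries as functions of $e_j$ and $j$ only. Hence the prefix memory $M_{\le t}$ is a function of $x_{1:t}$.

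For the decoder, the base case is $H_0=(s_\star,\varnothing)$ from Equation~\eqref{eq:init}, which depends only on the parameters. For the inductive step, assume $H_{t-1}=(s_{t-1},C_{t-1}^D)$ depends only on $x_{1:t-1}$. The merge $u_t=\Merge(e_t,s_{t-1})$ in Equation~\eqref{eq:merge} then depends on $x_{1:t}$. I will next run through the $L_D$ blocks of Equation~\eqref{eq:block} in order, with an inner induction on $\ell$.

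At layer $\ell$, the quantities $q_t^{D,\ell}$, $k_t^{D,\ell}$ and $v_t^{D,\ell}$ are functions of $z_t^{\ell-1}$ and $t$. The SWA sublayer reads the current KV together with the retained entries for positions $\max(1,t-W+1),\ldots,t-1$, and those retained entries lie in $C_{t-1}^D$. The cross-attention sublayer reads only $M_{\le t}^{g(\ell)}$, by prefix restriction. The FFN is position-local. So $z_t^\ell$ depends only on $x_{1:t}$, and therefore so does $s_t=z_t^{L_D}$. The new cache $C_t^D$ keeps the last $W-1$ entries, drawn from $C_{t-1}^D$ and the current KV, so it too depends only on $x_{1:t}$. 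The deterministic-execution assumption ensures each operation is a genuine function of its inputs, with no hidden randomness.

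The main obstacle is bookkeeping rather than mathematics. I need to check that no operation reads a position after $t$: the current-position KV must be formed before SWA so that there is no circular dependency, and the cross-attention mask must exclude later encoder entries even in batched prefill, where the full prompt memory is present. I will handle both by citing the explicit index ranges $j\le t$ in Equation~\eqref{eq:block} and the restriction to $M_{\le t}$. Finally, I will note that the resulting next-token distribution in Equation~\eqref{eq:distribution} is also causal.
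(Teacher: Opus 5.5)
Your proposal is correct and follows essentially the same induction on $t$ as the paper: encoder causality makes $e_j$ and $M_{\le t}$ functions of the prefix, and the step from $H_{t-1}$ to $H_t$ uses only $e_t$, $M_{\le t}$, the current-position KV, and entries already held in $C_{t-1}^D$. Your inner induction over decoder layers and your derivation of encoder causality from the layer structure (which the paper simply assumes) spell out steps the paper compresses into a few sentences.
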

\begin{proof}
Encoder causality implies that each $e_j$ depends only on $x_{1:j}$, hence
$M_{\le t}$ depends only on $x_{1:t}$. The initial state contains no
future-token information. Suppose $H_{t-1}$ depends only on $x_{1:t-1}$.
The next update uses this state, $e_t$, and $M_{\le t}$. At each decoder
layer, current KV is formed from the causally available layer input,
and SWA reads no position greater than $t$. Appending current KV and
evicting old entries introduce no future information. Thus both $s_t$
and $C_t^D$ depend only on $x_{1:t}$, completing the induction.
\end{proof}

For teacher-forced encoder features held fixed, use a fixed-slot
representation of the decoder cache, with validity masks during warm-up,
and define
\begin{equation}
 \mathcal J_t=\frac{\partial H_t}{\partial H_{t-1}}.
\end{equation}
Then
\begin{equation}
 \frac{\partial H_t}{\partial H_j}
 =\mathcal J_t\mathcal J_{t-1}\cdots\mathcal J_{j+1},\qquad j<t,
\end{equation}
where
\begin{equation}
 \mathcal J_t=
 \begin{pmatrix}
 \dfrac{\partial s_t}{\partial s_{t-1}} &
 \dfrac{\partial s_t}{\partial C_{t-1}^D}\\[6pt]
 \dfrac{\partial C_t^D}{\partial s_{t-1}} &
 \dfrac{\partial C_t^D}{\partial C_{t-1}^D}
 \end{pmatrix}.
\end{equation}
The block Jacobian includes paths through both the recurrent output and decoder KV.

Encoder memory retains information from past encoder outputs; decoder SWA provides direct access to recent decoder projections.
Evicted decoder entries may still affect later computation through states or activations that read them before eviction.

Full parameter derivatives also include the parameter dependence of
encoder features, memory, every transition, and decoder KV projections.
Let $B_T^E$ denote all encoder-side boundary tensors needed for response
continuation, including encoder continuation KV and encoder-derived
cross-attention memory. For a response loss
$\ell(\Theta,H_T(\Theta),B_T^E(\Theta))$, the chain rule gives
\begin{equation}
 \frac{d\ell}{d\Theta}
 =\frac{\partial\ell}{\partial\Theta}
 +\frac{\partial\ell}{\partial H_T}\frac{\partial H_T}{\partial\Theta}
 +\frac{\partial\ell}{\partial B_T^E}\frac{\partial B_T^E}{\partial\Theta}.
\end{equation}
The first term holds the boundary arguments fixed; the others account for their prefix computation.
In particular,
\begin{equation}
 \frac{\partial\ell}{\partial H_T}\frac{\partial H_T}{\partial\Theta}
 =\frac{\partial\ell}{\partial s_T}\frac{\partial s_T}{\partial\Theta}
 +\frac{\partial\ell}{\partial C_T^D}\frac{\partial C_T^D}{\partial\Theta}.
\end{equation}
Detaching $s_T$, decoder KV, or encoder boundary tensors removes the corresponding gradient terms, even when forward probabilities stay unchanged.

\section{When Cached States Can Be Reused}
A saved prefix can replace forward replay when it matches the current parameters, consumed tokens, positions, initialization, SWA window, and policy execution settings.
Save the recurrent output, each decoder SWA cache, encoder continuation state, encoder memory, and associated metadata.
The saved prefix may come from the sequence beginning or from an exact checkpoint made with the current parameters and execution settings.

A detached cache can hold correct values while lacking the computation graph needed for training.
Full BPTT requires retaining or recomputing the prefix graph so gradients reach every parameter-dependent boundary tensor.

Activation checkpointing within one update recomputes activations under the same parameters and stochastic state.
Mutable cache implementations must restore the saved contents during recomputation and preserve activations needed for backward.

Truncated BPTT preserves numeric values while cutting selected gradient dependencies.
A complete decoder-state detach is
\begin{equation}
 \widetilde H_t=(\sg(s_t),\sg(C_t^D)).
\end{equation}
Detaching only $s_t$ leaves possible paths through decoder KV; detaching only decoder KV leaves paths through the recurrent output.
Encoder-side boundary tensors can provide additional paths across the same boundary.

SWA eviction removes old KV from future attention windows.
Full BPTT still differentiates earlier computations that used those entries, so training may need more activation storage than the inference cache alone.

After a weight update, recompute the prefix values and gradient graph before the next exact training pass.

\section{RLT-2: Chunk-Parallel Hidden-State Feedback}
\label{sec:rlt2}
Section~\ref{sec:feedback-variants} defines the RLT-2 merge and boundary update, and Figure~\ref{fig:architecture-chunk} shows one chunk; this appendix gives its masks, training and prefill procedure, incremental generation, and costs.
Section~\ref{sec:experiments} and Appendix~\ref{app:feedback-variants} report the experiments.

\subsection{Chunk operator and causal masks}
Choose a fixed chunk size $B\ge1$, count BOS as position one, and anchor boundaries to the start of each independent sequence.
For a known prefix of length $T$, define
\begin{equation}
 K=\lceil T/B\rceil,\qquad a_k=(k-1)B+1,\qquad
 b_k=\min(kB,T),\qquad I_k=\{a_k,\ldots,b_k\}.
 \label{eq:chunk-indices}
\end{equation}
Starting from empty decoder caches, the merged inputs $z_{I_k}^0$ of Equation~\eqref{eq:chunk-merge} pass through the chunk operator and readout:
\begin{align}
 y_{I_k},C_{b_k}^D
 &=D_\phi^{\mathrm{chunk}}(z_{I_k}^0;M,C_{a_k-1}^D,I_k),\\
 p_\Theta(x_{t+1}\mid x_{1:t})
 &=\softmax\bigl(W_o\Norm_o(y_t)\bigr)_{x_{t+1}},\qquad t\in I_k.
\end{align}
The chunk operator applies Equation~\eqref{eq:block} layer by layer, using all known positions in $I_k$ together.
At each layer, query $t$ reads decoder keys at $\max(1,t-W+1)\le j\le t$, including retained keys from earlier chunks, and encoder memory only at $j\le t$.
Neither SWA nor encoder positions reset at a chunk boundary.
The chunk may be longer or shorter than the SWA window.

Each decoder layer projects KV for all positions from that layer's inputs before masked attention, so causal SWA can process those positions in parallel.
The last-position output carries the chunk's causal computation to the next chunk, without pooling or an additional summary token.
A sequence fitting in one chunk still receives the learned initial-state merge, whereas RLT-0 removes that merge and its parameters.

\subsection{Training and prefill algorithm}
For teacher-forced training, $x_{1:T}$ denotes the consumed input tokens, with a next-token target wherever one is available.
The same forward procedure computes prompt prefill and known-history policy replay:
\begin{enumerate}
\item Encode the known tokens causally and build encoder KV memory.
Initialize $h=s_\star$, empty decoder SWA caches, and global position one.
\item Visit chunks in increasing order.
Compute $\Norm_s(h)$ and $W_s\Norm_s(h)$ once for the chunk; broadcast them to its token-specific gated merges.
\item For each decoder layer, project the chunk's Q/K/V together, apply causal SWA using retained and current-chunk KV, then apply prefix-masked memory attention and the FFN.
Retain the last $W-1$ decoder KV entries per layer for continuation.
\item Compute logits and the selected next-token losses at every position.
After a complete chunk, set $h$ to its last decoder output.
A partial final chunk leaves $h$ at the preceding complete boundary.
\item For training, accumulate the sequence loss and backpropagate through all chunks before updating parameters.
For prefill, save caches, $h$, the consumed-token count, and the last output or next-token logits.
\end{enumerate}
Full BPTT differentiates through the boundary state, cross-chunk decoder KV, and encoder memory.
Chunking changes the forward dependencies; gradient truncation and optimizer-update frequency are separate choices.
For RL, replay uses the same $B$, boundary anchor, masks, and positions as sampling, rebuilding the state under the current parameters before evaluating action probabilities.

\subsection{Incremental generation and partial chunks}
Let $t$ be the number of consumed tokens and $r=t\bmod B$ the current chunk offset.
The saved feedback register is $h_{\lfloor t/B\rfloor}$, including when the prompt ends inside a chunk.
To consume the next observed or sampled token at position $j=t+1$:
\begin{enumerate}
\item Update the causal encoder and append its memory KV.
Merge $e_j$ with the saved boundary state, keeping that state fixed throughout the chunk.
\item Run one decoder step with the usual causal SWA caches and prefix memory, producing $y_j$, updated KV caches, and next-token logits.
\item If $j\bmod B=0$, replace the feedback register by $y_j$; otherwise preserve it.
Advance the consumed-token count to $j$.
\end{enumerate}
The latest $y_j$ always predicts the next token, whether or not it updates the feedback register.
For example, with $B=4$ and a six-token prompt, tokens 5--8 all merge with $h_1=y_4$; processing tokens 5 and 6 during prefill does not replace that register with $y_6$.
The output $y_6$ predicts token 7, and $y_8$ becomes $h_2$ only after token 8 is consumed.
Known continuation tokens can be batched up to the next fixed boundary, then the procedure continues with the new state.
Padding, message boundaries, and the prompt--response split do not commit a partial chunk.
Independent packed sequences each maintain their own boundary anchor and state.

With fixed weights and exact arithmetic, batched and incremental execution agree if they use the same mathematical operators and stochastic behavior.
Stochastic operations must be disabled or use identical random draws for corresponding operations.
Each position then has the same encoder prefix, preceding boundary state, and causally available layerwise KV in both schedules.
Induction over layers within a chunk and then over chunks gives the same outputs and boundary states.
Changing $B$ or shifting the boundary anchor changes the model computation and invalidates a saved continuation state.

Chunkwise prefill and tokenwise decoding can use different matrix and attention kernels and floating-point reduction orders, altering logits and boundary states even under deterministic execution.
For RL, compare chunkwise replay against a tokenwise reference that reproduces the rollout's encoder, decoder, and readout execution, including precision, batch layout, kernel choices, and stochastic settings.

\subsection{Training and inference efficiency}
Table~\ref{tab:rlt-efficiency} assumes equal encoder and decoder depths, width, windows, and memory groups, with $L_D\ge1$.
Its decoder block stages count sequential dependencies in the layerwise schedules, excluding attention-kernel reductions, communication, and hardware-dependent latency.

At fixed dimensions, the common forward arithmetic with dense encoder and memory attention is
\begin{equation}
 O\bigl((L_E+L_D)Td^2+(L_E+L_D)T^2d
       +GTd^2+L_DT\min(W,T)d\bigr).
\end{equation}
RLT-1 adds $O(Td^2)$ merge work.
RLT-2 also adds $O(Td^2)$ merge work for $T$ token-specific gates, but reuses state normalization and projection within each chunk.
Its $K$ sequential decoder batches permit larger matrix operations and weight reuse across up to $B$ positions.
Total decoder arithmetic still covers all $T$ tokens, and full BPTT requires activations or recomputation across all chunks.

During generation at context length $t$, each variant evaluates its encoder and decoder stacks per token and reads the available encoder memory.
RLT-2 can reuse its feedback projection until the next boundary; the token-specific gate, decoder blocks, and KV writes remain per-token operations.
The shared inference-cache scaling is
\begin{equation}
 O\bigl((L_E+G)t\,d_{\mathrm{KV}}+
 L_D\min(t,W-1)d_{\mathrm{KV}}^D\bigr).
\end{equation}
RLT-1 and RLT-2 add $O(d)$ feedback storage; RLT-2 also retains a chunk offset and may cache $O(d)$ normalized/projected state.
Chunk batches require temporary activations and current-chunk KV during prefill.

A controlled benchmark would sweep $B\in\{1,2,4,8,16,32\}$ and RLT-0 at fixed data, stack dimensions, precision, optimizer, batch size, and attention kernels.
It would measure task accuracy, training throughput, prefill latency by prompt length, generation time per token by cached context length, and peak device memory.

The $B=1$ run would check RLT-1 equivalence, and future-token perturbations would check causality.
At unchanged parameters, maximum and RMS differences in logits, selected-action log-probabilities, and boundary states across chunkwise, partial-chunk, and tokenwise execution would separate execution differences from the effects of a policy update.
A differentiable reference should also compare full-BPTT parameter gradients, including paths through boundary states, decoder KV, and encoder memory.

\section{Experimental Details}
\label{app:experiment-details}

\subsection{Complete six-task study at 2,000 steps}
\label{app:depth8-complete}

This appendix reports all 108 runs of the eight-layer study: six architectures on six tasks with initialization seeds 42, 43, and 44, each trained for 2,000 optimizer steps.
It gives validation curves at training lengths and length generalization from each run's best in-distribution checkpoint.

\subsubsection{Models, tasks, and evaluation protocol}
Models, optimizer settings, batch sizes, and execution follow Section~\ref{sec:experiments}; the cosine schedule reaches $5\times10^{-6}$ at step 2,000.
Validation runs every 100 steps, and each run sees 1,024,000 training examples.

\begin{itemize}
\item \textbf{Addition:} randomly sample 1--8-digit operands and serialize their digits in reverse order.
The 256 validation examples use the same range.
We measure teacher-forced answer-token accuracy: each prediction receives the correct preceding tokens, and scoring includes answer formatting and EOS while excluding prompt and padding positions.
\item \textbf{Parity:} train on binary strings of lengths 3--40 and supervise the final parity bit.
Validation pools 256 examples at each of lengths 32, 36, and 40.
\item \textbf{Modular arithmetic:} evaluate expressions modulo five using $+$, $-$, and $\times$.
The flat variant respects multiplication precedence and trains on odd expression lengths 3--39, with validation lengths 33, 35, and 39.
The bracketed variant uses expression trees, trains on lengths 3--40, and validates at 32, 36, and 40.
Each validation length has 256 examples; only the final value is supervised.
\item \textbf{$S_5$ state tracking:} compose 32 permutations and supervise the running product after each input, following \citet{grazzi2024unlocking}.
Standard inputs range over all 120 permutations; the swaps variant uses the identity and the ten single transpositions.
Each variant has 256 validation sequences of length 32.
We report final-state accuracy as the primary metric and prefix-token accuracy in Table~\ref{tab:experiment-prefix}.
\end{itemize}

For each run, we pool counts across its configured validation lengths before averaging across initializations.
Figure~\ref{fig:experiment-curves} shows all three seeds at each optimizer step, and Table~\ref{tab:experiment-matched} compares every model at step 2,000.

\begin{figure}[p]
\centering
\includegraphics[width=\linewidth]{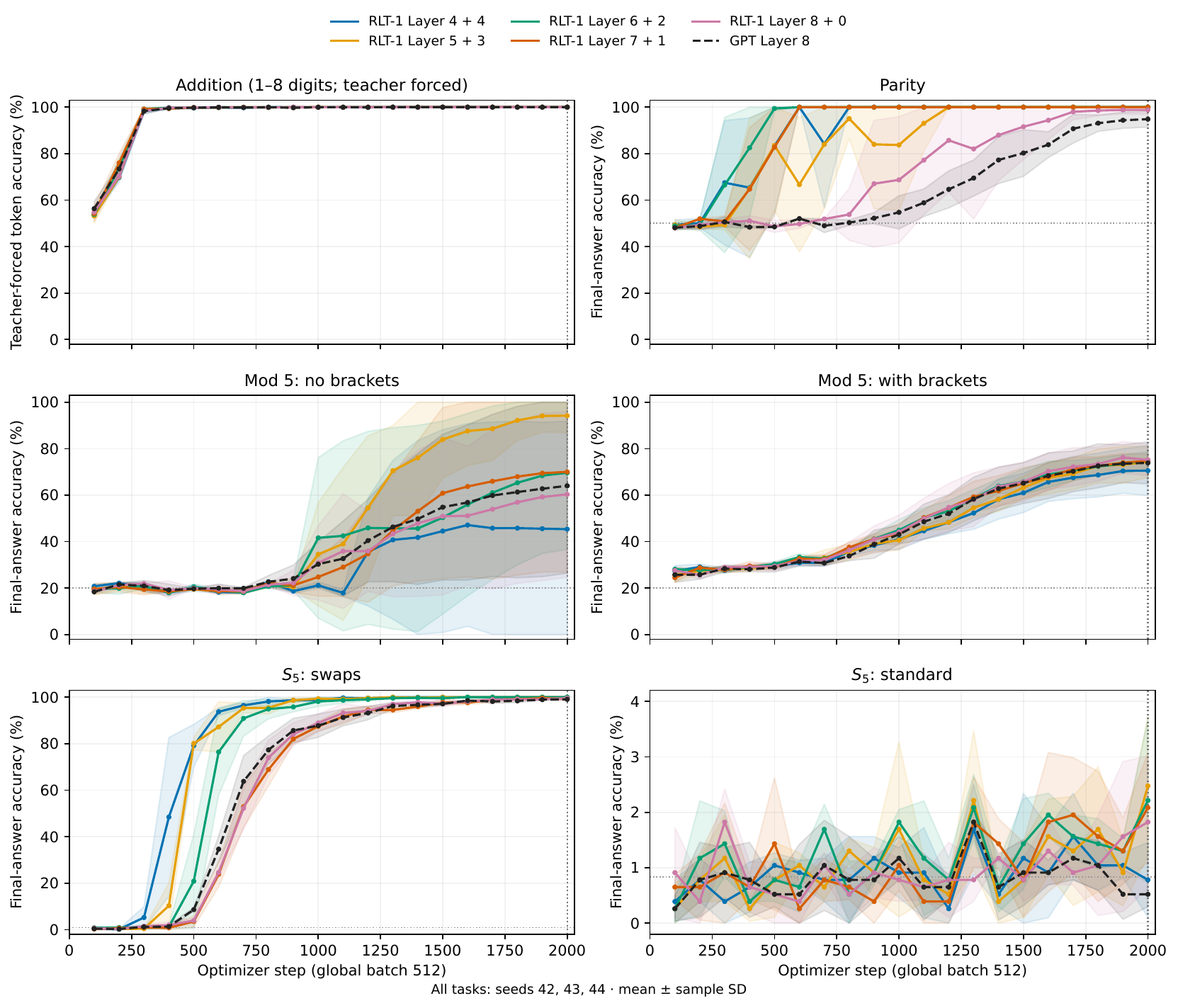}
\caption{\textbf{Validation accuracy during training, three seeds for every task.}
Addition measures teacher-forced answer-token accuracy; the other tasks measure final-label or final-state accuracy.
Lines and bands show mean $\pm$ sample SD across seeds 42, 43, and 44 ($n=3$ at every step); bands are clipped to the accuracy range.
Curves are unsmoothed and extend through 2,000 steps.
Horizontal dotted lines show uniform-prediction accuracy.
Standard $S_5$ uses a narrower vertical scale.}
\label{fig:experiment-curves}
\end{figure}

\begin{table}[t]
\centering
\caption{\textbf{Validation accuracy (\%) after 2,000 optimizer steps.}
All entries are mean $\pm$ sample SD over seeds 42, 43, and 44.
Addition uses teacher-forced answer tokens; the formal tasks use final labels or states.
Every run has consumed 1,024,000 training examples.}
\label{tab:experiment-matched}
\resizebox{\linewidth}{!}{\begin{tabular}{lrrrrrr}
\toprule
Task & \shortstack{RLT-1\\Layer 4 + 4} & \shortstack{RLT-1\\Layer 5 + 3} & \shortstack{RLT-1\\Layer 6 + 2} & \shortstack{RLT-1\\Layer 7 + 1} & \shortstack{RLT-1\\Layer 8 + 0} & \shortstack{GPT\\Layer 8} \\
\midrule
Addition & $100.00\,\pm\,0.00$ & $100.00\,\pm\,0.00$ & $100.00\,\pm\,0.00$ & $100.00\,\pm\,0.00$ & $100.00\,\pm\,0.00$ & $100.00\,\pm\,0.00$ \\
Parity & $100.00\,\pm\,0.00$ & $100.00\,\pm\,0.00$ & $100.00\,\pm\,0.00$ & $100.00\,\pm\,0.00$ & $98.83\,\pm\,1.92$ & $94.84\,\pm\,3.43$ \\
Mod 5, no brackets & $45.36\,\pm\,46.43$ & $94.18\,\pm\,7.29$ & $69.62\,\pm\,33.01$ & $70.01\,\pm\,43.15$ & $60.33\,\pm\,35.70$ & $64.02\,\pm\,37.64$ \\
Mod 5, brackets & $70.53\,\pm\,10.75$ & $74.35\,\pm\,6.71$ & $74.87\,\pm\,3.72$ & $74.78\,\pm\,3.01$ & $75.17\,\pm\,6.78$ & $73.87\,\pm\,9.07$ \\
$S_5$, swaps & $100.00\,\pm\,0.00$ & $100.00\,\pm\,0.00$ & $100.00\,\pm\,0.00$ & $99.35\,\pm\,0.81$ & $99.61\,\pm\,0.68$ & $99.09\,\pm\,0.23$ \\
$S_5$, standard & $0.78\,\pm\,0.68$ & $2.47\,\pm\,1.26$ & $2.21\,\pm\,1.48$ & $2.08\,\pm\,0.98$ & $1.82\,\pm\,1.19$ & $0.52\,\pm\,0.23$ \\
\bottomrule
\end{tabular}
}
\end{table}

\subsubsection{Parity: learning speed and initialization variability}
At 500 steps, RLT-1 Layer $6 + 2$ reaches $99.44\pm0.98\%$ validation accuracy, compared with $48.48\pm0.53\%$ for the Transformer.
RLT-1 Layer $4 + 4$, $5+3$, and $7+1$ average about $83\%$, with SDs of 28--30 percentage points; $8+0$ remains near chance.
All three $6+2$ seeds reach 100\% by step 600.
At step 2,000, splits $4+4$ through $7+1$ reach $100\pm0\%$, while $8+0$ reaches $98.83\pm1.92\%$ and the Transformer $94.84\pm3.43\%$.
Figure~\ref{fig:parity-errorbars} compares the early and final checkpoints.

\subsubsection{Modular arithmetic with and without brackets}
Flat modular arithmetic shows substantial initialization variability at 2,000 steps.
RLT-1 Layer $5 + 3$ has the highest mean, $94.18\pm7.29\%$, compared with $64.02\pm37.64\%$ for the Transformer.
The $4+4$ scores are 17.58\%, 19.53\%, and 98.96\% for seeds 42, 43, and 44; their mean is $45.36\pm46.43\%$.
At this training budget, several architectures produce both successful and near-chance runs.

Bracketed expressions produce closer model means: RLT-1 splits range from 70.53\% to 75.17\%, compared with $73.87\pm9.07\%$ for the Transformer.
The two generators differ in operator structure and label distribution.
Parentheses also consume positions, so equal token lengths can contain different numbers of arithmetic operations.

\subsubsection{Addition and permutation state tracking}
All six architectures reach $100\pm0\%$ teacher-forced token accuracy on the mixed 1--8-digit addition validation set at step 2,000.
On swaps-$S_5$, RLT-1 Layer $4 + 4$, $5+3$, and $6+2$ reach $100\pm0\%$ final-state accuracy.
The $7+1$ and $8+0$ means are $99.35\pm0.81\%$ and $99.61\pm0.68\%$, compared with $99.09\pm0.23\%$ for the Transformer.

On standard $S_5$, mean final-state accuracy is 0.52--2.47\%, against a uniform 120-class reference of 0.83\%.
Mean prefix-token accuracy is 5.40--7.99\% (Table~\ref{tab:experiment-prefix}); this metric includes easier early states.

\subsubsection{Length generalization}
\label{app:depth8-complete-generalization}
Checkpoint selection follows Section~\ref{sec:experiments}.
All 18 model--seed combinations for a task receive the same examples at each length: 256 addition pairs or 1,024 formal-task sequences.
Figure~\ref{fig:length-generalization} shows the full grids; Table~\ref{tab:generalization-longest} reports the longest tested lengths.
Appendix~\ref{app:length-generalization} gives checkpoint steps and supplementary metrics.

\paragraph{Addition and parity.}
Teacher-forced addition accuracy falls when operand widths exceed the 1--8-digit training range.
At nine digits per operand, RLT-1 Layer $7 + 1$ reaches $68.05\pm3.64\%$, compared with $58.61\pm2.44\%$ for the Transformer.
By 32 digits, model means lie between 14.89\% and 16.84\%.

Parity generalization differs across depth splits even when training-length accuracy is perfect.
At 256 bits, $5+3$ and $7+1$ retain $100\pm0\%$, while $6+2$ reaches $84.05\pm27.63\%$ and $4+4$ reaches $66.76\pm28.78\%$.
The $8+0$ mean is $68.91\pm27.39\%$, and the Transformer is near chance at $50.07\pm1.63\%$.

\paragraph{Modular arithmetic.}
On flat expressions of length 63, RLT-1 Layer $5 + 3$ reaches $66.96\pm40.04\%$, compared with $24.48\pm3.89\%$ for the Transformer.
At length 127, $5+3$ retains $39.65\pm19.81\%$; the Transformer is at $19.30\pm1.04\%$.
By length 255, all model means lie between 18.00\% and 21.42\%, near the 20\% uniform reference.
The large intermediate-length SDs reflect the different training outcomes seen in the flat mod-5 validation curves.
Bracketed mod-5 also loses accuracy as expressions grow longer.

\paragraph{Permutation state tracking.}
At 256 operations on swaps-$S_5$, RLT-1 Layer $4 + 4$ reaches $97.30\pm2.76\%$ final-state accuracy, compared with $0.85\pm0.30\%$ for the Transformer.
The $5+3$ and $6+2$ means are $92.71\pm1.87\%$ and $78.91\pm22.26\%$; $7+1$ and $8+0$ are near the uniform reference.
At 512 operations, $4+4$ retains $55.70\pm25.78\%$ final-state accuracy and $91.16\pm6.09\%$ prefix-token accuracy, versus $0.85\pm0.30\%$ and $9.33\pm0.11\%$ for the Transformer.
Standard $S_5$ remains low at this length, with mean final-state accuracies from 0.72\% to 1.43\%.

Figure~\ref{fig:generalization-s5} compares three measures of state-tracking accuracy across test lengths.
Prefix-token accuracy averages correctness over all intermediate states; final-state accuracy scores only the last state.
Whole-sequence accuracy requires every prefix prediction to be correct.
A correct final state can follow incorrect intermediate predictions, while prefix-token accuracy can remain high when errors occur late in the sequence.

\begin{figure}[p]
\centering
\includegraphics[width=\linewidth]{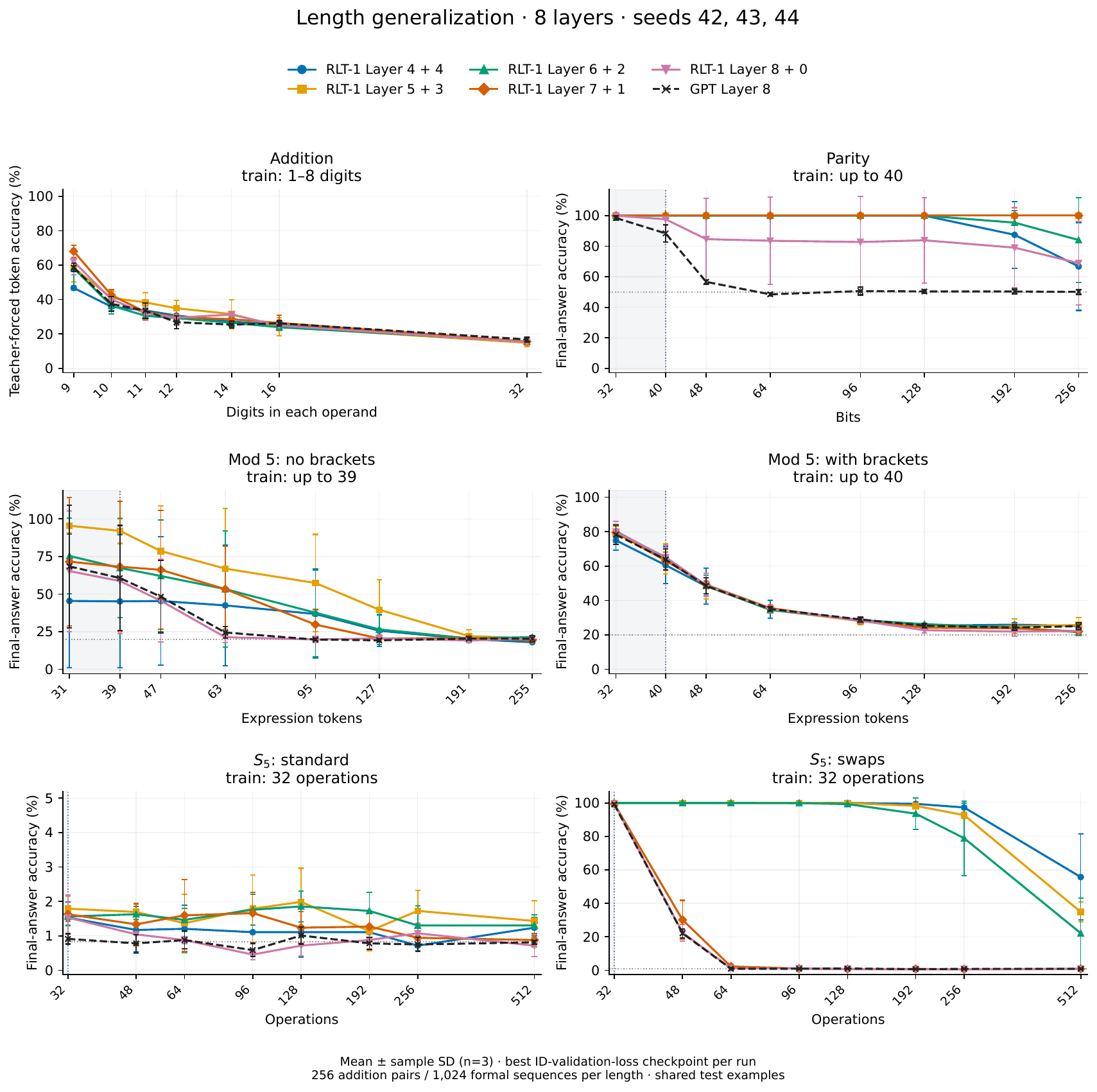}
\caption{\textbf{Length generalization of the best in-distribution checkpoints, three initialization seeds.}
Addition measures teacher-forced answer-token accuracy on 256 pairs per operand width; the other tasks measure final-label or final-state accuracy on 1,024 sequences per length.
Points and error bars show mean $\pm$ sample SD across seeds 42, 43, and 44, using identical test examples.
SD whiskers may extend outside the accuracy range.
Gray regions mark trained lengths; horizontal dotted lines show uniform-prediction accuracy.
Flat mod-5 uses actual odd expression lengths, excluding BOS and the final equals sign.}
\label{fig:length-generalization}
\end{figure}

\begin{figure}[p]
\centering
\includegraphics[width=\linewidth]{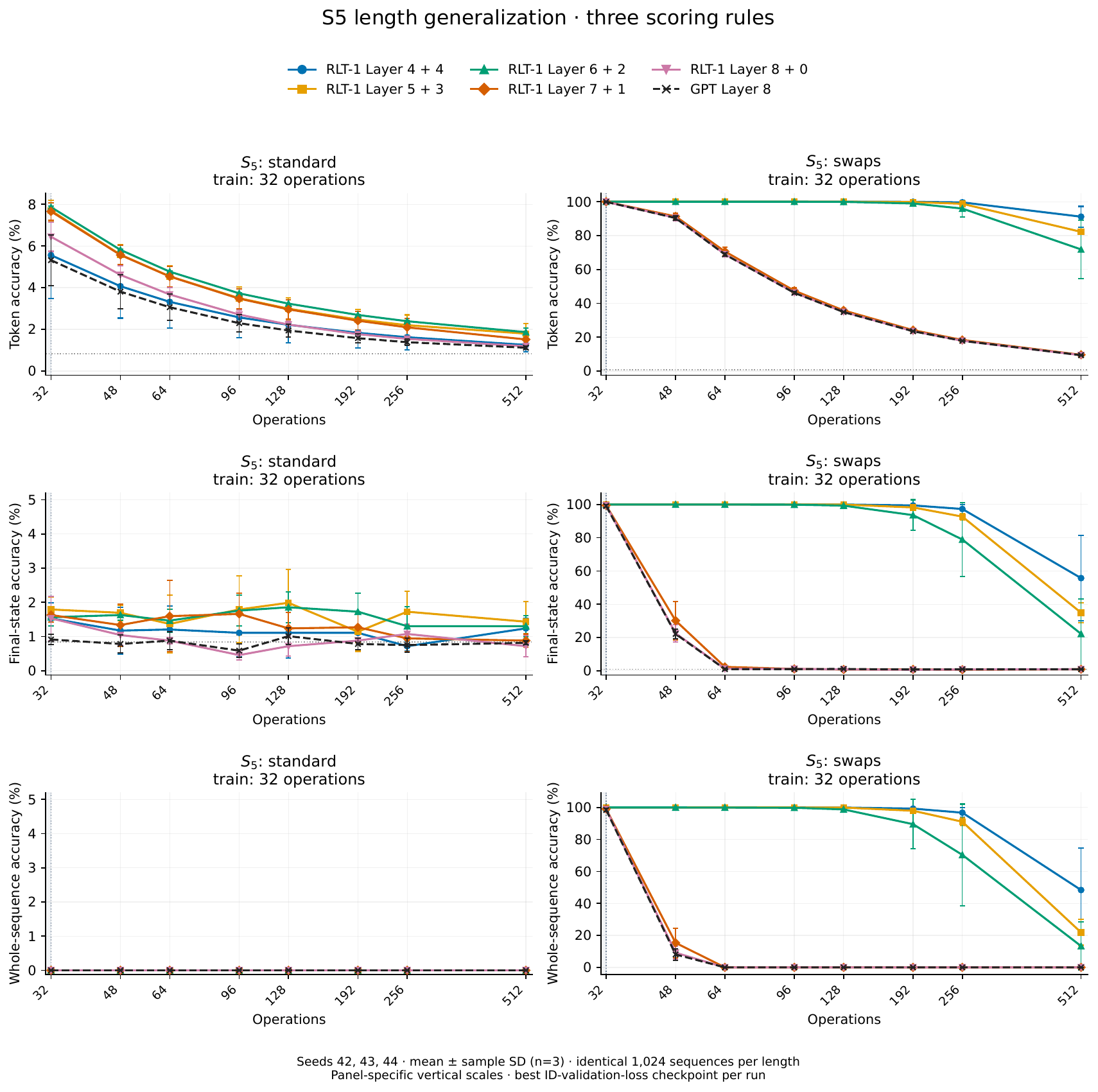}
\caption{\textbf{$S_5$ length generalization under three scoring rules.}
Rows show prefix-token, final-state, and whole-sequence accuracy; columns show standard and swaps inputs.
Every model and seed receives the same 1,024 sequences per length.
Points and error bars show mean $\pm$ sample SD across three initialization seeds.
Whole-sequence success requires all prefix predictions to be correct.
Panel-specific vertical scales expose low accuracies on standard $S_5$.}
\label{fig:generalization-s5}
\end{figure}

\begin{table}[t]
\centering
\caption{\textbf{Accuracy (\%) at the longest tested length.}
Entries are mean $\pm$ sample SD across three initialization seeds, using the checkpoint and scoring rules of Figure~\ref{fig:length-generalization}.
Addition reports teacher-forced token accuracy; formal tasks report final-label or final-state accuracy.}
\label{tab:generalization-longest}
\resizebox{\linewidth}{!}{\begin{tabular}{lrrrrrr}
\toprule
Task (test length) & \shortstack{RLT-1\\Layer 4 + 4} & \shortstack{RLT-1\\Layer 5 + 3} & \shortstack{RLT-1\\Layer 6 + 2} & \shortstack{RLT-1\\Layer 7 + 1} & \shortstack{RLT-1\\Layer 8 + 0} & \shortstack{GPT\\Layer 8} \\
\midrule
Addition (32) & $15.30\,\pm\,0.44$ & $14.89\,\pm\,2.27$ & $15.74\,\pm\,1.58$ & $15.59\,\pm\,1.57$ & $15.31\,\pm\,0.93$ & $16.84\,\pm\,1.45$ \\
Parity (256) & $66.76\,\pm\,28.78$ & $100.00\,\pm\,0.00$ & $84.05\,\pm\,27.63$ & $100.00\,\pm\,0.00$ & $68.91\,\pm\,27.39$ & $50.07\,\pm\,1.63$ \\
Mod 5, no brackets (255) & $18.00\,\pm\,0.39$ & $20.57\,\pm\,0.62$ & $21.42\,\pm\,0.49$ & $19.34\,\pm\,0.54$ & $20.44\,\pm\,0.91$ & $20.35\,\pm\,2.01$ \\
Mod 5, brackets (256) & $25.20\,\pm\,1.71$ & $25.81\,\pm\,4.34$ & $21.58\,\pm\,1.86$ & $22.04\,\pm\,1.72$ & $22.30\,\pm\,1.13$ & $25.07\,\pm\,2.05$ \\
$S_5$, standard (512) & $1.24\,\pm\,0.30$ & $1.43\,\pm\,0.60$ & $1.30\,\pm\,0.31$ & $0.88\,\pm\,0.20$ & $0.72\,\pm\,0.31$ & $0.81\,\pm\,0.06$ \\
$S_5$, swaps (512) & $55.70\,\pm\,25.78$ & $34.86\,\pm\,6.10$ & $22.14\,\pm\,20.95$ & $0.85\,\pm\,0.06$ & $0.85\,\pm\,0.20$ & $0.85\,\pm\,0.30$ \\
\bottomrule
\end{tabular}
}
\end{table}

\subsubsection{Training stability and comparison scope}
Parity RLT-1 Layer $5 + 3$ with seed 42 drops from 100\% at step 500 to 48.05\% at step 600, then returns to 100\% at step 700.
Figure~\ref{fig:experiment-loss} shows the unsmoothed losses, and Appendix~\ref{app:experiment-details} gives individual parity curves.
This temporary regression and the flat mod-5 seed differences show that final averages can hide unstable learning trajectories.

The comparison with a decoder-only Transformer changes feedback, attention structure, parameter count, and compute together.
Appendices~\ref{app:feedback-variants} and~\ref{app:mod5-variants} isolate the feedback with RLT-0 and RLT-2 at every split.

\clearpage

\subsection{Parameter counts}
Table~\ref{tab:experiment-parameters} lists unique trainable parameters.
Each RLT-1 decoder block has both SWA and encoder-memory cross-attention; each Transformer block has one attention sublayer.
RLT-1 Layer $8 + 0$ applies the learned state projection and gated merge at every token.

\begin{table}[htbp]
\centering
\caption{Unique trainable parameters in the eight-layer models.
Each model has the same size across all six tasks.}
\label{tab:experiment-parameters}
\begin{tabular}{lrr}
\toprule
Model & Parameters (M) & Relative to GPT Layer 8 \\
\midrule
RLT-1 Layer 4 + 4 & 28.73 & 1.135$\times$ \\
RLT-1 Layer 5 + 3 & 28.20 & 1.114$\times$ \\
RLT-1 Layer 6 + 2 & 27.68 & 1.093$\times$ \\
RLT-1 Layer 7 + 1 & 27.15 & 1.073$\times$ \\
RLT-1 Layer 8 + 0 & 26.10 & 1.031$\times$ \\
GPT Layer 8 & 25.31 & 1.000$\times$ \\
\bottomrule
\end{tabular}

\end{table}

\subsection{Implementation of RLT-0}
The \texttt{rlt\_parallel\_tiny} control was merged in \href{https://github.com/yifanzhang-pro/nanogptpro-dev/pull/164}{\texttt{nanogptpro-dev} PR~\#164}.
It uses an untied $4+4$ encoder--decoder split, width 512, four attention heads, FFN width 1,365, vocabulary capacity 277, an SWA window of eight, one encoder-memory group, and width-$\mu$P.
It has 27,938,944 unique parameters, and its architecture follows Section~\ref{sec:feedback-variants}.
Training uses full backpropagation, and architecture markers distinguish RLT-0 and RLT-1 checkpoints.

\begin{figure}[p]
\centering
% Training / prefill of known tokens; shared global memory (G=1).
\begingroup
\definecolor{detailInk}{HTML}{172B4D}
\definecolor{detailBlue}{HTML}{007AFF}
\definecolor{detailOrange}{HTML}{E87500}
\definecolor{detailGreen}{HTML}{20856A}
\resizebox{\linewidth}{!}{%
\begin{tikzpicture}[x=1cm,y=1cm,>=Latex,font=\sffamily\fontsize{8}{9.5}\selectfont,text=detailInk,
 box/.style={draw=detailInk!45,fill=white,rounded corners=2pt,minimum height=.65cm,text width=2.65cm,align=center,inner sep=4pt},
 enc/.style={box,draw=detailBlue!65,fill=detailBlue!6},
 dec/.style={box,draw=detailOrange!70,fill=detailOrange!7},
 mem/.style={box,draw=detailGreen,fill=detailGreen!7},
 add/.style={circle,draw=detailInk!65,fill=white,minimum size=.38cm,inner sep=0pt},
 flow/.style={->,line width=.8pt,draw=detailInk!80},
 global/.style={->,line width=1pt,draw=detailBlue},
 local/.style={->,line width=.9pt,draw=detailGreen},
 note/.style={font=\sffamily\fontsize{6.8}{8}\selectfont,align=center}]
\path[use as bounding box] (-.2,-.25) rectangle (17,16.3);
% Stack outlines are drawn first, behind their contents.
\filldraw[rounded corners=5pt,fill=detailBlue!3,draw=detailBlue!30] (.25,3.8) rectangle (4.35,10.7);
\filldraw[rounded corners=5pt,fill=detailOrange!3,draw=detailOrange!35] (8.55,3.8) rectangle (12.75,12.65);
\node[font=\sffamily\bfseries\fontsize{12}{14}\selectfont,text=detailBlue] at (2.3,15.75) {Causal encoder};
\node[font=\sffamily\bfseries\fontsize{12}{14}\selectfont,text=detailOrange] at (10.65,15.75) {RLT-0 decoder};
\node[note] at (2.3,15.15) {Known sequence $x_{1:T}$};
\node[note] at (10.65,15.15) {Parallel positions within each layer};
\node[anchor=west,font=\sffamily\bfseries\fontsize{9}{11}\selectfont] at (.48,10.3) {$L_E$ layers};
\node[anchor=west,font=\sffamily\bfseries\fontsize{9}{11}\selectfont] at (8.78,12.25) {$L_D$ layers};
% Encoder, bottom to top.
\node (token) at (2.3,.65) {Known tokens $x_{1:T}$};
\node[enc] (embed) at (2.3,1.65) {Token embeddings};
\node[enc] (esa) at (2.3,5.15) {Causal self-attention\\[-1pt]{\fontsize{6.8}{8}\selectfont causal mask $j\le t$}};
\node[add] (eadd1) at (2.3,6.55) {$+$};
\node[enc] (effn) at (2.3,7.75) {Feed-forward network};
\node[add] (eadd2) at (2.3,9.15) {$+$};
\node[enc] (feature) at (2.3,11.7) {Encoder outputs\\$e_{1:T}=E_\theta(x_{1:T})$};
\draw[flow] (token)--(embed);
\draw[flow] (embed)--(esa);
\draw[flow] (esa)--(eadd1);
\draw[flow] (eadd1)--(effn);
\draw[flow] (effn)--(eadd2);
\draw[flow] (eadd2)--(feature);
\draw[flow] (2.3,4.35)--(.55,4.35)|-(eadd1);
\draw[flow] (2.3,6.95)--(.55,6.95)|-(eadd2);
\node[note,text width=3.3cm] at (2.3,13.25) {Causal masking allows parallel\\encoder prefill across positions.};
% Projected memory lives between stacks.
\node[enc,text width=2.55cm] (proj) at (6.25,11.7) {Global K/V projection\\[-1pt]{\fontsize{6.8}{8}\selectfont from normalized $e_{1:T}$}};
\node[enc,text width=2.55cm,minimum height=1.1cm] (memory) at (6.25,9.55) {Projected global KV\\$(k_j,v_j)$, $j=1,\ldots,T$};
\draw[global] (feature)--(proj);
\draw[global] (proj)--node[right,note] {project once\\per position}(memory);
\node[note,text width=2.7cm,text=detailBlue] at (6.25,7.85) {Same projected KV\\read by all decoder layers\\($G=1$ shown)};
% Every decoder position receives its own encoder output directly.
\node[dec,text width=3.1cm,minimum height=.9cm] (input) at (10.65,1.65) {Direct encoder input\\$z_{1:T}^0=e_{1:T}$};
\draw[global] (feature.south)--(2.3,10.95)--(4.55,10.95)--(4.55,1.65)--(input.west);
% Decoder: three pre-normalized residual sublayers.
\node[dec] (sqkv) at (10.65,4.65) {RMSNorm\\Q/K/V projections};
\node[dec] (swa) at (10.65,5.85) {Causal SWA\\[-1pt]{\fontsize{6.8}{8}\selectfont window $W$, including $t$}};
\node[add] (dadd1) at (10.65,6.7) {$+$};
\node[dec] (mq) at (10.65,7.65) {Norm + query projection};
\node[dec] (cross) at (10.65,8.75) {Cross-attention\\[-1pt]{\fontsize{6.8}{8}\selectfont prefix mask $j\le t$}};
\node[add] (dadd2) at (10.65,9.6) {$+$};
\node[dec] (dffn) at (10.65,10.55) {RMSNorm + FFN};
\node[add] (dadd3) at (10.65,11.5) {$+$};
\draw[flow] (input)--(sqkv);
\draw[flow] (sqkv)--node[left,note] {$Q_{1:T}^\ell$}(swa);
\draw[flow] (swa)--(dadd1);
\draw[flow] (dadd1)--node[right,note] {$b_{1:T}^\ell$}(mq);
\draw[flow] (mq)--(cross);
\draw[flow] (cross)--(dadd2);
\draw[flow] (dadd2)--node[right,note] {$a_{1:T}^\ell$}(dffn);
\draw[flow] (dffn)--(dadd3);
\draw[flow] (10.65,4.05)--(8.8,4.05)|-(dadd1);
\draw[flow] (10.65,7.05)--(8.8,7.05)|-(dadd2);
\draw[flow] (10.65,9.95)--(8.8,9.95)|-(dadd3);
\draw[global] (memory.east)--(8.15,9.55)--(8.15,8.75)--node[above,note] {K,V}(cross.west);
% Each layer projects KV at all known positions, then applies the SWA mask.
\node[mem,text width=2.4cm,minimum height=1.15cm] (cache) at (14.55,5.2) {Layer $\ell$ SWA KV\\$\{(k_j^{D,\ell},v_j^{D,\ell})\}_{j=1}^T$};
\draw[local] (sqkv.east)--(12.95,4.65)--(12.95,3.4)--(14.55,3.4)--(cache.south);
\draw[local] (cache.west)--(12.95,5.2)--(12.95,5.85)--node[above,note] {K,V}(swa.east);
\node[mem,text width=2.4cm,minimum height=1cm] (nextcache) at (14.55,8.15) {For continuation\\save $C_T^{D,\ell}$\\[-1pt]{\fontsize{6.8}{8}\selectfont last $W-1$ entries}};
\draw[local] (cache)--(nextcache);
\node[note,text width=2.5cm,text=detailGreen] at (14.55,9.55) {Separate KV per layer;\\window at position $t$:\\$j\le t$ and $t-j<W$};
% Outputs are read out independently at each position, with no feedback path.
\node (output) at (10.65,13.2) {$z_{1:T}^{L_D}$};
\node[dec] (head) at (10.65,14.15) {Readout + softmax\\$p_\Theta(x_{t+1}\mid x_{1:t})$};
\draw[flow] (dadd3)--node[right,note,pos=.45] {$z_{1:T}^\ell$}(output);
\draw[flow] (output)--(head);
\node[note,text width=2.5cm] at (14.55,14.15) {One prediction\\per position $t$};
\node[dec,text width=2.4cm,minimum height=1cm] at (14.55,12.1) {Training / prefill\\[-1pt]{\fontsize{6.8}{8}\selectfont Positions in parallel;\\layers in sequence.}};
\node[note,text width=2.5cm] at (14.55,1.65) {Generation remains\\one token at a time.};
\node[note,text width=3.4cm] at (6.25,4.8) {Attention includes output projection.\\$+$ denotes a residual addition.\\Keys include positional\\transformations.};
\end{tikzpicture}%
}
\endgroup
\caption{\textbf{RLT-0 during training or prompt prefill.}
The decoder receives the encoder outputs directly, $z_{1:T}^0=e_{1:T}$, with the gated merge and final-hidden-state feedback path removed.
Each outlined stack repeats for its indicated depth.
For $G=1$, all decoder layers read the same projected global KV; query position $t$ can read encoder positions $j\le t$.
Each decoder layer constructs its own SWA KV and applies a causal window of size $W$, including the current position.
Known positions run together within each layer, while layers follow depth order.
Incremental generation retains the encoder cache, global memory, and each layer's last $W-1$ SWA entries and proceeds one token at a time.}
\label{fig:architecture-no-feedback}
\end{figure}

\subsection{Seed aggregation and metric definitions}
Let $a_{m,s,t}$ be validation accuracy for model $m$, initialization seed $s$, and optimizer step $t$.
For every task we report
\begin{equation}
 \bar a_{m,t}=\frac{1}{3}\sum_{s\in\{42,43,44\}}a_{m,s,t},
 \qquad
 \mathrm{SD}_{m,t}=\sqrt{\frac{1}{2}\sum_{s\in\{42,43,44\}}(a_{m,s,t}-\bar a_{m,t})^2}.
\end{equation}
We aggregate all three seeds at the same step, without interpolation or imputation.
All curves extend through step 2,000, the common comparison step in Table~\ref{tab:experiment-matched}.

Parity and mod-5 supervise one final label per example.
Their three equally sized validation groups are pooled within a run before averaging across seeds.
$S_5$ supervises each prefix; final-state accuracy counts the last state in each sequence, while prefix-token accuracy pools all supervised positions.
Addition uses teacher forcing on the full prompt and correct preceding answer tokens.
Its denominator includes all answer tokens---digits, spaces, brackets, and EOS---and excludes prompt and padding positions.
Addition loss is teacher-forced cross-entropy normalized per example.

\begin{table}[htbp]
\centering
\caption{Final-state and prefix-token accuracy (\%) on the fixed $S_5$ validation sets at step 2,000, reported as mean $\pm$ sample SD over seeds 42, 43, and 44.}
\label{tab:experiment-prefix}
\resizebox{\linewidth}{!}{\begin{tabular}{llrrrrrr}
\toprule
Task & Metric & \shortstack{RLT-1\\Layer 4 + 4} & \shortstack{RLT-1\\Layer 5 + 3} & \shortstack{RLT-1\\Layer 6 + 2} & \shortstack{RLT-1\\Layer 7 + 1} & \shortstack{RLT-1\\Layer 8 + 0} & \shortstack{GPT\\Layer 8} \\
\midrule
$S_5$, swaps & Final state & $100.00\,\pm\,0.00$ & $100.00\,\pm\,0.00$ & $100.00\,\pm\,0.00$ & $99.35\,\pm\,0.81$ & $99.61\,\pm\,0.68$ & $99.09\,\pm\,0.23$ \\
$S_5$, swaps & Prefix token & $100.00\,\pm\,0.00$ & $100.00\,\pm\,0.00$ & $100.00\,\pm\,0.00$ & $99.96\,\pm\,0.05$ & $99.95\,\pm\,0.06$ & $99.91\,\pm\,0.06$ \\
$S_5$, standard & Final state & $0.78\,\pm\,0.68$ & $2.47\,\pm\,1.26$ & $2.21\,\pm\,1.48$ & $2.08\,\pm\,0.98$ & $1.82\,\pm\,1.19$ & $0.52\,\pm\,0.23$ \\
$S_5$, standard & Prefix token & $5.57\,\pm\,2.11$ & $7.82\,\pm\,0.55$ & $7.99\,\pm\,0.29$ & $7.67\,\pm\,0.54$ & $6.47\,\pm\,0.65$ & $5.40\,\pm\,1.21$ \\
\bottomrule
\end{tabular}
}
\end{table}

\subsection{Reproducing the completed training snapshot}
The formal tasks use data seeds 20260914 for training, 20260915 for validation, and 20260916 for test sets.
Addition uses data seed 42.
All 108 runs completed 2,000 steps using the implementation based on upstream commit \texttt{d1a4516}.
Native histories were frozen on September 17, 2026; the manifest records exact collection timestamps and source hashes.

\texttt{manifest.json} and \texttt{configs.json} in \texttt{data/experiments-depth8/} record run identifiers, settings, and source/corpus fingerprints.
\texttt{plot\_depth8.py} in \texttt{scripts/} rebuilds training figures and tables from the 2,160 validation records and 216,000 unsmoothed training-step records.
The fixed-length token-accuracy export contains 1,800 validation records from 90 completed formal-task runs; \texttt{plot\_token\_accuracy.py} in the same directory rebuilds its plot.

\begin{center}
\begin{minipage}{.94\linewidth}
\captionsetup{type=figure}
\centering
\includegraphics[width=\linewidth]{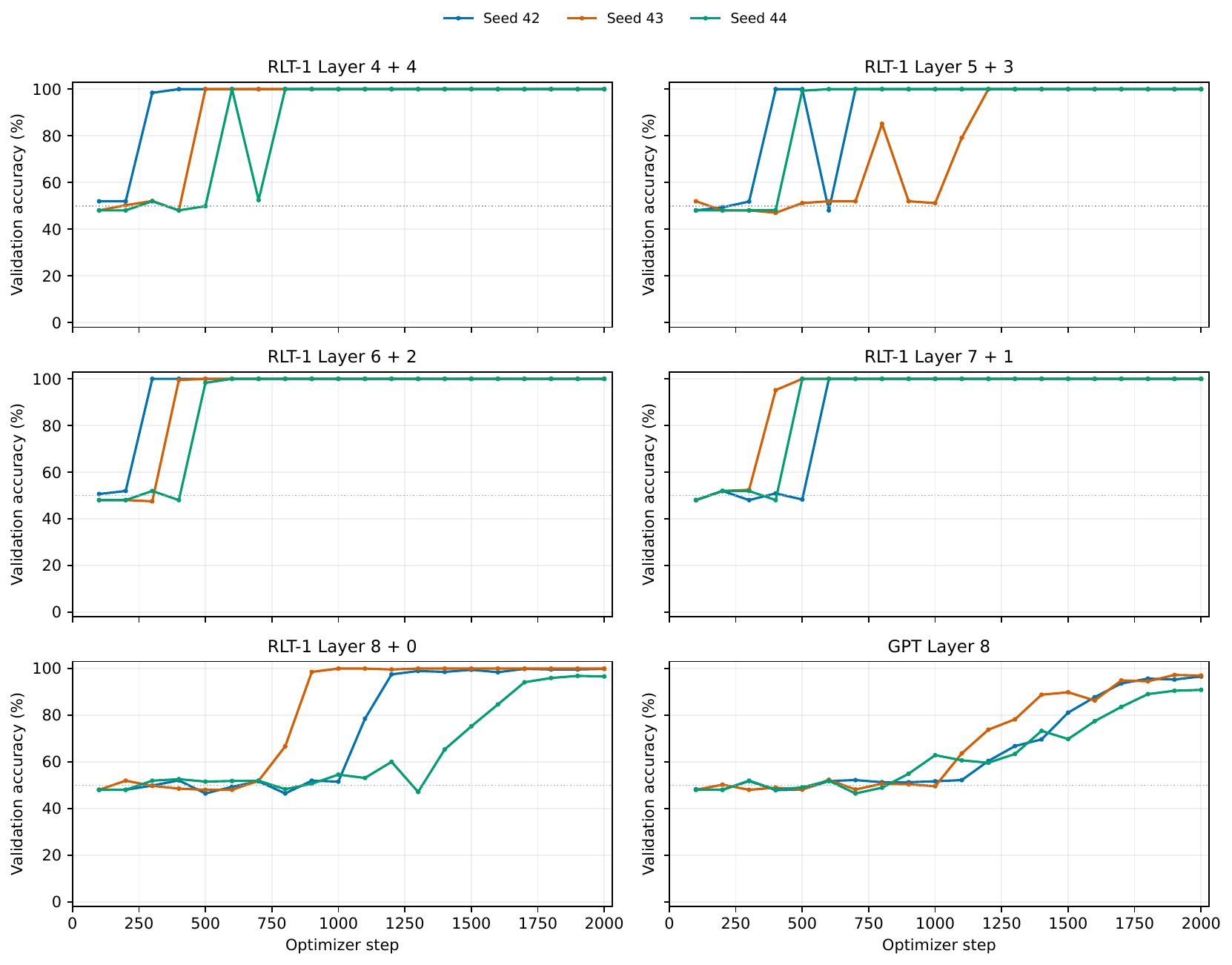}
\caption{Parity accuracy for each initialization seed through 2,000 steps.
All runs share the indexed training stream and validation set; curves show temporary regressions and differences in learning speed.}
\label{fig:parity-seeds}
\end{minipage}
\end{center}

\subsection{Length-generalization protocol and supplementary metrics}
\label{app:length-generalization}
The study uses all 108 completed runs, with seeds 42, 43, and 44 for every architecture and task.
We select each run's checkpoint by minimum native ID-validation example loss, pooling the configured formal-task validation lengths and taking the earliest step on ties.
Table~\ref{tab:generalization-checkpoints} lists the selected steps; checkpoint and training-result hashes accompany the data.
Test results do not participate in selection.

\begin{table}[p]
\centering
\caption{Checkpoint steps selected by minimum in-distribution validation loss for length generalization.
Every run completed 2,000 steps; seeds 42, 43, and 44 are listed separately.}
\label{tab:generalization-checkpoints}
\resizebox{\linewidth}{!}{\begin{tabular}{llrrrrrr}
\toprule
Task & Seed & \shortstack{RLT-1\\Layer 4 + 4} & \shortstack{RLT-1\\Layer 5 + 3} & \shortstack{RLT-1\\Layer 6 + 2} & \shortstack{RLT-1\\Layer 7 + 1} & \shortstack{RLT-1\\Layer 8 + 0} & \shortstack{GPT\\Layer 8} \\
\midrule
Addition & 42 & 2000 & 2000 & 2000 & 2000 & 2000 & 2000 \\
 & 43 & 2000 & 2000 & 2000 & 2000 & 2000 & 2000 \\
 & 44 & 2000 & 2000 & 2000 & 1900 & 2000 & 2000 \\
\addlinespace[2pt]
Parity & 42 & 900 & 1600 & 800 & 1500 & 2000 & 2000 \\
 & 43 & 1100 & 2000 & 900 & 1500 & 1900 & 1900 \\
 & 44 & 1400 & 1000 & 1200 & 1100 & 2000 & 2000 \\
\addlinespace[2pt]
Mod 5, no brackets & 42 & 800 & 2000 & 2000 & 2000 & 2000 & 400 \\
 & 43 & 800 & 2000 & 2000 & 800 & 400 & 2000 \\
 & 44 & 2000 & 2000 & 2000 & 2000 & 2000 & 1900 \\
\addlinespace[2pt]
Mod 5, brackets & 42 & 2000 & 2000 & 2000 & 2000 & 2000 & 1900 \\
 & 43 & 2000 & 2000 & 2000 & 2000 & 2000 & 2000 \\
 & 44 & 2000 & 2000 & 2000 & 2000 & 2000 & 2000 \\
\addlinespace[2pt]
$S_5$, standard & 42 & 1800 & 2000 & 2000 & 2000 & 2000 & 2000 \\
 & 43 & 2000 & 2000 & 2000 & 2000 & 2000 & 2000 \\
 & 44 & 2000 & 2000 & 1800 & 2000 & 2000 & 2000 \\
\addlinespace[2pt]
$S_5$, swaps & 42 & 2000 & 2000 & 2000 & 1900 & 2000 & 2000 \\
 & 43 & 2000 & 2000 & 2000 & 2000 & 2000 & 1900 \\
 & 44 & 2000 & 2000 & 2000 & 1900 & 2000 & 2000 \\
\addlinespace[2pt]
\bottomrule
\end{tabular}
}
\end{table}

The addition grid is 9, 10, 11, 12, 14, 16, and 32 digits in each operand.
Parity and bracketed mod-5 use lengths 32, 40, 48, 64, 96, 128, 192, and 256; both $S_5$ tasks use 32, 48, 64, 96, 128, 192, 256, and 512.
Flat expressions alternate operands and binary operators, so each requested even length $L$ maps to actual expression length $L-1$: 31, 39, 47, 63, 95, 127, 191, and 255.
Formal-task lengths exclude BOS and any final equals sign; flat mod-5 has $L+1$ input tokens after these markers are added.
There are 47 shared test sets and 846 model--seed--length evaluations.

Addition uses 256 unique unordered operand pairs per length, sampled from the native test partition with seed 20260916.
Both operands have exactly the indicated width and no leading zero; reversed-digit serialization matches training.
The evaluation computes teacher-forced answer-token accuracy without autoregressive generation.
Each formal-task set contains 1,024 sequences from the original generator and test seed 20260916.
All models and initializations use the same examples at a task and length.
Shared holdouts are reused from the previous benchmark; additional formal-task sets exclude duplicates and existing validation/test examples.

After matching checkpoints and holdout fingerprints, we reuse 697 model--seed--length results and evaluate the remaining 149 combinations.
All reported metrics are recomputed from saved per-example predictions or teacher-token counts.
Evaluation uses CPU FP32, four intra-op threads, and the recorded training runtime; checkpoint weights remain fixed.
New evaluations use batch 32, and reused native formal evaluations used batch four.
A batch-size check with RLT-1 Layer $4 + 4$ matched all 32,768 state predictions across the two batch sizes on 32 length-512 sequences per $S_5$ task.
The teacher-forced addition evaluator also reproduces the saved per-example token counts for RLT-1 Layer $4 + 4$ and GPT Layer 8 on 32 shared 32-digit pairs.

At each test length, we first compute a score for each initialization and then apply the mean/SD formula above with length in place of training step.
Error bars show sample SD across the three initializations on shared test examples.
Figure~\ref{fig:generalization-token} reports supplementary token accuracy for all six tasks; the three $S_5$ scoring rules are compared in Appendix~\ref{app:depth8-complete-generalization} (Figure~\ref{fig:generalization-s5}).

The directory \path{data/experiments-depth8-generalization/} contains 846 per-seed rows, 282 aggregates, checkpoint/dataset fingerprints, and a verification report.
Before regenerating figures and tables, \texttt{scripts/plot\_generalization.py} checks seed/length coverage, scores computed from counts, sample SDs, and checkpoint correspondence with the training snapshot.

\begin{figure}[p]
\centering
\includegraphics[width=\linewidth]{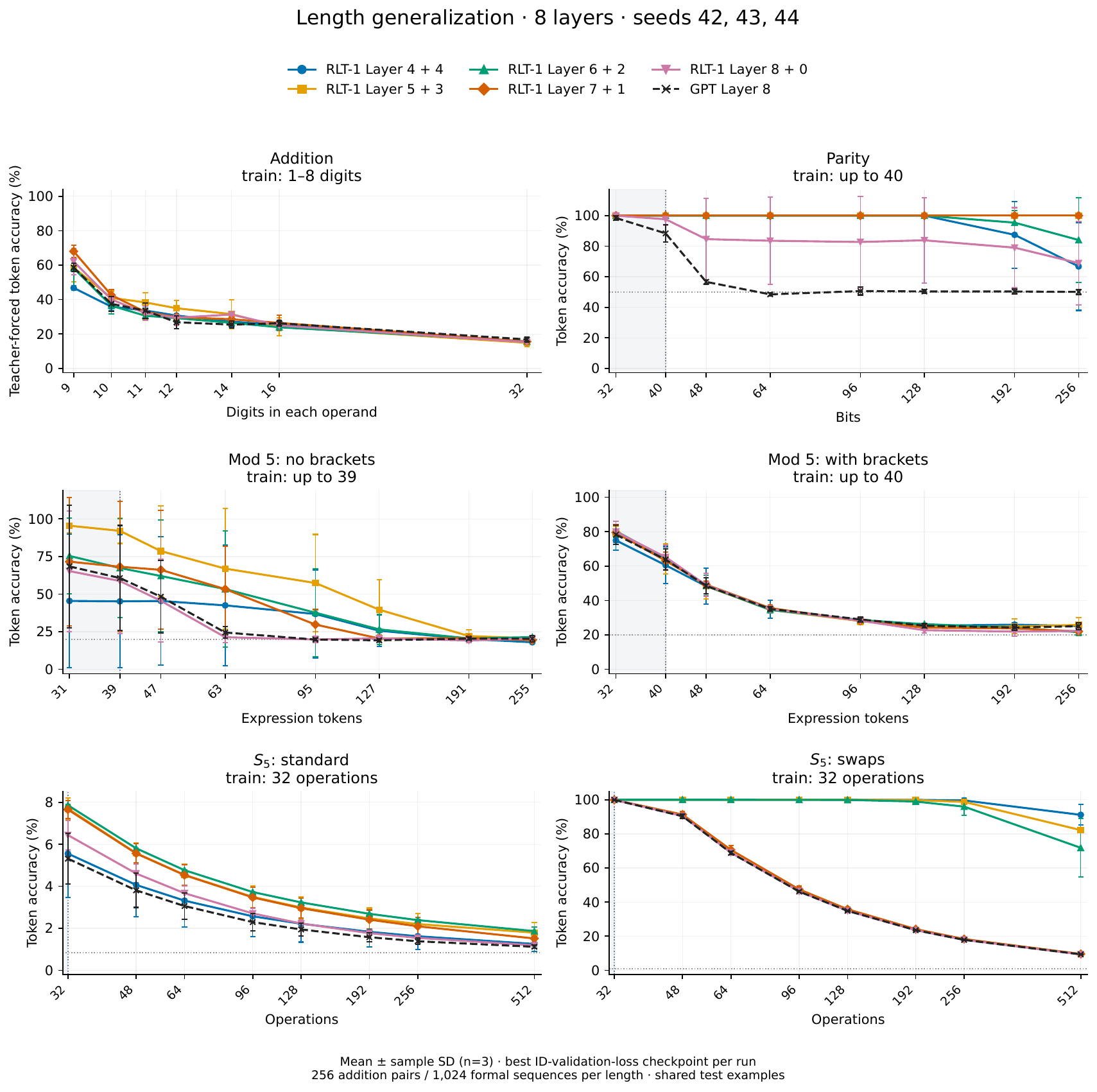}
\caption{\textbf{Token accuracy across test lengths, three initialization seeds.}
The checkpoints and shared examples are those of Figure~\ref{fig:length-generalization}.
Addition uses teacher forcing on answer tokens; $S_5$ scores every prefix state; parity and mod-5 each score one final label.
Points and error bars show mean $\pm$ sample SD across seeds 42, 43, and 44.
Gray regions mark trained lengths.}
\label{fig:generalization-token}
\end{figure}

\begin{figure}[p]
\centering
\includegraphics[width=\linewidth]{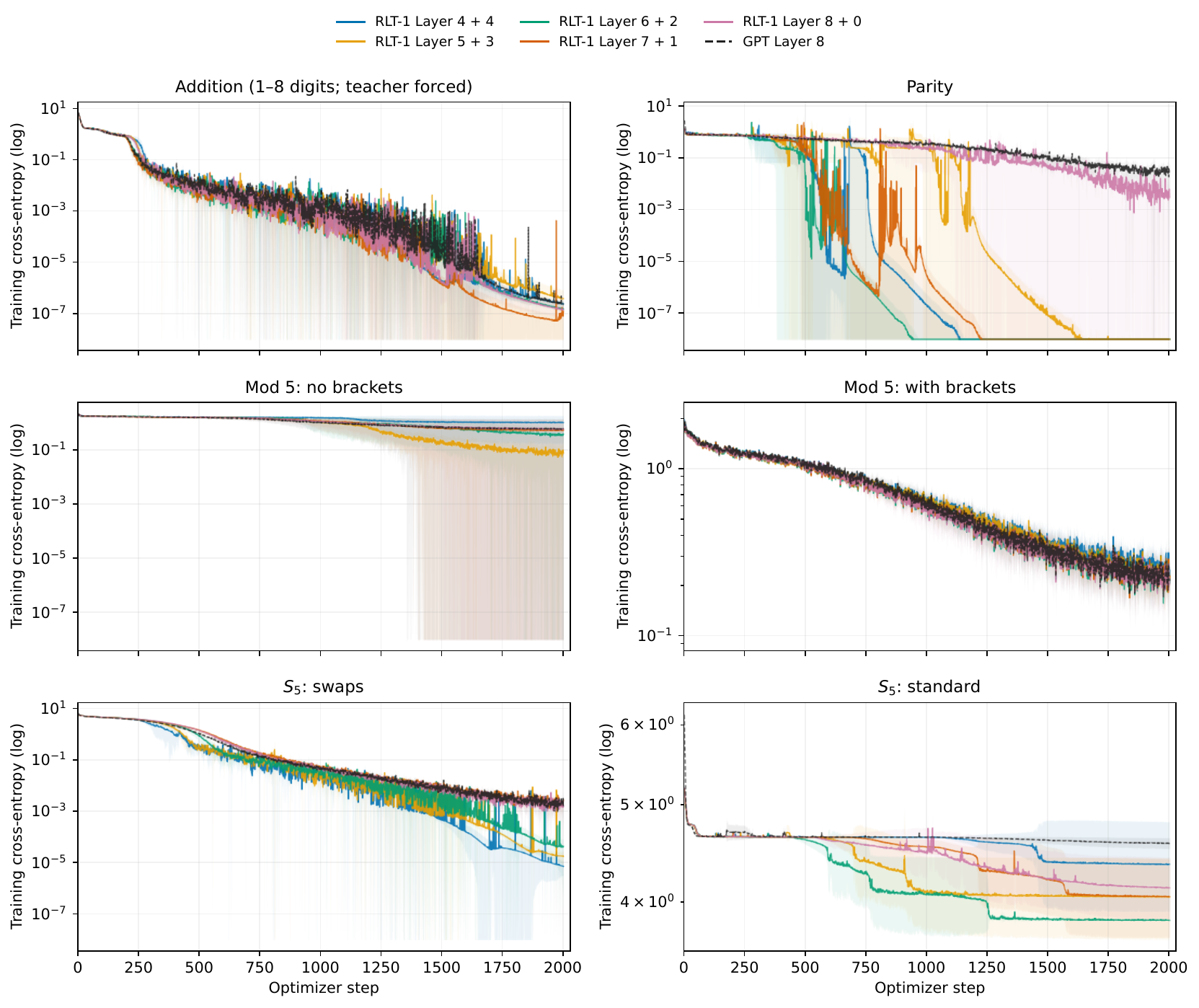}
\caption{Unsmoothed global-batch training cross-entropy, shown as mean $\pm$ sample SD across three seeds for every task.
Means and band limits below $10^{-8}$ are displayed at $10^{-8}$ on the logarithmic axis.
Loss is measured before the optimizer update; logged gradient norms are measured before clipping.
Compare losses within each task, since supervision differs across tasks.}
\label{fig:experiment-loss}
\end{figure}
\clearpage

\subsection{Token-accuracy trajectories at lengths 32 and 33}
Figure~\ref{fig:validation-token-accuracy} compares token accuracy at length 32 for parity, bracketed mod-5, and both $S_5$ tasks, and at length 33 for flat mod-5.
Flat expressions alternate numbers and binary operators, so their payload lengths are odd; 33 is the nearest configured validation length to 32.
The completed snapshot contains 1,800 validation checkpoints from 90 runs: six architectures, five formal tasks, and three seeds, each trained for 2,000 steps.
At length 32, $S_5$ token accuracy scores 8,192 prefix states across 256 sequences; parity and mod-5 each score one final label per sequence.

\begin{center}
\begin{minipage}{\linewidth}
\captionsetup{type=figure}
\centering
\includegraphics[width=.9\linewidth]{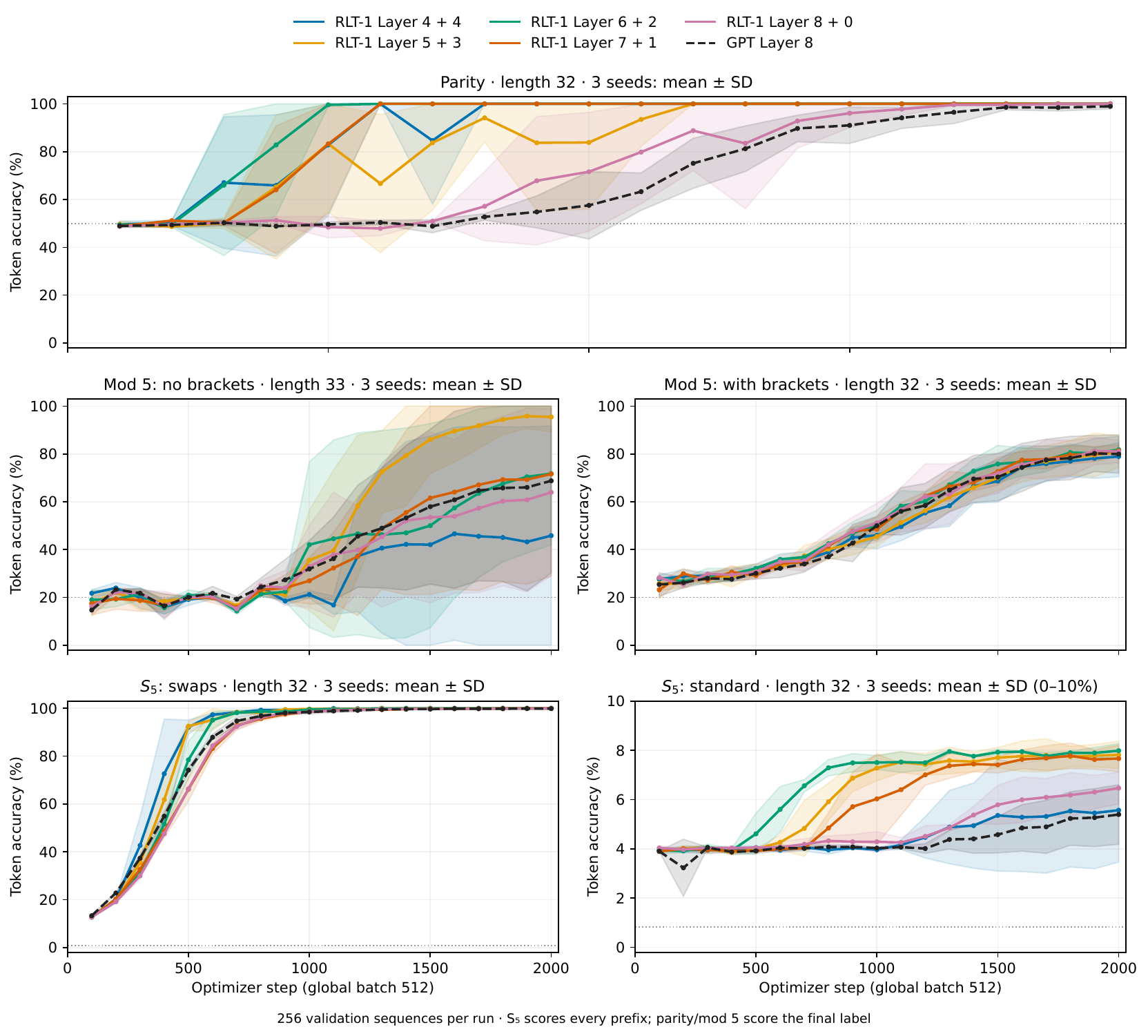}
\caption{\textbf{Validation token accuracy on five formal tasks, three seeds per task.}
Flat mod-5 uses payload length 33; all other panels use length 32.
Every point uses all three initialization seeds at the same optimizer step; bands show mean $\pm$ sample SD, clipped to the accuracy range.
$S_5$ scores all prefix states; parity and mod-5 score the final label.
All curves extend through 2,000 steps.
Standard $S_5$ uses a narrower vertical scale; horizontal dotted lines show uniform-prediction accuracy.}
\label{fig:validation-token-accuracy}
\end{minipage}
\end{center}
\clearpage
\subsection{Sixteen-layer parity: best and final checkpoints}
\label{app:parity-depth16}
All ten seed-42 parity runs completed 2,000 updates with training lengths 3--40, global batch 1,024, and microbatch 32.
Figure~\ref{fig:parity-depth16} compares the best ID-loss checkpoint with the fixed step-2,000 checkpoint on the same 1,024 examples per length.
Exact ID-loss ties select the earliest checkpoint, giving steps 1,400 and 1,300 for $8+8$ and $9+7$.
Paired evaluations use matching batch sizes; their best-checkpoint predictions agree with the previously reported predictions.

Accuracy is unchanged in 74 of 80 model--length comparisons.
At 256 bits, $10+6$ changes from 62.70\% to 66.11\%, $14+2$ from 63.57\% to 62.11\%, and $15+1$ from 94.14\% to 94.82\%.
RLT-1 Layer $8 + 8$, $9+7$, $11+5$, and $16+0$ retain 100\% at both checkpoints, compared with 49.41\% for GPT Layer 16.

\begin{table}[t]
\centering
\caption{\textbf{Sixteen-layer parity at 256 bits, seed 42.}
Final-answer accuracy (\%) at the best ID-loss checkpoint and at step 2,000.}
\label{tab:depth16-refresh}
\small
\begin{tabular}{lrrr}
\toprule
Model & Best step & Best checkpoint & Step 2,000 \\
\midrule
RLT-1 Layer 8 + 8 & 1400 & 100.00 & 100.00 \\
RLT-1 Layer 9 + 7 & 1300 & 100.00 & 100.00 \\
RLT-1 Layer 10 + 6 & 1800 & 62.70 & 66.11 \\
RLT-1 Layer 11 + 5 & 1300 & 100.00 & 100.00 \\
RLT-1 Layer 12 + 4 & 1000 & 98.83 & 98.83 \\
RLT-1 Layer 13 + 3 & 2000 & 99.41 & 99.41 \\
RLT-1 Layer 14 + 2 & 1500 & 63.57 & 62.11 \\
RLT-1 Layer 15 + 1 & 1100 & 94.14 & 94.82 \\
RLT-1 Layer 16 + 0 & 1900 & 100.00 & 100.00 \\
GPT Layer 16 & 2000 & 49.41 & 49.41 \\
\bottomrule
\end{tabular}

\end{table}

\begin{figure}[p]
\centering
\includegraphics[width=\linewidth]{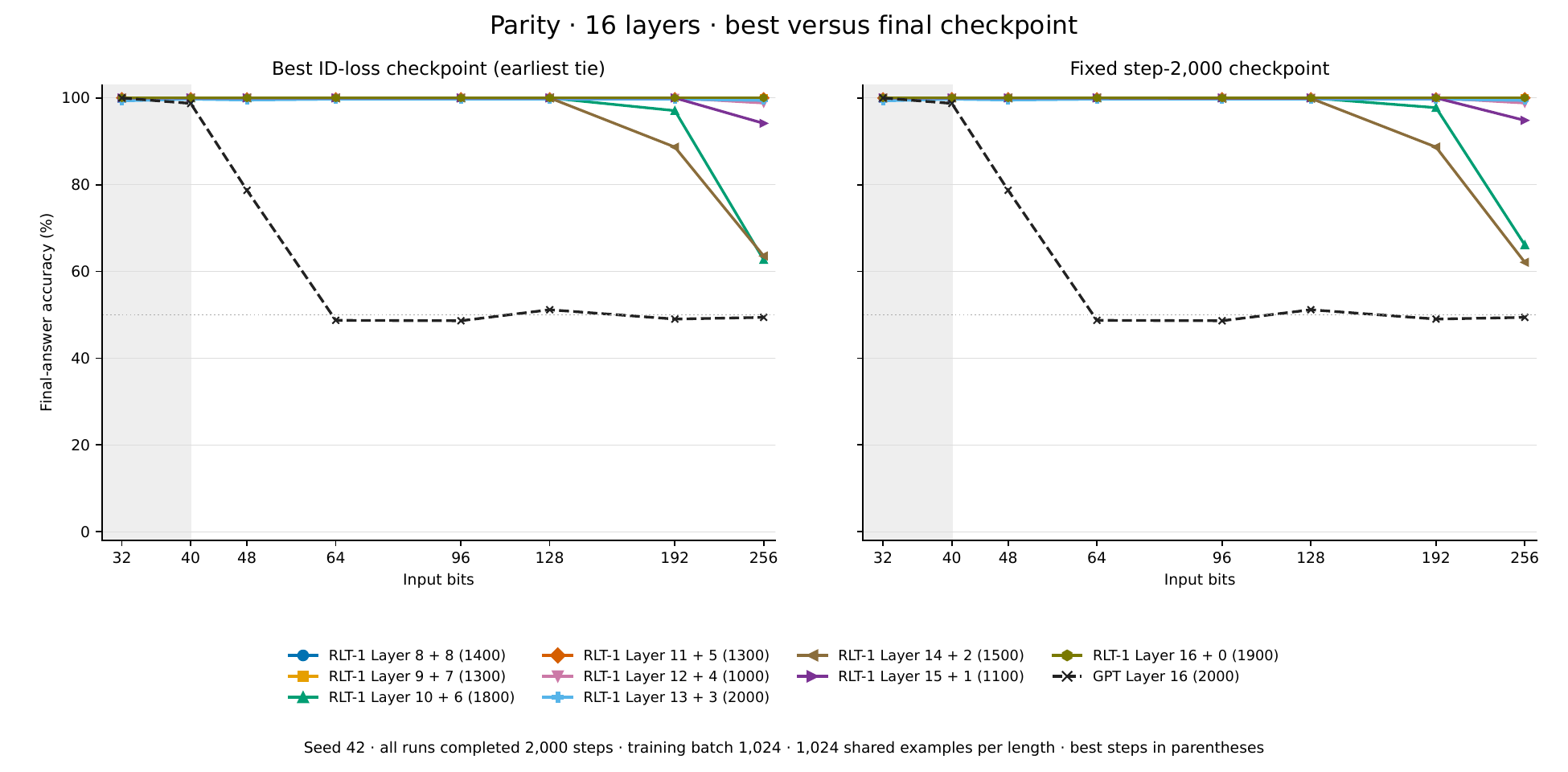}
\caption{\textbf{Sixteen-layer parity: best versus final checkpoint.}
Seed 42, 1,024 shared examples per length; parentheses give best ID-loss checkpoint steps.
Gray shading marks the training range; the dotted line marks 50\% chance.}
\label{fig:parity-depth16}
\end{figure}

\clearpage
\subsection{Three-seed comparison of feedback variants}
\label{app:feedback-variants}
We extend the eight-layer study to RLT-0 and RLT-2 chunk4, each at splits $4+4$ through $8+0$ and seeds 42, 43, and 44.
These 180 runs train for 5,000 updates on mod-5 and 2,000 on the other four tasks.
Width, data generation, optimizer, batch sizes, and CPU execution match Section~\ref{sec:experiments}; the cosine schedule ends at the corresponding training budget.
RLT-1 and RLT-2 use feedback scale 0.1 and TBPTT128; RLT-0 removes the feedback merge, has 787,968 fewer parameters per split, and uses full BPTT.
The three-seed RLT-1 and Transformer controls for addition, parity, and $S_5$ are reused from the original study.

Table~\ref{tab:variant-validation} reports validation at the end of training for all 288 runs.
Figure~\ref{fig:variant-validation} shows $4+4$ training curves; the data repository provides curves for every split.
Every validation mean uses seeds 42, 43, and 44 at the same step, including the Transformer mod-5 controls.

Formal-task tests select the minimum ID validation example loss, with earliest-step tie breaking.
We recount final-state, prefix-token, and whole-sequence scores from saved predictions and verify identical holdouts across variants and seeds.
Each length has 1,024 examples; this comparison uses the native test grid through length 256 (255 for flat mod-5).
The separate 512-operation $S_5$ results in Section~\ref{sec:length-generalization} remain specific to RLT-1 and Transformer.

\begin{table}[t]
\centering
\caption{\textbf{End-of-training validation for all eight-layer variants.}
Mean $\pm$ sample SD over seeds 42--44, in percent; addition uses teacher-forced tokens, other tasks use final answers or states.
Addition, parity, and $S_5$ use 2,000 steps; mod-5 uses 5,000.}
\label{tab:variant-validation}
\resizebox{\linewidth}{!}{\begin{tabular}{llrrrrrr}
\toprule
Family & Layers & Addition & Parity & $S_5$ swaps & $S_5$ standard & Mod-5 flat & Mod-5 brackets \\
\midrule
RLT-1 & 4 + 4 & $100.00 \pm 0.00$ & $100.00 \pm 0.00$ & $100.00 \pm 0.00$ & $0.78 \pm 0.68$ & $46.48 \pm 46.35$ & $97.31 \pm 1.14$ \\
RLT-1 & 5 + 3 & $100.00 \pm 0.00$ & $100.00 \pm 0.00$ & $100.00 \pm 0.00$ & $2.47 \pm 1.26$ & $99.13 \pm 1.28$ & $98.26 \pm 0.49$ \\
RLT-1 & 6 + 2 & $100.00 \pm 0.00$ & $100.00 \pm 0.00$ & $100.00 \pm 0.00$ & $2.21 \pm 1.48$ & $99.65 \pm 0.33$ & $98.00 \pm 1.48$ \\
RLT-1 & 7 + 1 & $100.00 \pm 0.00$ & $100.00 \pm 0.00$ & $99.35 \pm 0.81$ & $2.08 \pm 0.98$ & $98.39 \pm 2.78$ & $97.14 \pm 1.16$ \\
RLT-1 & 8 + 0 & $100.00 \pm 0.00$ & $98.83 \pm 1.92$ & $99.61 \pm 0.68$ & $1.82 \pm 1.19$ & $81.12 \pm 30.90$ & $96.57 \pm 0.85$ \\
\midrule
RLT-0 & 4 + 4 & $100.00 \pm 0.00$ & $66.45 \pm 18.12$ & $98.96 \pm 0.23$ & $0.65 \pm 0.81$ & $93.49 \pm 1.92$ & $94.27 \pm 2.00$ \\
RLT-0 & 5 + 3 & $100.00 \pm 0.00$ & $69.66 \pm 10.15$ & $98.05 \pm 0.68$ & $0.39 \pm 0.39$ & $73.52 \pm 24.88$ & $92.53 \pm 2.76$ \\
RLT-0 & 6 + 2 & $100.00 \pm 0.00$ & $58.51 \pm 1.50$ & $98.57 \pm 0.60$ & $1.56 \pm 1.17$ & $63.72 \pm 38.67$ & $95.88 \pm 1.64$ \\
RLT-0 & 7 + 1 & $100.00 \pm 0.00$ & $83.12 \pm 25.82$ & $99.09 \pm 0.81$ & $1.43 \pm 0.60$ & $97.48 \pm 1.01$ & $97.74 \pm 0.53$ \\
RLT-0 & 8 + 0 & $100.00 \pm 0.00$ & $85.68 \pm 18.25$ & $98.96 \pm 0.81$ & $0.91 \pm 0.60$ & $97.44 \pm 1.73$ & $97.35 \pm 0.67$ \\
\midrule
RLT-2 chunk4 & 4 + 4 & $100.00 \pm 0.00$ & $99.87 \pm 0.23$ & $100.00 \pm 0.00$ & $2.47 \pm 1.19$ & $73.87 \pm 44.47$ & $96.40 \pm 1.09$ \\
RLT-2 chunk4 & 5 + 3 & $100.00 \pm 0.00$ & $83.12 \pm 29.13$ & $99.48 \pm 0.23$ & $1.04 \pm 0.60$ & $71.74 \pm 45.58$ & $98.00 \pm 0.60$ \\
RLT-2 chunk4 & 6 + 2 & $100.00 \pm 0.00$ & $95.49 \pm 7.82$ & $99.48 \pm 0.23$ & $1.95 \pm 1.70$ & $98.91 \pm 1.33$ & $97.09 \pm 1.66$ \\
RLT-2 chunk4 & 7 + 1 & $100.00 \pm 0.00$ & $78.43 \pm 16.65$ & $98.83 \pm 1.70$ & $0.65 \pm 0.23$ & $97.70 \pm 2.56$ & $97.01 \pm 1.38$ \\
RLT-2 chunk4 & 8 + 0 & $100.00 \pm 0.00$ & $76.52 \pm 23.05$ & $99.48 \pm 0.90$ & $0.78 \pm 0.78$ & $98.78 \pm 0.08$ & $96.57 \pm 0.42$ \\
\midrule
GPT & 8 & $100.00 \pm 0.00$ & $94.84 \pm 3.43$ & $99.09 \pm 0.23$ & $0.52 \pm 0.23$ & $99.13 \pm 0.78$ & $97.01 \pm 1.04$ \\
\bottomrule
\end{tabular}
}
\end{table}

\begin{figure}[p]
\centering
\includegraphics[width=\linewidth]{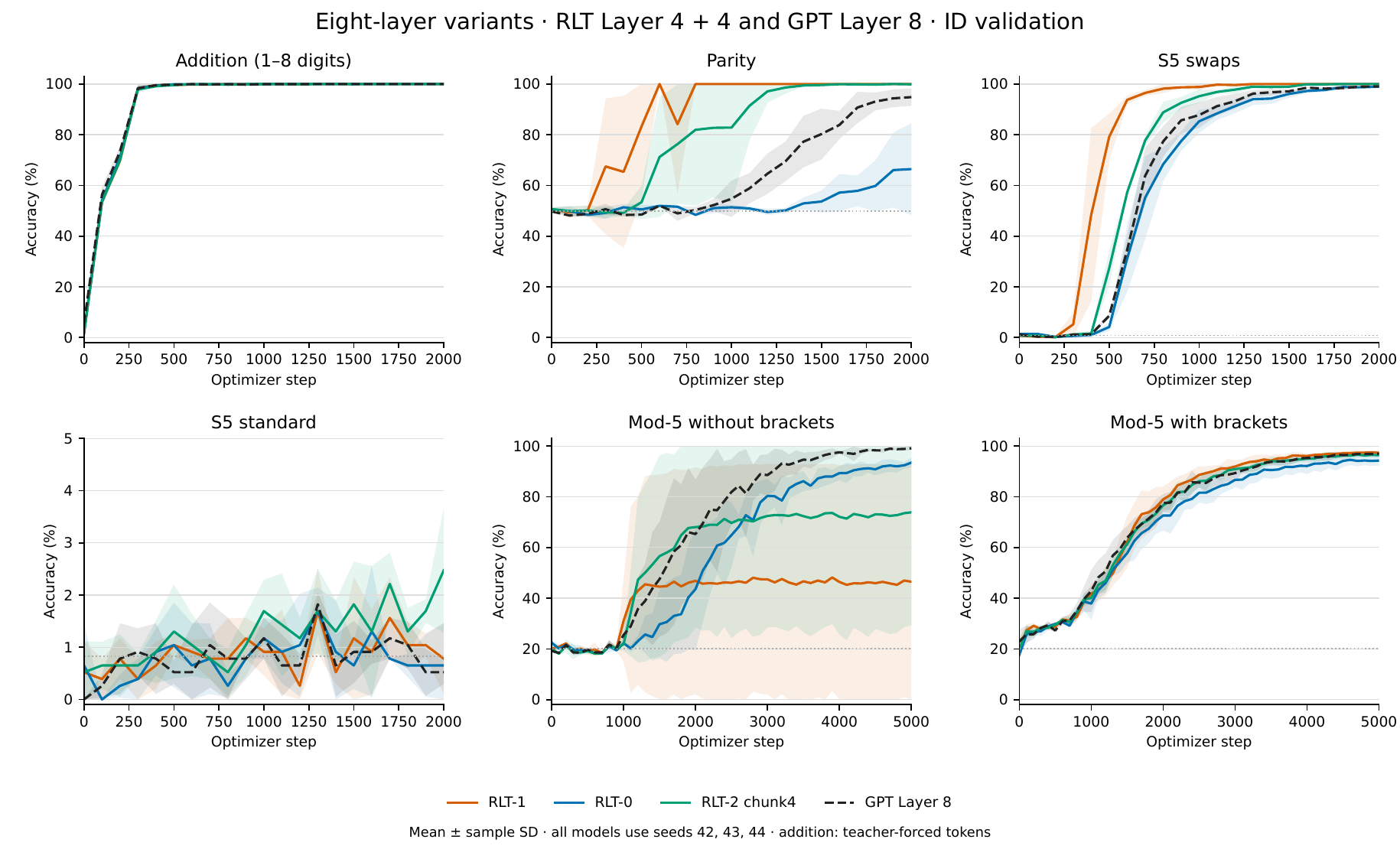}
\caption{\textbf{Feedback variants at $4+4$: validation during training.}
Lines and bands show mean $\pm$ sample SD over seeds 42--44 for every model.
Addition measures teacher-forced answer tokens; other tasks measure final answers or states.
Bands are clipped to 0--100\%; standard $S_5$ uses a narrower vertical scale.}
\label{fig:variant-validation}
\end{figure}

Section~\ref{sec:experiments} discusses the parity and swaps-$S_5$ results; Table~\ref{tab:variant-generalization} gives the $4+4$ values at selected lengths, including both mod-5 tasks.
All 30 RLT-0 and RLT-2 addition runs reach 100\% teacher-forced token accuracy on mixed 1--8-digit validation at step 2,000.
Their length generalization beyond eight digits was not evaluated; the 9--32-digit figures compare RLT-1 with the Transformer.

\begin{table}[t]
\centering
\caption{\textbf{Best-checkpoint test accuracy at $4+4$.}
Final-answer/state accuracy (\%), mean $\pm$ sample SD over three seeds, on the same 1,024 examples per length.
Standard $S_5$ at length 32 is in distribution; the other rows are out of distribution.}
\label{tab:variant-generalization}
\small
\begin{tabular}{lrrrr}
\toprule
Task / length & RLT-1 & RLT-0 & RLT-2 chunk4 & GPT Layer 8 \\
\midrule
Parity / 64 & $100.00 \pm 0.00$ & $50.23 \pm 3.80$ & $98.99 \pm 1.66$ & $48.47 \pm 1.18$ \\
$S_5$ swaps / 48 & $100.00 \pm 0.00$ & $7.26 \pm 3.17$ & $68.29 \pm 8.75$ & $22.10 \pm 2.76$ \\
$S_5$ standard / 32 & $1.53 \pm 0.46$ & $1.20 \pm 0.72$ & $1.37 \pm 0.87$ & $0.91 \pm 0.15$ \\
Mod-5 flat / 63 & $44.43 \pm 43.81$ & $21.84 \pm 2.17$ & $50.78 \pm 38.02$ & $33.20 \pm 2.33$ \\
Mod-5 brackets / 64 & $64.10 \pm 3.49$ & $39.45 \pm 0.61$ & $62.76 \pm 4.42$ & $46.71 \pm 1.21$ \\
\bottomrule
\end{tabular}

\end{table}

\begin{figure}[p]
\centering
\includegraphics[width=\linewidth]{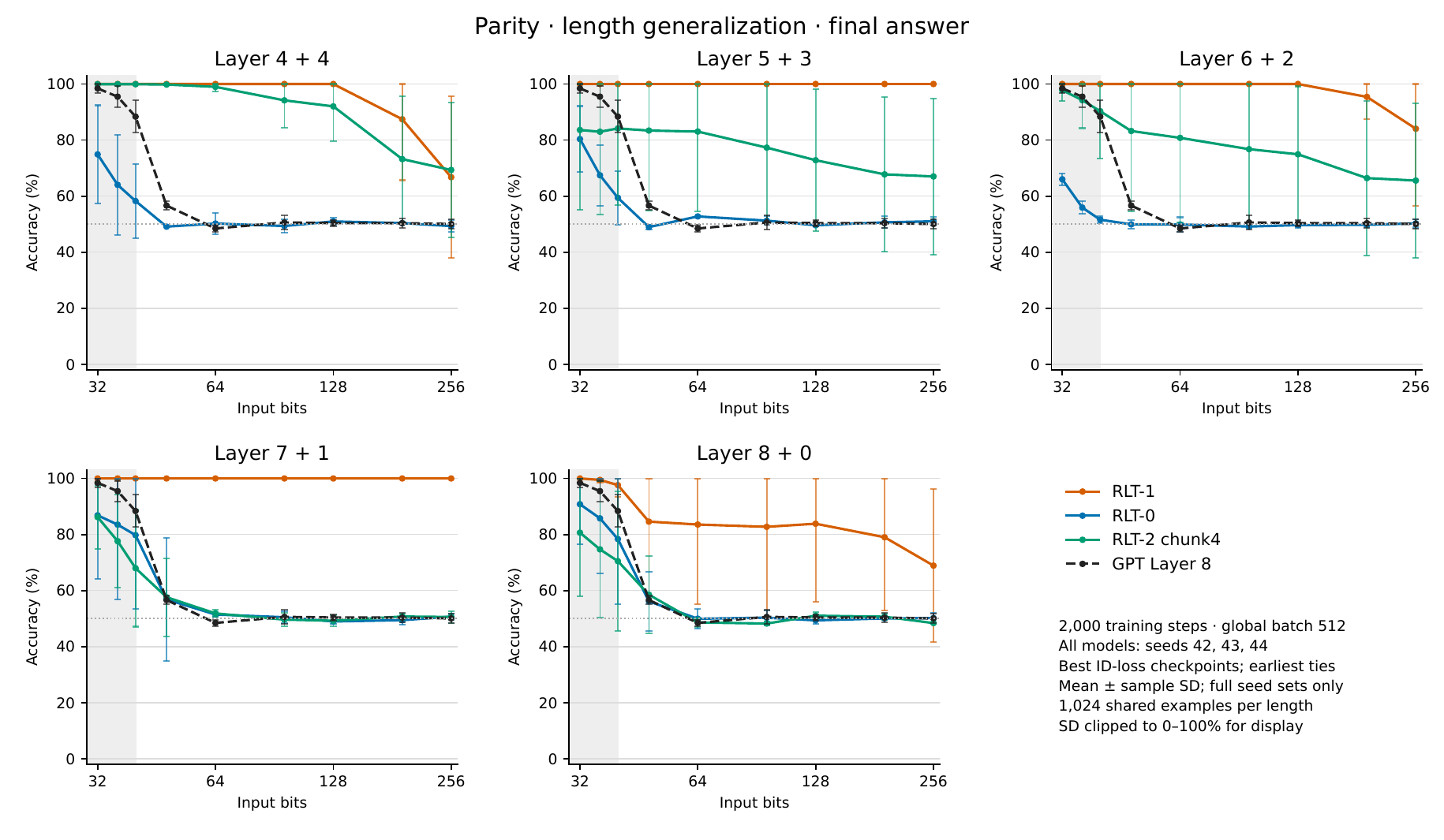}
\caption{\textbf{Parity length generalization across feedback variants and depth splits.}
All models use three seeds, 2,000 training steps, and their best ID-loss checkpoint.
Error bars show sample SD, clipped to 0--100\% for display; gray shading marks training lengths and the dotted line marks chance.}
\label{fig:variant-parity}
\end{figure}

\begin{figure}[p]
\centering
\includegraphics[width=\linewidth]{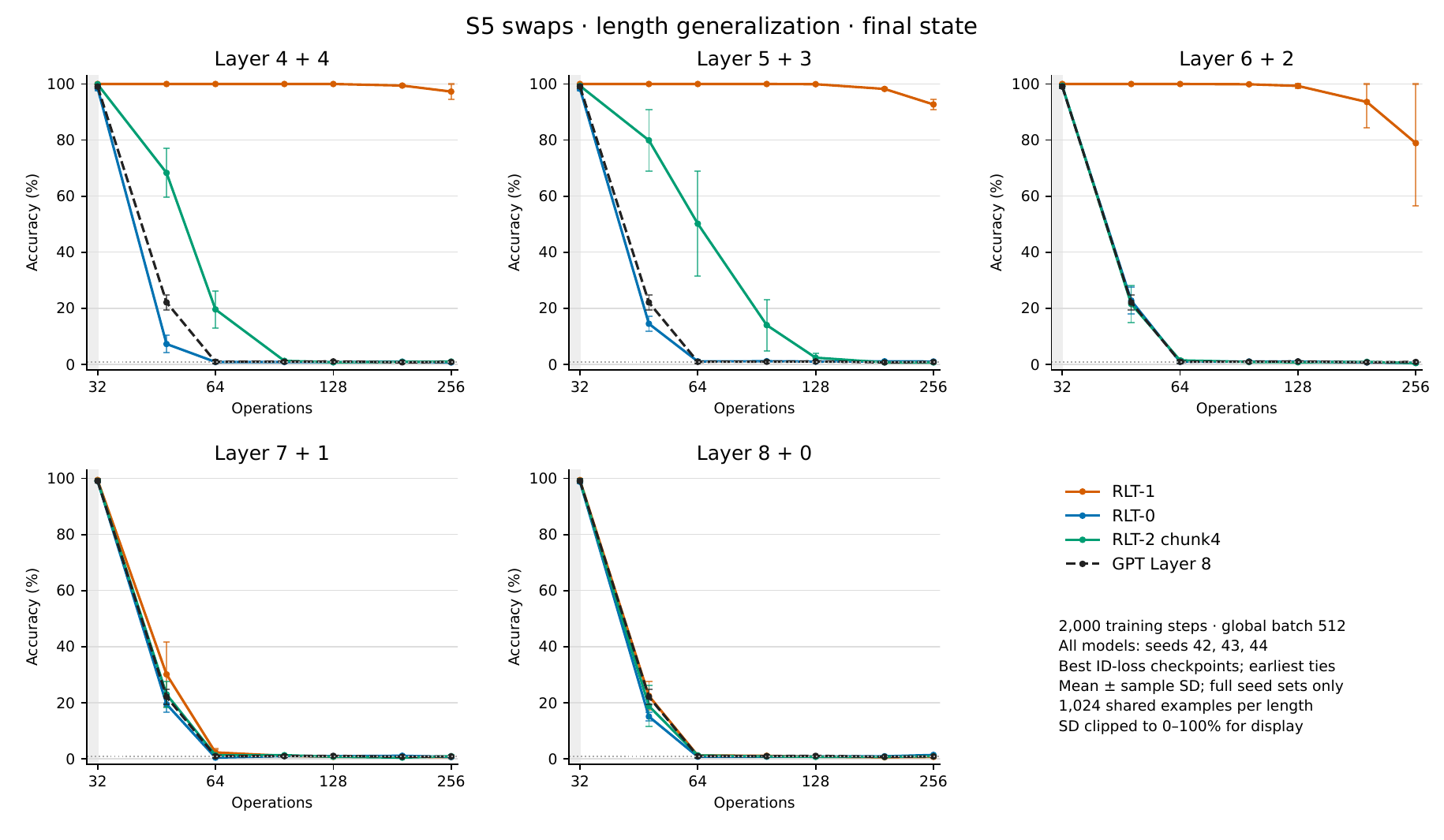}
\caption{\textbf{Swaps-$S_5$ length generalization across feedback variants.}
Final-state accuracy, mean $\pm$ sample SD over three seeds, using each run's best ID-loss checkpoint from 2,000 training steps.
Error bars are clipped to 0--100\% for display.}
\label{fig:variant-s5-swaps}
\end{figure}

\begin{figure}[p]
\centering
\includegraphics[width=\linewidth]{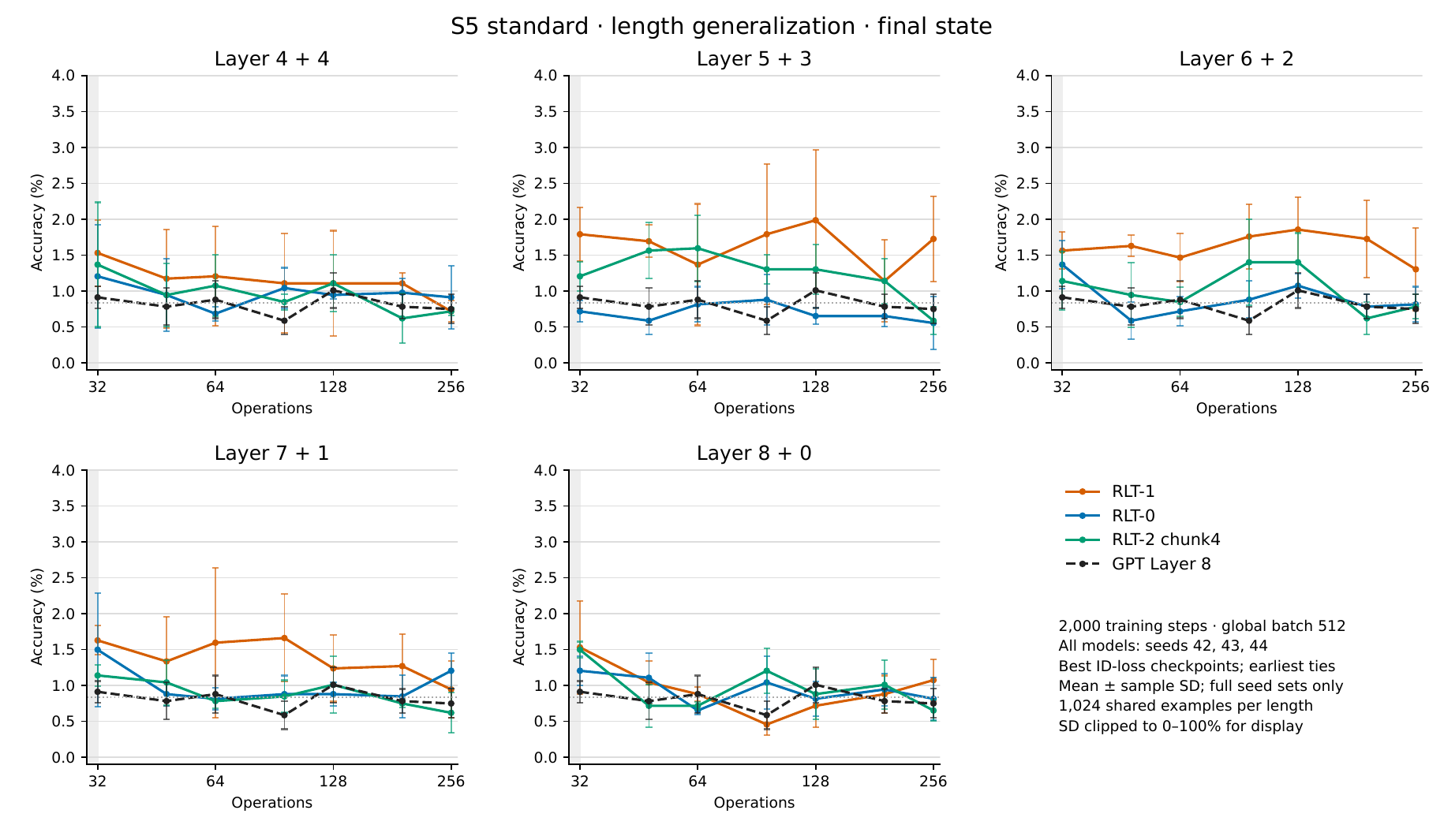}
\caption{\textbf{Standard $S_5$ length generalization across feedback variants.}
Final-state accuracy, mean $\pm$ sample SD over three seeds; all models train for 2,000 steps.
The vertical axis is expanded near chance ($1/120$).}
\label{fig:variant-s5-standard}
\end{figure}

\clearpage
\subsection{Mod-5 feedback variants at 5,000 steps}
\label{app:mod5-variants}
All 96 mod-5 runs train for 5,000 updates.
RLT-0, RLT-1, and RLT-2 chunk4 each contribute 30 runs across two tasks, five splits, and seeds 42--44.
The Transformer contributes six runs, with the same three seeds on both tasks.

Flat mod-5 remains sensitive to initialization after 5,000 steps.
For chunk4 $4+4$, best-checkpoint ID accuracy at length 33 is 100.00\%, 20.21\%, and 98.73\% for seeds 42, 43, and 44.
At length 63, its three-seed mean is $50.78\pm38.02\%$, compared with $44.43\pm43.81\%$ for RLT-1, $21.84\pm2.17\%$ for RLT-0, and $33.20\pm2.33\%$ for GPT Layer 8.

On bracketed mod-5 at length 64, RLT-1 Layer $4 + 4$ reaches $64.10\pm3.49\%$, chunk4 $62.76\pm4.42\%$, RLT-0 $39.45\pm0.61\%$, and GPT Layer 8 $46.71\pm1.21\%$.

In the seed-42 CPU timing comparison at $4+4$, chunk4 training steps are 2.27 times as fast as RLT-1, and RLT-0 steps 4.17--4.30 times as fast, across the two mod-5 tasks.
These single-seed means cover steps 1,001--2,000 on the shared cluster and exclude validation, checkpoint writes, and logging.
Frozen scores, per-seed records, provenance, and figure regeneration instructions are in \texttt{data/variants-20260925/}.

\begin{figure}[p]
\centering
\includegraphics[width=\linewidth]{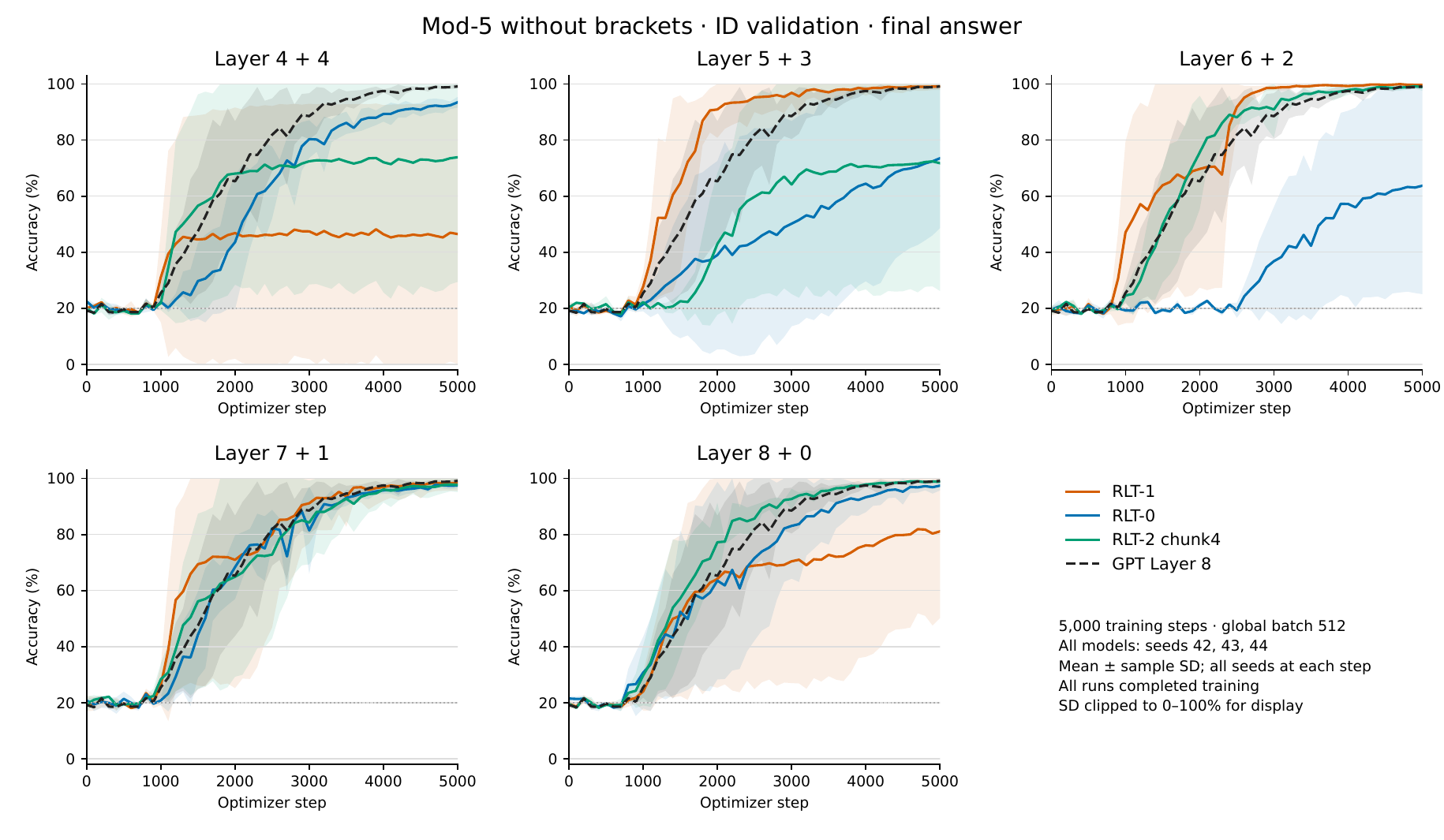}
\caption{\textbf{Mod-5 without brackets: ID validation.}
Mean $\pm$ sample SD over seeds 42--44 for every model; all runs train for 5,000 steps.
Bands are clipped to 0--100\%.}
\label{fig:mod5-no-brackets-validation}
\end{figure}

\begin{figure}[p]
\centering
\includegraphics[width=\linewidth]{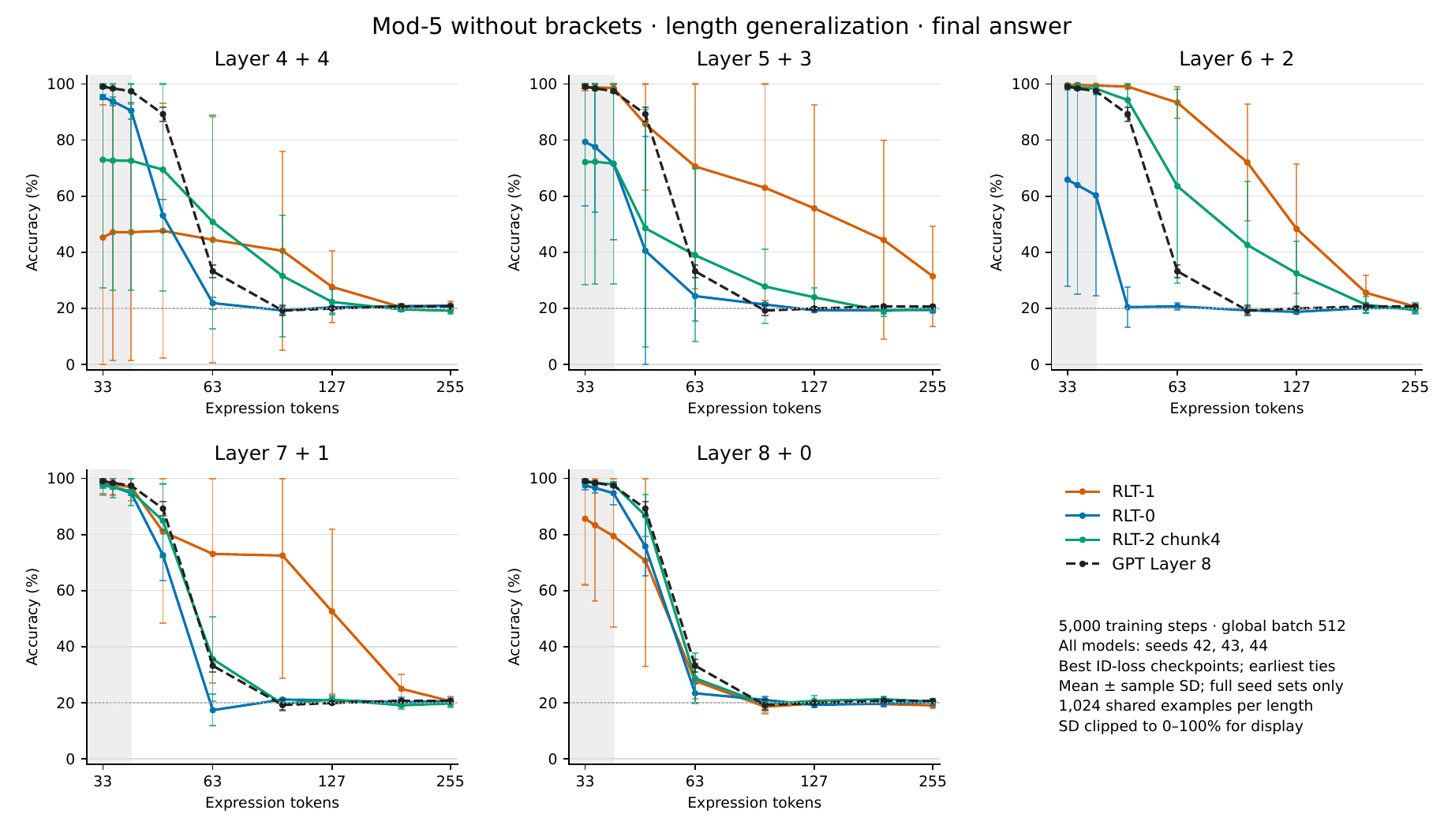}
\caption{\textbf{Mod-5 without brackets: length generalization.}
5,000-step runs at best ID-loss checkpoints; every curve shows mean $\pm$ sample SD over seeds 42--44.
Error bars are clipped to 0--100\% for display.}
\label{fig:mod5-no-brackets-generalization}
\end{figure}

\begin{figure}[p]
\centering
\includegraphics[width=\linewidth]{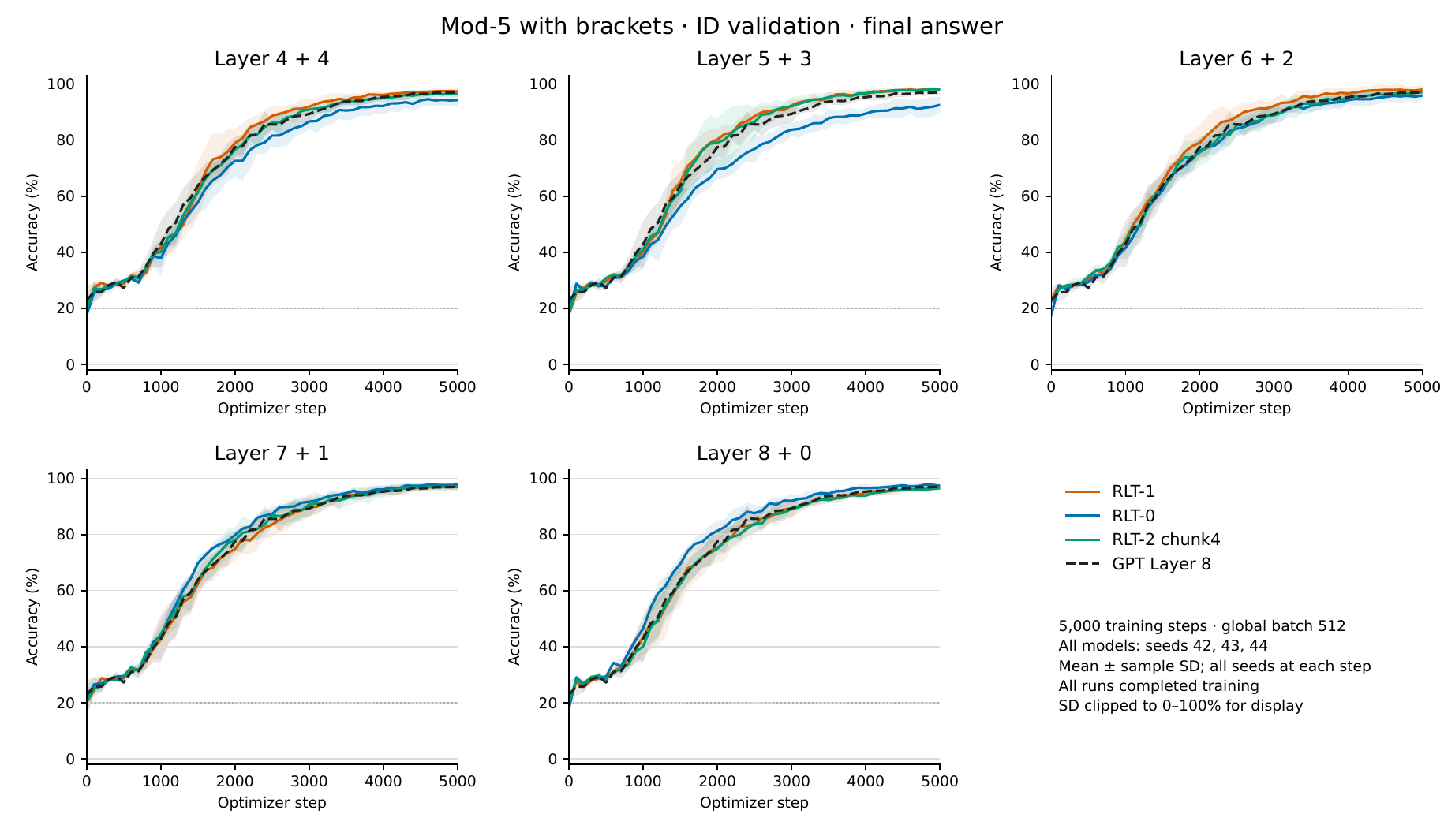}
\caption{\textbf{Mod-5 with brackets: ID validation.}
Mean $\pm$ sample SD over seeds 42--44 for every model; all runs train for 5,000 steps.
Bands are clipped to 0--100\%.}
\label{fig:mod5-with-brackets-validation}
\end{figure}

\begin{figure}[p]
\centering
\includegraphics[width=\linewidth]{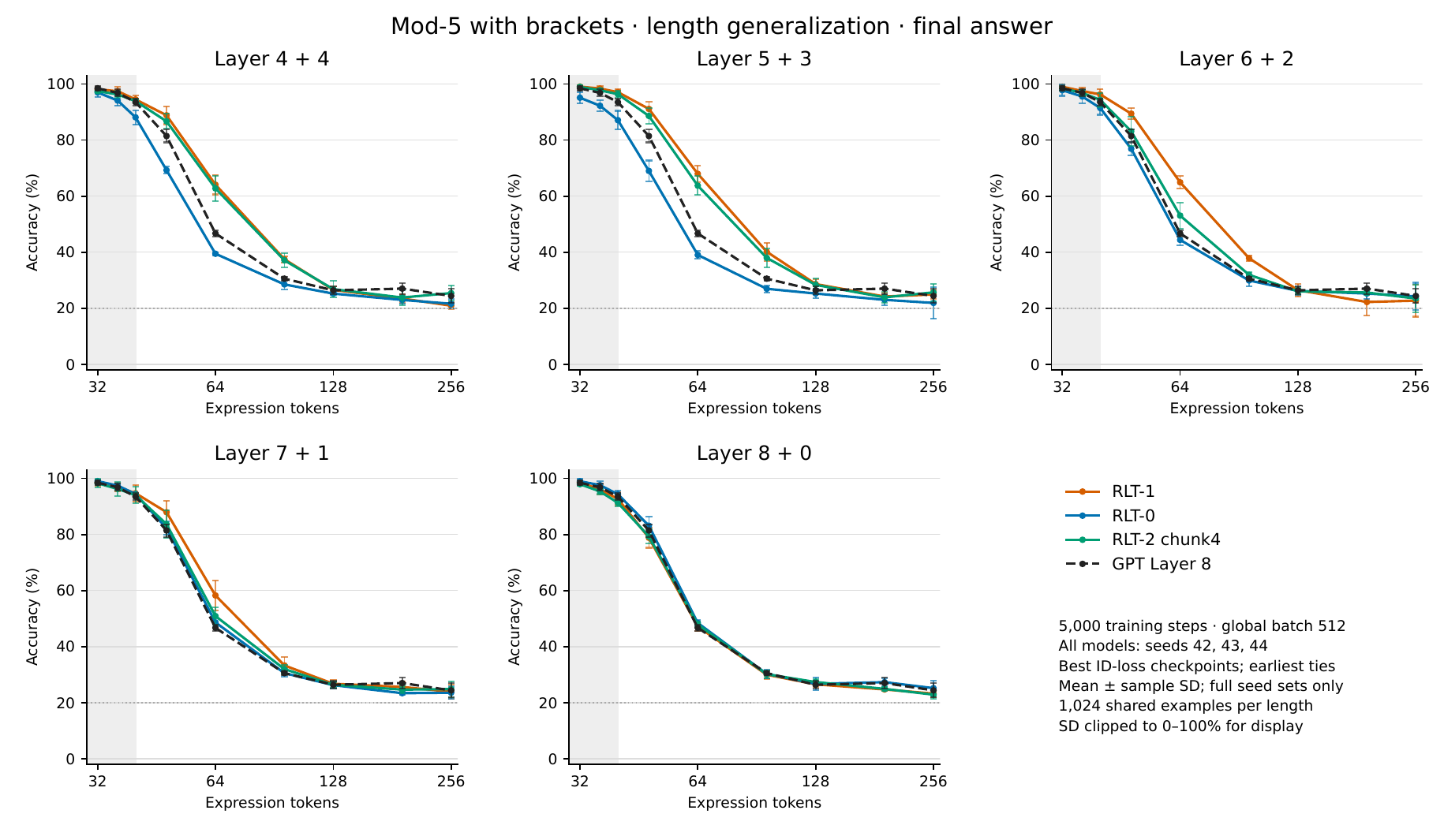}
\caption{\textbf{Mod-5 with brackets: length generalization.}
5,000-step runs at best ID-loss checkpoints; every curve shows mean $\pm$ sample SD over seeds 42--44.
Error bars are clipped to 0--100\% for display.}
\label{fig:mod5-with-brackets-generalization}
\end{figure}

\clearpage
\subsection{Addition feedback scale at a fixed learning-rate schedule}
\label{app:addition-feedback}
We vary only the RLT-1 feedback scale $\alpha$, keeping the initialization, architecture, training data, and optimizer schedule fixed within each split and seed.
All runs train on mixed 1--8-digit addition for 2,000 updates, with global batch 512, microbatch 32, and TBPTT128.
The learning rate warms up for 200 steps to $10^{-4}$ and decays to $5\times10^{-6}$.
We test $\alpha=0.03$ at every split and $\alpha=0.01$ at every split except $7+1$, with seeds 42, 43, and 44.
The comparison includes the $\alpha=0.1$ RLT-1 runs and GPT Layer 8 at the same seeds, for 45 models in total.

We select each model's checkpoint by minimum ID validation loss, taking the earliest step on exact ties; the $7+1$ run with $\alpha=0.1$ and seed 44 selects step 1,900, and the other 44 models select step 2,000.
Figure~\ref{fig:addition-feedback} and Table~\ref{tab:addition-feedback} report teacher-forced answer-token accuracy on 256 shared examples at each of seven widths from 9 to 32 digits, as mean $\pm$ sample SD over the three seeds.
Counts include answer formatting and EOS, and exclude prompts and padding.

No tested feedback reduction improves accuracy at every width for any split.
Across the 63 comparisons with $\alpha=0.1$ at the same split and width, 37 means rise and 26 fall, by $-3.86$ to $+5.60$ percentage points; only three differences exceed both sample SDs.
For $8+0$, $\alpha=0.01$ raises the 12-digit mean from $29.49\pm4.38\%$ to $32.21\pm5.13\%$, while the 16-digit mean moves from $24.92\pm0.39\%$ to $24.66\pm4.88\%$.
At 32 digits, RLT-1 means range from 14.44\% to 15.91\%, compared with $16.84\pm1.45\%$ for GPT Layer 8.

\begin{table}[htbp]
\centering
\caption{\textbf{Addition feedback-scale ablation, seeds 42--44.}
Teacher-forced answer-token accuracy (\%) at selected widths, mean $\pm$ sample SD over three seeds, after 2,000 training steps with the same learning-rate schedule.
Each width uses 256 shared examples; all seven widths are available in the data repository.}
\label{tab:addition-feedback}
{\small\begin{tabular}{lrrrrr}
\toprule
Model & $\alpha$ & 9 digits & 12 digits & 16 digits & 32 digits \\
\midrule
RLT-1 Layer 4 + 4 & 0.01 & $47.25 \pm 4.04$ & $33.11 \pm 4.56$ & $25.93 \pm 1.88$ & $14.44 \pm 0.17$ \\
RLT-1 Layer 4 + 4 & 0.03 & $48.75 \pm 5.34$ & $32.00 \pm 2.07$ & $26.22 \pm 0.74$ & $14.74 \pm 1.25$ \\
RLT-1 Layer 4 + 4 & 0.1 & $46.77 \pm 1.34$ & $30.72 \pm 0.58$ & $25.28 \pm 1.48$ & $15.30 \pm 0.44$ \\
RLT-1 Layer 5 + 3 & 0.01 & $57.96 \pm 0.67$ & $32.36 \pm 2.93$ & $23.01 \pm 5.22$ & $15.29 \pm 2.19$ \\
RLT-1 Layer 5 + 3 & 0.03 & $56.30 \pm 2.41$ & $32.60 \pm 5.61$ & $23.54 \pm 3.75$ & $14.66 \pm 1.49$ \\
RLT-1 Layer 5 + 3 & 0.1 & $58.51 \pm 8.12$ & $34.97 \pm 4.54$ & $24.20 \pm 5.26$ & $14.89 \pm 2.27$ \\
RLT-1 Layer 6 + 2 & 0.01 & $56.52 \pm 9.87$ & $30.97 \pm 4.64$ & $26.35 \pm 2.98$ & $15.21 \pm 1.46$ \\
RLT-1 Layer 6 + 2 & 0.03 & $63.85 \pm 10.15$ & $30.39 \pm 2.85$ & $27.18 \pm 4.26$ & $14.86 \pm 0.65$ \\
RLT-1 Layer 6 + 2 & 0.1 & $58.25 \pm 10.29$ & $29.22 \pm 2.28$ & $23.92 \pm 1.58$ & $15.74 \pm 1.58$ \\
RLT-1 Layer 7 + 1 & 0.03 & $66.43 \pm 2.53$ & $30.10 \pm 2.29$ & $26.27 \pm 1.42$ & $15.91 \pm 0.63$ \\
RLT-1 Layer 7 + 1 & 0.1 & $68.05 \pm 3.64$ & $29.80 \pm 4.03$ & $26.45 \pm 4.31$ & $15.59 \pm 1.57$ \\
RLT-1 Layer 8 + 0 & 0.01 & $62.20 \pm 9.85$ & $32.21 \pm 5.13$ & $24.66 \pm 4.88$ & $15.45 \pm 0.66$ \\
RLT-1 Layer 8 + 0 & 0.03 & $59.18 \pm 4.47$ & $28.81 \pm 3.57$ & $25.92 \pm 1.10$ & $14.90 \pm 0.54$ \\
RLT-1 Layer 8 + 0 & 0.1 & $62.13 \pm 7.75$ & $29.49 \pm 4.38$ & $24.92 \pm 0.39$ & $15.31 \pm 0.93$ \\
GPT Layer 8 & -- & $58.61 \pm 2.44$ & $26.84 \pm 3.69$ & $26.22 \pm 1.54$ & $16.84 \pm 1.45$ \\
\bottomrule
\end{tabular}
}
\end{table}

\begin{figure}[p]
\centering
\includegraphics[width=\linewidth]{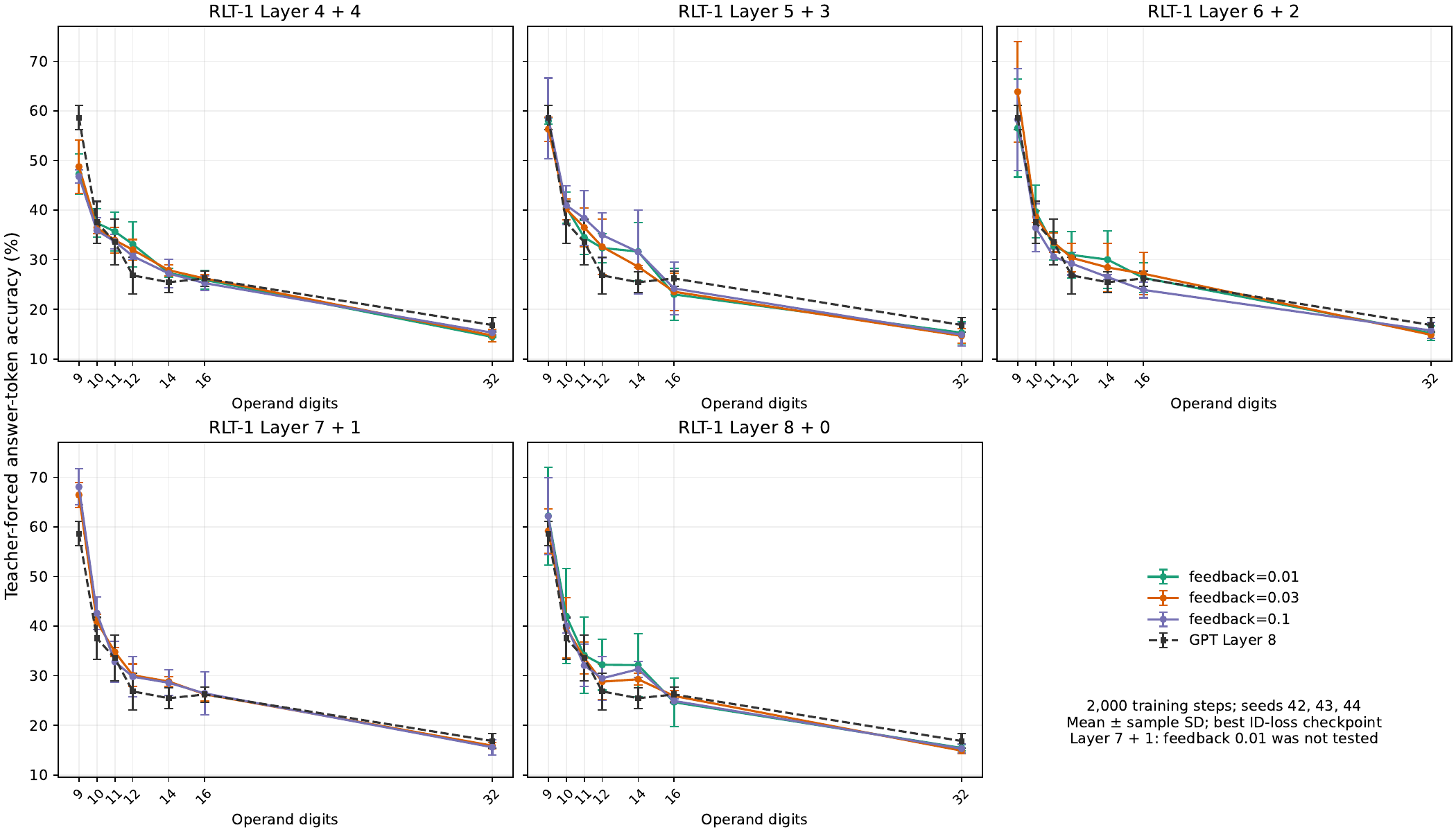}
\caption{\textbf{Addition generalization as feedback scale changes.}
Each panel compares the tested feedback scales with GPT Layer 8 at one RLT-1 split; $\alpha=0.01$ was not tested at $7+1$.
Points and error bars show mean $\pm$ sample SD over seeds 42, 43, and 44; all models use 2,000 training steps and the same learning-rate schedule.
Accuracy measures teacher-forced answer tokens on the same 256 examples at each width.}
\label{fig:addition-feedback}
\end{figure}

\end{document}